\DeclareMathOperator*{\argmin}{argmin}
\newtheorem{theorem}{Theorem}
\newtheorem{definition}{Definition}
\newmdtheoremenv{theo}{Theorem}
\newsavebox{\savepar}
\title{Locally Invariant Explanations: Towards Stable and Unidirectional Explanations through Local Invariant Learning}
\author{%
  Amit Dhurandhar\thanks{Equal contribution} \\
  IBM Research\\
  Yorktown Heights, USA \\
  \texttt{adhuran@us.ibm.com} \\
  \And
  Karthikeyan Natesan Ramamurthy$^*$ \\
  IBM Research\\
  Yorktown Heights, USA \\
  \texttt{knatesa@us.ibm.com} \\
  \And
  Kartik Ahuja\\
  Mila\\
  Montreal, Canada \\
  \texttt{kartik.ahuja@mila.quebec} \\
  \And
  Vijay Arya\\
  IBM Research\\
  Bangalore, India\\
  \texttt{vijay.arya@in.ibm.com} \\
  % Affiliation \\
  % Address \\
  % \texttt{email} \\
  % \AND
  % Coauthor \\
  % Affiliation \\
  % Address \\
  % \texttt{email} \\
  % \And
  % Coauthor \\
  % Affiliation \\
  % Address \\
  % \texttt{email} \\
  % \And
  % Coauthor \\
  % Affiliation \\
  % Address \\
  % \texttt{email} \\
}
\begin{document}

\maketitle

\begin{abstract}
Locally interpretable model agnostic explanations (LIME) method is one of the most popular methods used to explain black-box models at a per example level. Although many variants have been proposed, few provide a simple way to produce high fidelity explanations that are also stable and intuitive. In this work, we provide a novel perspective by proposing a model agnostic local explanation method inspired by the invariant risk minimization (IRM) principle -- originally proposed for (global) out-of-distribution generalization -- to provide such high fidelity explanations that are also stable and unidirectional across nearby examples. Our method is based on a game theoretic formulation where we theoretically show that our approach has a strong tendency to eliminate features where the gradient of the black-box function abruptly changes sign in the locality of the example we want to explain, while in other cases it is more careful and will choose a more conservative (feature) attribution, a behavior which can be highly desirable for recourse. Empirically, we show on tabular, image and text data that the quality of our explanations with neighborhoods formed using random perturbations are much better than LIME and in some cases even comparable to other methods that use realistic neighbors sampled from the data manifold. This is desirable given that learning a manifold to either create realistic neighbors or to project explanations is typically expensive or may even be impossible. Moreover, our algorithm is simple and efficient to train, and can ascertain stable input features for local decisions of a black-box without access to side information such as a (partial) causal graph as has been seen in some recent works.
\end{abstract}

\section{Introduction}

Deployment and usage of neural black-box models has significantly grown in industry over the last few years creating the need for new tools to help users understand and trust models \citep{xai}. Even well-studied application domains such as image recognition require some form of prediction understanding in order for the user to incorporate the model into important decisions \citep{saliency,LRPTOOLBOX}. An example of this could be a doctor who is advised by a model of a positive cancer diagnosis based on an image scan. Since the doctor holds responsibility for the final diagnosis, the model must provide sufficient reason for its prediction. Even new text categorization tasks \citep{inputreduction} are becoming important with the growing need for social media companies to provide better monitoring of public content. Twitter was monitoring tweets related to COVID-19 in order to label tweets containing misleading information, disputed claims, or unverified claims \citep{twitter_covid19}. Laws will likely emerge requiring explanations for why red flags were or were not raised in many examples. In fact, the General Data Protection and Regulation (GDPR) \citep{gdpr} act passed in Europe already requires automated systems that make decisions affecting humans to be able to explain them. Given this acute need, a number of methods have been proposed to explain local decisions (i.e. example specific decisions) of classifiers \citep{lime,unifiedPI,saliency,LRPTOOLBOX,CEM}. Locally interpretable model-agnostic explanations (LIME) is arguably the most well-known local explanation method that requires only query (or black-box) access to the model. Although LIME is a popular method, it is known to be sensitive to certain design choices such as i) (random) sampling to create the \textit{(perturbation) neighborhood}\footnote{By \textit{perturbation neighborhood} or simply \textit{neighborhood}, we mean neighborhoods generated for local explanations. By \textit{exemplar neighborhood}, we mean closest in dataset examples.}, ii) the size of this neighborhood (number of samples) and iii) (local) fitting procedure to learn the explanation model \citep{molnarbook,stabzhang}. The first, most serious issue could lead to nearby examples having drastically different explanations making effective recourse a challenge. One possible mitigation is to increase the neighborhood size but one cannot arbitrarily do so as it not only leads to higher computational cost, but also in today's cloud computing-driven world it could have direct monetary implications where every query to a black-box model has an associated cost \citep{macem}.
\begin{wrapfigure}{r}{0.4\textwidth}
\vspace{-3mm}
\includegraphics[width=0.35\textwidth]{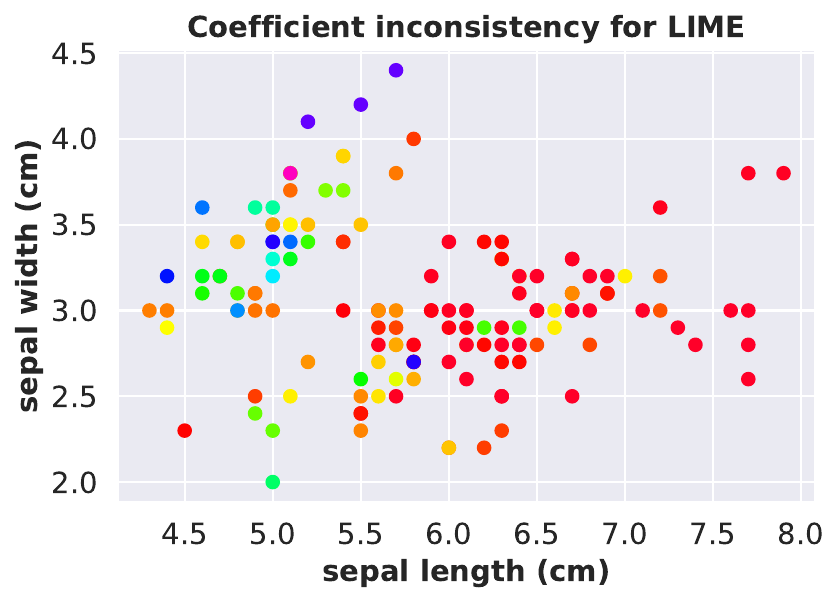} \\
 \includegraphics[width=0.35\textwidth]{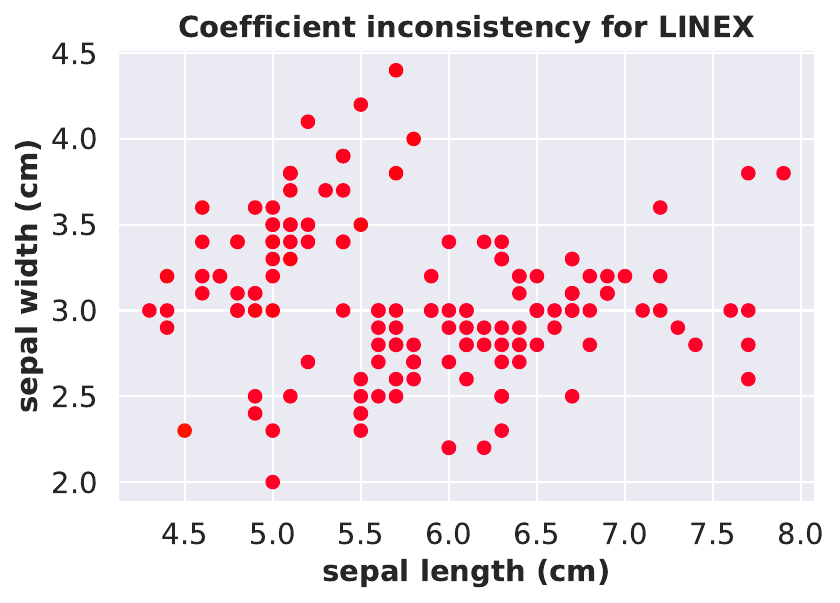} 
 \caption{Above we visualize for the IRIS dataset the Coefficient Inconsistency (CI) (see Section \ref{sec:exp} for exact definition and setup details)  between the explanation (top two features) for an example and its nearest neighbor in the dataset. Each circle denotes an example and a \emph{rainbow} colormap depicts the degree of inconsistency w.r.t. its nearest neighbor where red implies least inconsistency, while violet implies the most. As can be seen LINEX explanations are much more consistent than LIME's. 
 }
 \label{fig:intro}
\vspace{-1.5cm}
\end{wrapfigure}
There have been variants suggested to overcome these limitations \citep{melime, cdexp, maple,ans} primarily through mechanisms that create realistic neighborhoods or through adversarial training \citep{hima}, however, their efficacy is restricted to certain settings and modalities based on their assumptions and training strategies.

In this paper we introduce a new method called Locally INvariant EXplanations (LINEX) inspired by the invariant risk minimization (IRM) principle \citep{arjovsky2019invariant}, that produces explanations in the form of feature attributions that are robust to  neighborhood sampling and can recover faithful (i.e. mimic black-box behavior), stable (i.e. similar for closeby examples) and unidirectional (i.e. same sign attributions a.k.a. feature importances) for closeby examples, see section \ref{sec:desp}) explanations across tabular, image, and text modalities. In particular, we show that our method performs better than the competitors for random as well as realistic neighborhood generation, where in some cases even with the prior strategy our explanation quality is close to methods that employ the latter. %This thus mitigates the need for creating realistic neighbors which can be computationally expensive or extremely hard in practice. 
Qualitatively, our method highlights (local) features as important that in the particular locality i) have consistently high gradient with respect to (w.r.t.) the black-box function and ii) where the gradient does not change significantly, especially in sign. Such stable behavior for LINEX is visualized in Figure \ref{fig:intro}, where we get similar explanations for nearby examples in the IRIS dataset. The (in)fidelity of LINEX is still similar to LIME (see Table \ref{tab:results}), but of course our explanations are much more stable. %These are highly desirable properties to have especially for recourse \citep{algorec}, where recommending features to act upon to a human that change abruptly can be high risk.
%\vspace{-0.2cm}
\section{Related Work}
\label{sec:relw}
%\vspace{-0.2cm}

Posthoc explanations can typically be partitioned into two broad categories global and local. Global explainability avers to trying to understand a black-box model at a holistic level where the typical tact is knowledge transfer \citep{distill,ProfWeight,Sratio} where (soft/hard) labels of the black-box model are used to train an interpretable model such as a decision tree or rule list \citep{Rudin2019}. Local explanations on the other hand avers to understanding individual decisions. These explanations are typically in two forms, either exemplar based or feature based. For exemplar based as the name suggests similar but diverse examples \citep{l2c,proto} are provided as explanations for the input in question. While for feature based \citep{lime,unifiedPI,CEM,LRPTOOLBOX,baylime}, which is the focus of this work, important features are returned as being important for the decision made for the input. There are some methods that do both \citep{maple}. Moreover, there are methods which provide explanations that are local, global as well as at a group level \citep{mame}. All of these methods though may not still provide stable and robust local feature based explanations which can be desirable in practice \citep{fragileint}.

Given this there have been more recent works that try to learn either robust or even causal explanations. In \citep{hima} the authors try to learn robust and stable local explanations relative to distribution shifts and adversarial attacks. However, the distribution shifts they consider are linear shifts and adversarial training is performed which can be slow and sometimes unstable \citep{limitsadvtr}. Moreover, the method seems to be applicable primarily to tabular data. There are also works \citep{dom1,dom2} which try to robustify gradient based explanations assuming white box access to the model. Works on causal explanations \citep{shapasym,shapcausal} mainly modify SHAP and assume access to a partial causal graph. Some others \citep{CMA} assume white-box access. In this work we do not assume availability of such additional information. There are also works which show that creating realistic  neighborhoods by learning the data manifold for LIME \citep{melime,cdexp} can lead to better quality explanations, where in a particular work \citep{tsp} it is suggested that projecting explanations themselves on to the manifold can also make them more robust. The need for stability in a exemplar neighborhood for LIME like methods has been highlighted in \citep{stabzhang}, with the general desire for stable explanations being also expressed in \citep{yeh2019fidelity,stabvisani}. Furthermore, it was recently surmised through expert and crowd worker user studies that stability is a key factor when it comes to assessing capability of a model or when learning a new domain \citep{hcomp}.

Given that our approach is inspired from IRM we now describe, how it is novel w.r.t. to it. 
It is important to realize that IRM approaches such as \cite{ahuja2020LRG,ahuja2020invariant} are designed for the out-of-distribution (OOD) generalization, which learn global models directly from the data. The main similarity of these works to ours is only that they also are game theory based approaches, but with the details being quite different. For one, they assume accessibility to environments which (ideally) correspond to different interventional distributions and with assumptions on the structural causal model derive results on how the true causal factors could be divulged. In our case, we propose ways to \emph{generate environments} as they are not given, and \emph{have $l_1$ and $l_{\infty}$ constraints on the entire and environment specific parts} of the model respectively, which is not the case with these prior works. As such those algorithms do not produce \emph{sparse unidirectional models} that are also consumable. Moreover, the \emph{perspective we provide is novel} in the context of local posthoc explanations where a priori it is not obvious that approaches from OOD generalization could be extended and adapted. %For instance highly impactful techniques such as DQN were built on top of known techniques from other related (and same) areas based on concepts such as target network, fitted q-learning, replay buffer but have led to significant advancements in deep RL research. 
Additionally, we propose a novel metric \emph{Unidirectionality} which is not part of any of these works, but as we have argued it is a desirable property for explanations.

\section{Preliminaries}
\label{sec:prelim}

\textbf{Invariant Risk Minimization:} Given a collection of training datasets $D = \{D_e\}_{e\in \mathcal{E}_{tr}}$ gathered from a set of environments $\mathcal{E}_{tr}$, where $D_e=\{\bm{x}^{i}_e, y^{i}_e\}_{i=1}^{n_e}$ is the dataset gathered from environment $e\in \mathcal{E}_{tr}$ and $n_e$ is the number of points in environment $e$. 
The feature value for data point $i$ is $\bm{x}_e^{i} \in \mathcal{X}$ and the corresponding label is $y_e^{i}\in \mathcal{Y}$, where $\mathcal{X} \subseteq \mathbb{R}^{d}$ and $\mathcal{Y}\subseteq \mathbb{R}$. Each point $(\bm{x}_e^{i},y_e^{i})$ in environment $e$ is drawn i.i.d from  a distribution $\mathbb{P}_e$.
Define a predictor $f:\mathcal{X} \rightarrow \mathbb{R}$.

The goal of IRM is to use these collection of datasets $D$ to construct a predictor $f$ that performs well across many unseen environments $\mathcal{E}_{all}$, where $ \mathcal{E}_{all} \supseteq \mathcal{E}_{tr}$. Define the risk achieved by $f$ in environment $e$ as $R_e(f) = \mathbb{E}_{e}\big[\ell(f(\bm{X}_e), Y_e)\big]$, where $\ell$ is the square loss when $f(\bm{X}_{e})$ is the predicted value and $Y_{e}$ is the corresponding label, $(\bm{X}_e,Y_e)\sim \mathbb{P}_e$ and the expectation $\mathbb{E}_{e}$ is defined w.r.t. the distribution of points in environment $e$.

An invariant predictor is composed of two parts a representation $\bm{\Phi} \in \mathbb{R}^{ d \times n}$ and a predictor (with the constant term) $\bm{w} \in \mathbb{R}^{d\times 1}$. We say that a data representation $\bm{\Phi}$ elicits an invariant predictor $\bm{w}^{\mathsf{T}}\bm{\Phi}$ across the set of environments $\mathcal{E}_{tr}$ if there is a predictor $\bm{w}$ that achieves the minimum risk for all the environments 
$\bm{w} \in \argmin_{\tilde{\bm{w}} \in \mathbb{R}^{d\times 1}} R_{e}(\tilde{\bm{w}}^{\mathsf{T}}\bm{\Phi}), \; \forall e \in \mathcal{E}_{tr}$.  IRM may be phrased as the following constrained optimization problem \citep{arjovsky2019invariant}:
\begin{align}
    &\min_{\bm{\Phi} \in \mathbb{R}^{d\times n},\bm{w}  \in \mathbb{R}^{d\times 1}} \sum_{e \in \mathcal{E}_{tr}}R_{e}(\bm{w}^{\mathsf{T}}\bm{\Phi})
    \text{~~~s.t.}\;\bm{w} \in \argmin_{\tilde{\bm{w}} \in \mathbb{R}^{d\times 1}} R_{e}(\tilde{\bm{w}}^{\mathsf{T}}\bm{\Phi}),\;\forall e \in \mathcal{E}_{tr}
    \label{eqn: IRM}
\end{align}
If $\bm{w}^{\mathsf{T}} \bm{\Phi}$ solves the above, then it is an invariant  predictor across the training environments $\mathcal{E}_{tr}$.

\textbf{Nash Equilibrium (NE):}
To understand how certain key aspects of our method function let us revisit the notion of Nash Equilibrium \citep{NE}. A standard normal form game is written as a tuple $\Omega = (\mathcal{N}, \{u_i\}_{i \in \mathcal{N}},\{\mathcal{S}_i\}_{i\in \mathcal{N}})$, where $\mathcal{N}$ is a finite set of players.  Player $i \in \mathcal{N}$ takes actions from a strategy set $\mathcal{S}_i$. The utility of player $i$ is $u_i:\mathcal{S} \rightarrow \mathbb{R}$, where we write the joint set of actions of all the players as  $\mathcal{S} = \Pi_{i\in \mathcal{N}} \mathcal{S}_i$. The joint strategy of all the players is given as $\bm{s} \in \mathcal{S}$,  the  strategy of player $i$ is $\bm{s}_i$ and the strategy of the rest of players is $\bm{s}_{-i} = (\bm{s}_{i^{'}})_{i^{'} \not = i}$.
\begin{definition}
A strategy $\bm{s}^{\dagger}\in \mathcal{S}$ is said to be a pure strategy Nash equilibrium (NE) if it satisfies,
$u_i(\bm{s}_{i}^{\dagger},\bm{s}_{-i}^{\dagger}) \geq u_i(k,\bm{s}_{-i}^{\dagger}), \forall k \in \mathcal{S}_{i}, \forall i \in \mathcal{N}$, where $u_i(\bm{s}_{i}^{\dagger},\bm{s}_{-i}^{\dagger})= u_i(\bm{s}_{1}^{\dagger},\bm{s}_{2}^{\dagger},...,\bm{s}_{\mathcal{N}}^{\dagger})=u_i(\bm{s}^{\dagger})$.
\end{definition}
NE thus identifies a state where each player is using the best possible strategy in response to the rest of the players leaving no incentive for any player to alter their strategy. In seminal work by \citep{debreu1952social} it was shown that for a special class of games called concave games such a pure NE always exists. This is relevant because the game implied by Algorithm \ref{algo:LINEX} falls in this category.

\section{Methodology}

We first define desirable properties for our explanation methods. The first three have been seen in previous works, while the last \textit{Unidirectionality} is new. We then describe our method where the goal is to explain a black-box model $f:\mathcal{X} \rightarrow \mathbb{R}$ for individual inputs $\bm{x}$ based on predictors $\bm{w}$ by looking at their corresponding components, also termed as feature attributions. 

We take inspiration from IRM since, our goal here too is to extract robust features that are ideally stable and unidirectional. The main difference is that we do not learn a new (possibly invariant) representation since, we desire interpretability and this new representation may not be interpretable. We hence, are restricted to the provided input or some other interpretable representation. Thus, given that $\bm{\Phi}\subseteq \mathcal{X}$ where $n=1$ (since local explanations) in our setup, our goal is to find the best predictor $\bm{w}$ (viz. high fidelity) for an input that will eliminate or at least mitigate the effect of unstable features. In other words, we want to identify features in the input space that will (roughly) have the same importance (i.e. are invariant) in the neighborhood of the example we want to explain. Our approach as we will see is similar in spirit to IRM games \cite{ahuja2020invariant}, where we adopt a game theoretic strategy to obtain such explanations. The differences with IRM games are mentioned in the last paragraph of Section \ref{sec:relw}.

\subsection{Desirable Properties}
\label{sec:desp}
We now discuss certain properties we would like our explainability method to have in order to provide robust explanations that could potentially be used for recourse. Let $D_t$ denote a (test) dataset with examples $(x,y)$  where $y_b(x)$ is the black-box models prediction on $x$ and $y_e^{x'}(x)$ is the prediction on $x$ ($\in \mathcal{X}$) using the explanation model at $x'$. The feature attributions (or coefficients) for the explanation model at $x$ are denoted by $c_e^{x}$, $\mathcal{N}_x$ denotes the exemplar neighborhood of $x$ with $|.|_{\text{card}}$ denoting cardinality and $|.|$ denoting absolute value.

\noindent\textbf{Fidelity:} This is the most standard property which all proxy model based explanation methods are evaluated against \citep{lime,unifiedPI,hima} as it measures how well the proxy model simulates the behavior of the black-box (i.e. faithfulness to the black box) it is attempting to explain. Here we define inverse of it, that is \textit{Infidelity (INFD)}, as the MAE between the black-box and explanation model predictions across all the test points: 
\begin{align}
\text{INFD}=\frac{1}{|D_t|_{\text{card}}}\sum_{(x,y)\in D_t} |y_b(x)-y_e^x(x)|.
\end{align}

We also define another metric here called \textit{Generalized Infidelity (GI)}, which also been used in previous works \citep{mame} to measure the generalizability of local explanations to neighboring test points. It is defined as: 
\begin{align}
\text{GI}=\frac{1}{|D_t|_{\text{card}}}\sum_{(x,y)\in D_t} \frac{1}{|\mathcal{N}_x|_{\text{card}}}\sum_{x'\in \mathcal{N}_x} |y_b(x)-y_e^{x'}(x)|.
\end{align}

\noindent\textbf{Stability:} This is also a popular notion \citep{robustexpl,mame,yeh2019fidelity} to evaluate robustness of explanations.
Largely, stability can be measured at three levels. One is prediction stability, which measures how much the predictions of an explanation model change for the same example subject to different randomizations within the method or across close by examples. The second is the variance in the feature attributions again for the same or close by examples. It is good for a method to showcase stability w.r.t. both even though in many cases the latter might imply the former. An interesting third notion of stability is the correlation between the feature attributions of an explanation model and average feature values of examples belonging to a particular class. This measures how much does the explanation method pick features that are important for the class, rather than spurious ones that seem important for just the example. We thus define two stability metrics.

\noindent\textit{Coefficient Inconsistency (CI):}  This notion has been used before \citep{robustexpl} to measure an explanation methods robustness. It can be defined as the MAE between the attributions of the test points and their respective neighbors: 
\begin{align}
\text{CI} = \frac{1}{|D_t|_{\text{card}}}\sum_{(x,y)\in D_t}\frac{1}{|\mathcal{N}_x|_{\text{card}}}\sum_{x'\in \mathcal{N}_x} |c_e^{x}-c_e^{x'}|_1.
\end{align}

\noindent\textit{Class-Attribution Consistency (CAC):} For local explanations of classification black-boxes, we expect certain important features to be highlighted across most of the explanations of a class. This is codified by this metric which is defined as follows:
\begin{align}
\text{CAC} =\frac{1}{|\mathcal{Y}|_{\text{card}}}\sum_{y\in \mathcal{Y}} r(\mu_e^{y},\mu_y),
\end{align}
where $\mathcal{Y}$ denotes the set of class labels in the dataset, $\mu_y$
the mean (vector) of all inputs in class $y\in \mathcal{Y}$, $\mu_e^{y}$ the mean explanation for class $y$ and $r$ the Pearson's correlation coefficient. This metric quantifies the consistency between the important features for a class and attributions provided by the explanations.

\noindent\textbf{Black-box Invariance:} This is the same as implementation invariance defined in \citep{intgrad}. Essentially, if two models have exactly the same behavior on all inputs then their explanations should also be the same. Since, our method is model agnostic with only query access to the model it is easy to see that it satisfies this property if the same environments are created.

\noindent\textbf{Unidirectionality:} This is a new property, but as we argue that this is a natural one to have. Loosely speaking, unidirectionality would measure how consistently the sign of the predictor for a feature is maintained for the same or close by examples by an explanation method. This is a natural metric \citep{miller2018explanation}, which from an algorithmic recourse \citep{algorec} perspective is also highly desirable. For instance, recommending a person to increase their salary to get a loan and then recommending to another person with a very similar profile to decrease their salary for the same outcome makes little sense.

We define the unidirectionality $\Upsilon$ as a measure of how consistent the sign of the attribution for a particular feature in a local explanation is when varying neighborhoods for the same example or when considering different close by examples. As such, given $m$ attributions for each of $d$ features denoted by $w_1^{(1)},...,w_{m}^{(d)}$ the metric for an example is:
\begin{align}
     \label{eq:unid}
     \Upsilon =\frac{1}{md} \sum_{i=1}^d\left|\sum_{j=1}^m \mathsf{sgn}\left(w_j^{(i)}\right)\right|
\end{align}
%\begin{equation}
 %   \label{eq:unid}
  %  \Upsilon =\frac{1}{md} \sum_{j \in \mathcal{N}_i}\sum_{k=1}^d \mathbb{I}\left(\mathsf{sgn}\left(w_i^{(k)}\right) = \mathsf{sgn}\left(w_j^{(k)}\right)\right)
%\end{equation}
where $|.|$ stands for absolute value. Clearly, the more consistent the signs for the attribution of a particular feature across $m$ attributions the higher the value, where the maximum value can be one. If equal number of attributions have different signs for all features then $\Upsilon$ will be zero, the lowest possible value. This property thus measures how intuitively consistent (ignoring magnitude) the explanations are. Given its sole focus on the sign of the attributions it compliments the above metrics along with attributional robustness metrics \citep{attrobust,attrrobustenh}.

\begin{algorithm}[htbp]
\SetAlgoLined

\textbf{Input:} example $\bm{x}$, black-box predictor $f(.)$, number of environments to be created $k$,  ($l_{\infty}$) threshold $\gamma>0$, ($l_1$) threshold $t>0$ and convergence threshold $\epsilon>0$ 

  \textbf{Initialize:} $\forall i\in \{1,...,k\}$ $\tilde{\bm{w}}_i= \bm{0}$ and $\Delta=0$
  
  Let $\xi_1(.),...,\xi_k(.)$ be $k$ environment creation functions as described in section \ref{sec:env}
  
 \SetKwRepeat{Do}{do}{while}
 \Do{$\Delta \ge \epsilon$}
 {
 $\Delta=0$
 
 \For{$i=1$ to $k$}
 {
 $\tilde{\bm{w}}^+_{-i} = \sum_{j\in\{1,...,k\},j\neq i}\tilde{\bm{w}}_j$
 
 $\tilde{\bm{w}}_i^{\mathsf{prev}} = \tilde{\bm{w}}_i $
 
 $\tilde{\bm{w}}_i=\argmin\limits_{\tilde{\bm{w}}}\sum_{\tilde{\bm{x}}\in \xi_i(\bm{x})}\left(f(\tilde{\bm{x}})-\tilde{\bm{w}}^{+^{\mathsf{T}}}_{-i}\tilde{\bm{x}}-\tilde{\bm{w}}^{\mathsf{T}}\tilde{\bm{x}}\right)^2$ s.t. $|\tilde{\bm{w}}^+_{-i}+\tilde{\bm{w}}|_1\le t$ and $|\tilde{\bm{w}}|_{\infty}\le \gamma$
 
 $\Delta = \max\left(|\tilde{\bm{w}}_i^{\mathsf{prev}} - \tilde{\bm{w}}_i|_2, \Delta\right)$
 }
 }
 \KwOut{$\bm{w}=\sum_{i\in\{1,...,k\}}\tilde{\bm{w}}_i$}
 \caption{Locally Invariant EXplanations (LINEX).}
  \label{algo:LINEX}
\end{algorithm}
%\vspace{-0.25cm}
\subsection{Method}
\label{sec:meth}
%\vspace{-0.2cm}
\subsubsection{Description}
In Algorithm \ref{algo:LINEX}, we show the steps of our method LINEX. The input is the example we want to explain $\bm{x}$, the black-box predictor, a few thresholds that we describe next and $k$ (local) environments whose creation is described in Section \ref{sec:env}. In the algorithm we iteratively learn a constrained least squares predictor for each environment, where the final (local) linear predictor is the sum of these individual predictors. In each iteration when computing the contribution of environment $e_i$ to the final summed predictor, the most recent contributions of the other predictors are summed and the residual is optimized subject to the constraints. The first constraint is a standard lasso type constraint which tries to keep the final predictor sparse as in LIME.

\noindent\textbf{Why $l_{\infty}$ constraint?} The second constraint is more unique and is a $l_{\infty}$ constraint on the predictor of just the current environment. This constraint as we prove in Section \ref{sec:theory} is essential for obtaining robust predictors. To intuitively understand why this is the case consider we have two environments. In this case if the optimal predictors for a feature in each environment have opposite signs, then the Nash equilibrium (NE) is when each predictor takes $+\gamma$ or $-\gamma$ values as they try to force the sum to have the same sign as them. \emph{In other words, features that have a disagreement in even the direction of their impact are eliminated by our method}. LIME type methods on the other hand would simply choose some form of average value of the predictors which may be a risky choice especially for actionability/recourse given that the directions change so abruptly. On the other hand, if the optimal predictors for a feature in the two environments have the same sign, the lower absolute valued predictor would be chosen (assuming $\gamma$ is greater) making it a careful choice. The reasoning for this and a discussion involving more than two environments is given in Section \ref{sec:theory}.

The overall algorithm resembles a (simultaneous) game where each environment is a player trying to find the best predictor for its environment given all other predictors and constraints. Formally, for $i\in\{1,...,k\}$ the players are $\mathcal{N}=\{\xi_i\}$, their strategy space is $\mathcal{S}_i=[-\gamma,\gamma]^d$ and their utility $u_i\left(\tilde{\bm{w}}_i,\tilde{\bm{w}}^{+}_{-i}\right)=-\sum_{\tilde{\bm{x}}\in \xi_i(\bm{x})}\left(f(\tilde{\bm{x}})-\tilde{\bm{w}}^{+^{\mathsf{T}}}_{-i}\tilde{\bm{x}}-\tilde{\bm{w}}_i^{\mathsf{T}}\tilde{\bm{x}}\right)^2$. The optimization problem solved by each player is convex as norms are convex.
\subsubsection{Creating  Local Environments}
\label{sec:env}

In standard IRM, environments are assumed to be given. In our case of local explainability we have to decide how to produce them. We offer a few options for the environment creation functions $\xi_i$ $\forall i\{1,...,k\}$ in Algorithm \ref{algo:LINEX}.

\noindent\textbf{Random Perturbation:} This simple approach is similar to what LIME employs. We could perturb the input example by adding zero mean gaussian noise to create the base environment (used by LIME) and then perform bootstrap sampling to create the $k$ different environments. This will efficiently create neighbors in each environment, although they may be unrealistic in the sense that they could correspond to low probability points w.r.t. the underlying distribution.

\noindent\textbf{Realistic Generation/Selection:} One could also create neighbors using data generators such as done in MeLIME \citep{melime} or select neighboring examples from the training set as done in MAPLE \citep{maple} to create the base environment following which bootstrap sampling could be done to form the $k$ different environments. This could provide more realistic neighbors than the previous one, but may be much more computationally expensive. Other than bootstrapping one could also oversample and try to find the optimal hard/soft partition through various clustering type objectives \citep{charubook,envinf}.

\subsection{Theoretical Results}
\label{sec:theory}

In this section, we analyze the output of Algorithm 1 with two environments. The extension to multiple environments is discussed following this result, where the general intuition is still maintained but some special cases arise depending on whether there are an even or odd number of environments. To prove our main result we make two assumptions.

\noindent\textbf{Assumption 1} \emph{The features of the samples in the local environments are independent.} 

This assumption is satisfied by the most standard way of creating neighborhoods/environments, where gaussian noise is used to create them as described in Section \ref{sec:env}.

\noindent\textbf{Assumption 2} \emph{$t\geq \gamma d$, where $d $ is the dimensionality of the feature vector.}

Here $t$ is the parameter in the $\ell_1$ penalty and $\gamma$ in the $\ell_\infty$ as noted in Algorithm \ref{algo:LINEX}. Making this assumption ensures that we closely analyze the role of the $\ell_{\infty}$ penalty, which is one of our main novelties. 

\noindent\textbf{Definition 2} Let the explanation that each environment $\xi_{i}$ arrives at for an example $\bm{x}$ based on unconstrained least squares minimization be $\bm{w}_{i}^{*}$ where,
\begin{equation}
\label{eq:twoexpl}
    \bm{w}_{i}^{*} \in \argmin_{\tilde{w}\in \mathbb{R}^d} \mathbb{E}_{\tilde{\bm{x}}\in \xi_{i}(x)}[(f(\tilde{\bm{x}})-\tilde{w}^{\mathsf{T}}\tilde{\bm{x}})^2]
\end{equation}
The expectation is taken w.r.t the environment generation distribution. 
\begin{theorem}
\label{thm1}
The output of Algorithm \ref{algo:LINEX} under Assumptions 1, 2 and equation \ref{eq:twoexpl} is given by:
\begin{equation}
    \bm{w}=\Big(\bm{w}_1^{*} \odot \bm{1}_{|\bm{w}_2^{*}|\geq |\bm{w}_1^{*}|} + \bm{w}_2^{*} \odot \bm{1}_{|\bm{w}_1^{*}|>|\bm{w}_2^{*}|} \Big) \bm{1}_{ \bm{w}_1^{*}\odot \bm{w}_2^{*} \ge \bm{0}}
    \label{ne_exp}
\end{equation}
where $\odot$ is element wise product and $\bm{1}$ is the indicator function.
%\vspace{-0.2cm}
\end{theorem}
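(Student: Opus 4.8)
The plan is to exploit \textbf{Assumption 1} to decouple the vector-valued game into $d$ independent scalar games, solve each scalar game exactly, and then reassemble the per-coordinate solutions into the stated expression. First I would rewrite each player's objective. For the two-environment case, player $i$'s loss in Algorithm \ref{algo:LINEX} is $\mathbb{E}_{\xi_i}[(f(\tilde{\bm x}) - \bm w^{\mathsf T}\tilde{\bm x})^2]$ evaluated at the \emph{combined} predictor $\bm w = \tilde{\bm w}_{-i}^+ + \tilde{\bm w}_i$. Expanding the square and using \textbf{Assumption 1} (independence of the coordinates, which diagonalizes the relevant second-moment matrix $\mathbb{E}_{\xi_i}[\tilde{\bm x}\tilde{\bm x}^{\mathsf T}]$), this loss separates as $\sum_{j=1}^d c_i^{(j)} (w^{(j)} - w_i^{*(j)})^2 + \text{const}$, where $c_i^{(j)} = \mathbb{E}_{\xi_i}[(\tilde x^{(j)})^2] > 0$ and $w_i^{*(j)}$ is exactly the unconstrained minimizer from Definition 2 / equation \ref{eq:twoexpl}. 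Since the $\ell_\infty$ constraint $|\tilde{\bm w}_i|_\infty \le \gamma$ is itself a per-coordinate box constraint, each player's strategic problem splits into $d$ scalar subproblems coupled only through the shared total, so it suffices to analyze a single coordinate.

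Next I would dispose of the $\ell_1$ constraint using \textbf{Assumption 2}. The claim I would establish is that at equilibrium each coordinate of the total satisfies $|w^{(j)}| \le \gamma$: it is either $0$ (opposite signs) or the smaller-magnitude optimum (same signs, taken to lie in $[-\gamma,\gamma]$). Consequently $|\bm w|_1 \le \gamma d \le t$, so the $\ell_1$ constraint is inactive and may be dropped, leaving only the box constraints; this is precisely the role Assumption 2 plays.

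With the problem reduced to a scalar two-player game on $[-\gamma,\gamma]^2$ with costs $c_i(\tilde w_1 + \tilde w_2 - w_i^*)^2$, player $i$'s best response to the opponent is the clipped affine map $\tilde w_i = \mathrm{clip}(w_i^* - \tilde w_{-i}, -\gamma, \gamma)$. I would then locate the Nash equilibrium by solving for the fixed point of these two clipped responses, splitting into cases on the sign pattern of $(w_1^*, w_2^*)$: (a) \emph{opposite signs}, where each player is driven to the opposite boundary $\pm\gamma$ so the total is forced to $0$; and (b) \emph{same sign}, where the larger-magnitude player saturates at its boundary while the other adjusts so the total equals the smaller-magnitude optimum, with the tie and zero cases handled to match the $\ge$ appearing in the indicators. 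Existence of a pure NE follows from the concave-games theorem \cite{debreu1952social} cited in Section \ref{sec:prelim} (each cost is convex in the player's own action over a compact convex set); the equilibrium \emph{total} is unique by strict convexity together with the monotone, nonexpansive structure of the clipped best responses, which also guarantees the Gauss--Seidel iteration in Algorithm \ref{algo:LINEX} converges to it. Reassembling the coordinate-wise totals yields exactly $\big(\bm w_1^*\odot\bm 1_{|\bm w_2^*|\ge|\bm w_1^*|} + \bm w_2^*\odot\bm 1_{|\bm w_1^*|>|\bm w_2^*|}\big)\bm 1_{\bm w_1^*\odot\bm w_2^*\ge \bm 0}$.

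I expect the main obstacle to be the equilibrium analysis of the clipped scalar game (the third step): carefully verifying that the proposed boundary configurations are genuine fixed points of the coupled best responses in every sign and magnitude regime, proving uniqueness of the equilibrium total, and confirming that the $\ell_\infty$ ball is large enough (the implicit $\gamma \ge \min(|w_1^*|,|w_2^*|)$ regime) for the ``careful'' smaller-magnitude selection to be attainable — together with checking that this keeps the $\ell_1$ constraint inactive so that Assumption 2 genuinely closes the argument.
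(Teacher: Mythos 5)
Your proposal is correct and follows essentially the same route as the paper's proof sketch: Assumption 1 decouples the game into per-coordinate scalar games, Assumption 2 renders the $\ell_1$ constraint inactive, and the Nash equilibrium is found by the case analysis in which the larger-magnitude environment saturates at $\pm\gamma$ while the other either saturates at the opposite boundary (opposite signs, total $0$) or sits in the interior at the smaller optimum minus $\gamma$ (same signs, total equal to the smaller-magnitude optimum). Your formalization via fixed points of clipped best responses, and your explicit flagging of the implicit regime condition that the smaller-magnitude optimum be attainable within the box (which the paper's sketch glosses over), are refinements of, not departures from, the paper's argument.
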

\begin{proof}[Proof Sketch]
The above expression describes the NE of the game played between the two local environments each trying to move $\bm{w}$ towards their least squares optimal solution. Given assumptions 1 and 2, we witness the following behavior of our method. Let the $i^{th}$ feature of the predictors $\tilde{w}_{1}$ and $\tilde{w}_{2}$ from Algorithm \ref{algo:LINEX} be $\tilde{w}_{1i}$ and $\tilde{w}_{2i}$ respectively. Let the corresponding least squares optimal predictors for the $i^{th}$ feature have the following relation: $w_{1i}^{*}>w_{2i}^{*}$ and $|w_{1i}^{*}|>|w_{2i}^{*}|$. Then the two environments will push the ensemble predictor, $\tilde{w}_{1i}+ \tilde{w}_{2i}$, in opposite directions during their turns, with the first environment increasing its weight, $\tilde{w}_{1i}$, and the second environment decreasing its weight, $\tilde{w}_{2i}$. Eventually, the environment with a higher absolute value ($\xi_1=1$ since $|w_{1i}^{*}|>|w_{2i}^{*}|$) reaches the boundary ($\tilde{w}_{1i}=\gamma$) and cannot move any further due to the $l_{\infty}$ constraint. The other environment $\xi_2$ best responds, where it either hits the other end of the boundary ($\tilde{w}_{2i}=-\gamma$), in which case the weight of the ensemble for component $i$ is zero, a case which occurs if $w_{1i}^{*}$ and $w_{2i}^{*}$ have opposite signs; or gets close to the other boundary while staying in the interior ($\tilde{w}_{2i}=w_{2i}^{*}-\gamma$), in which case the weight of the ensemble for feature $i$ is $w_{2i}^{*}$, a situation which occurs if $w_{1i}^{*}$ and $w_{2i}^{*}$ have the same sign.
%\vspace{-0.3cm}
\end{proof}
\noindent\textbf{Implications of the Theorem \ref{thm1}:} The following are the main takeaways from Theorem \ref{thm1}: (1) If the signs of the explanations for unconstrained least squares for the two environments differ for some feature, then the algorithm outputs a zero for that feature attribution. (2) If the signs of the explanations for the two environments are the same, then the algorithm outputs the lesser magnitude of the two. These two properties are highly desirable from an algorithmic recourse or actionability perspective, where the first biases us to not rely on features where the black-box function changes direction rapidly (unidirectionality). The second, provides a reserved estimate so that we do not incorrectly over rely on the particular feature (stability). Based on similar logic presented in the proof sketch the behavior for more than two environments for LINEX is discussed in Suppl. C.
%\ref{sec:m2envs}.

\begin{figure}[htbp]
\includegraphics[height=1.02in,width=0.45\textwidth]{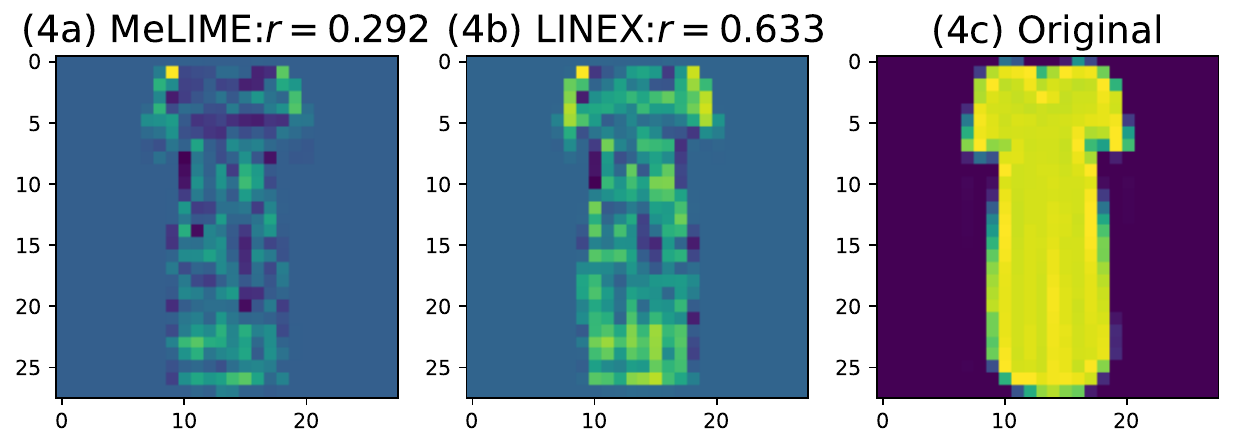}
\includegraphics[height=1in, 
width=0.45\textwidth]{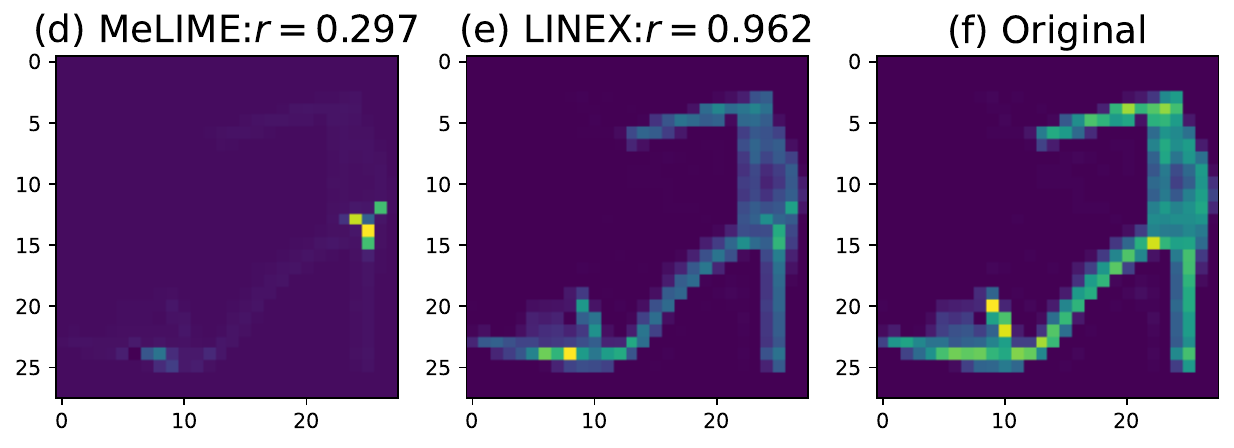}
\caption{Sample results using FMNIST dataset for two classes. (a-c): Class \textit{Dress}, (d-f): Class \textit{Sandal}. (a, d): MeLIME explanations. (b, d): LINEX explanations.  (c, f): Original images. %The $r$ values show Pearson's correlation between average feature attributions and mean of the original images from the respective classes. 
We observe that LINEX explanations capture important artifacts and thus exhibit significantly higher correlation with the original images for the same level of sparsity, where in aggregate too the correlations are high w.r.t. images belonging to a particular class, thus showcasing higher stability (i.e. high CAC) as is seen in Table \ref{tab:results}. More examples are shown in Suppl. G.}%\ref{app:image_examples}
%\vspace{-0.25cm}
\label{fig:fmnist}
\end{figure}

\begin{table}[htbp]
\small
\caption{Below are three example positive sentiment sentences from the Rotten Tomatoes dataset. Green and red indicate the most important word highlighted by \textcolor{green}{MeLIME} and \textcolor{red}{LINEX} respectively. As seen LINEX highlights stronger positive sentiment words. More examples in Suppl. F.} %\ref{app:text_examples}.
\label{tab:rot}
\vspace{3mm}
\centering
\begin{tabular}{|l|p{5cm}|l|} 
\hline
Example 1 & Example 2 & Example 3\\
\hline
\multirow{2}{*}{one-of-a-\textcolor{green}{kind} near-\textcolor{red}{masterpiece}} &  \textcolor{red}{moving} tale of love and destruction in \textcolor{green}{unexpected} places , unexamined lives & \multirow{2}{*}{\textcolor{green}{spare} yet \textcolor{red}{audacious} . . .}\\
\hline
\end{tabular}
%\vspace{-0.2cm}
\end{table}

%\vspace{-0.2cm}
\section{Experiments}
\label{sec:exp}
%\vspace{-0.2cm}
We test our method on five real world datasets covering all three modalities: IRIS (Tabular) \citep{uci}, Medical Expenditure Panel Survey (Tabular) \citep{MEPS}, Fashion MNIST (Image) \citep{fmnist}, CIFAR10 (Image) \cite{cifar} and Rotten Tomatoes reviews (Text) \citep{rotten} with LIME-like random (rand) and MeLIME-like realistic neighborhood generation (real) or MAPLE-like realistic neighborhood selection (mpl). 
The summary of black-box classifier accuracies, and type of realistic perturbation used for the datasets are provided in Table 3
%\ref{tab:datasets} 
in the Supplement. In other cases except FMNIST and CIFAR10 which come with their own test partition we randomly split the datasets into 80/20\% train/test partition and average results for the local explanations over this test partition. 
For LINEX we produce two environments where the two environments are formed by performing bootstrap sampling on the base environment which is created either by rand, real or mpl type neighborhood generation. Thus in all cases the union of the environments is the same as a single neighborhood used to produce explanations for the competitors making it a fair comparison. Behavior with more environments is in Suppl. E.
%\ref{app:all_res}.\

\begin{table}[htbp]
\small
\caption{Comparison of the different methods based on infidelity (INFD), generalized infidelity (GI), coefficient inconsistency (CI), class attribution consistency (CAC) and unidirectionality ($\Upsilon$). $\uparrow$ indicates higher value for the metric is better, and $\downarrow$ indicates lower is better. Statistically significant results based on paired t-test are bolded. LINEX is better than baselines in 21 out of 40 cases, and worse only in 5 cases. Plots showing behavior with varying neighborhood size, number of environments and kernel width are in Suppl. E.}%\ref{app:all_res}
\addtolength{\tabcolsep}{-4pt}
\vspace{3mm}
\centering
\begin{tabular}{|c | c | c | c | c | c | c |}
% \begin{tabular}{c  c  c  c  c  c  c }
\hline
\textit{Dataset} & \textit{Method} & INFD $\downarrow$ & GI $\downarrow$ & CI $\downarrow$ & $\Upsilon$ $\uparrow$ & CAC $\uparrow$ \\
%   & \multicolumn{1}{c}{SP-LIME} & \multicolumn{1}{c}{Two Step} & \multicolumn{1}{c}{MAME} &
%  \multicolumn{1}{c}{SP-LIME} & \multicolumn{1}{c}{Two Step} & \multicolumn{1}{c}{MAME} \\
\hline
\hline
\multirow{7}{*}{\textit{IRIS}} 

& {LIME}  &  $0.015 \pm 0.011$ & $0.132 \pm 0.042$ & $0.319 \pm 0.132$ & $0.646 \pm 0.040$ & $0.667 \pm 0.167$ \\
& {S-LIME}  &  $0.015 \pm 0.010$ & $0.077 \pm 0.011$ & $0.143 \pm 0.045$ & $0.704 \pm 0.037$ & $0.878 \pm 0.034$ \\
& {LINEX/rand} &  $0.013 \pm 0.009$ & $\mathbf{0.052 \pm 0.008}$ & $\mathbf{0.044 \pm 0.013}$ & $\mathbf{0.802 \pm 0.043}$ & $\mathbf{0.921 \pm 0.042}$ \\

\cline{2-7}

& {MeLIME}  & $0.008 \pm 0.003$ & $0.049 \pm 0.018$ & $0.219 \pm 0.108$ & $0.629 \pm 0.013$ & $0.464 \pm 0.100$\\
& {LINEX/real}  & $0.009 \pm 0.003$ & $\mathbf{0.029 \pm 0.003}$ & $\mathbf{0.024 \pm 0.002}$ & $\mathbf{0.744 \pm 0.044}$ & $\mathbf{0.942 \pm 0.023}$\\

\cline{2-7}
& {MAPLE}  & $0.009 \pm 0.001$ & $0.038 \pm 0.004$ & $0.261 \pm 0.033$ & $0.458 \pm 0.032$ & $0.586 \pm 0.035$\\
& {LINEX/mpl}  & $0.013 \pm 0.000$ & $\mathbf{0.020 \pm 0.000}$ & $\mathbf{0.026 \pm 0.002}$ & $\mathbf{0.694 \pm 0.008}$ & $\mathbf{0.929 \pm 0.004}$\\
\hline
\hline
\multirow{5}{*}{\textit{MEPS}} 
& {LIME} & $0.158 \pm 0.066$ & $0.214 \pm 0.041$ & $0.005 \pm 0.001$ & $0.981 \pm 0.006$ & \multirow{3}{*}{NA}\\
& {S-LIME} & $0.158 \pm 0.066$ & $0.214 \pm 0.042$ & $0.005 \pm 0.001$ & $0.974 \pm 0.008$ & \\
& {LINEX/rand} & $\mathbf{0.130 \pm 0.052}$ & $\mathbf{0.164 \pm 0.021}$ & $0.003 \pm 0.001$ & $0.979 \pm 0.006$ & \\

\cline{2-7}
& {MAPLE} & $\mathbf{0.063 \pm 0.000}$ & $\mathbf{0.067 \pm 0.000}$ & $0.007 \pm 0.000$ & $0.957 \pm 0.000$ & \multirow{2}{*}{NA}\\
& {LINEX/mpl}  & $0.098 \pm 0.001$ & $0.094 \pm 0.001$ & $0.007 \pm 0.000$ & $0.950 \pm 0.000$	& \\
\hline
\hline
\multirow{5}{*}{\textit{FMNIST}} 
& {LIME}  &  $0.162 \pm 0.003$ & \multirow{3}{*}{NA} & \multirow{3}{*}{NA} &	\multirow{3}{*}{NA} & \multirow{3}{*}{NA} \\
& {S-LIME}  & $0.142 \pm 0.003$ &  &  &	&  \\
& {LINEX/rand} & $0.149 \pm 0.002$ &  &	 &	& \\
                             
\cline{2-7}
& {MeLIME}  & $\mathbf{0.001 \pm 0.000}$ & $\mathbf{0.277 \pm 0.000}$ & $0.007 \pm 0.000$ & $0.769 \pm 0.000$ & $0.327 \pm 0.000$ \\
& {LINEX/real}  & $0.100 \pm 0.002$ & $0.304 \pm 0.001$ & $0.002 \pm 0.000$ & $\mathbf{0.780 \pm 0.000}$ & $\mathbf{0.649 \pm 0.001}$ \\
\hline
\hline
\multirow{5}{*}{\textit{CIFAR10}} 
& {LIME}  &  $0.191 \pm 0.005$ & \multirow{3}{*}{NA} & \multirow{3}{*}{NA} &	\multirow{3}{*}{NA} & \multirow{3}{*}{NA} \\
& {S-LIME}  & $0.185 \pm 0.002$ &  &  &	&  \\
& {LINEX/rand} & $0.186 \pm 0.002$ &  &	 &	& \\
                             
\cline{2-7}
& {MeLIME}  & $0.100 \pm 0.003$ & $0.412 \pm 0.007$ & $0.014 \pm 0.000$ & $0.546 \pm 0.003$ & \multirow{2}{*}{NA} \\
& {LINEX/real}  & $0.090 \pm 0.005$ & $\mathbf{0.279 \pm 0.001}$ & $\mathbf{0.006 \pm 0.000}$ & $\mathbf{0.679 \pm 0.004}$ &  \\
\hline
\hline
\multirow{5}{4em}{\textit{Rotten Tomatoes}}
& {LIME}  & $0.079 \pm 0.036$ & \multirow{3}{*}{NA} & \multirow{3}{*}{NA} &	\multirow{3}{*}{NA} & \multirow{3}{*}{NA} \\
& {S-LIME}  & $0.075 \pm 0.035$ &  &  &	&  \\
& {LINEX/rand} & $0.069 \pm 0.032$ &  &	 &	& \\

\cline{2-7}
& {MeLIME}  & $\mathbf{0.029 \pm 0.001}$ & $0.391 \pm 0.000$ & $0.000 \pm 0.000$ & $0.999 \pm 0.000$ & $0.909 \pm 0.000$\\
& {LINEX/real}  & $0.053 \pm 0.000$ & $\mathbf{0.361 \pm 0.000}$ & $0.000 \pm 0.000$ & $1.000 \pm 0.000$ & $\mathbf{0.953 \pm 0.001}$ \\
\hline

\end{tabular}
\label{tab:results}
%\vspace{-.5cm}
\end{table}

Given the neighborhood generation schemes we compare LINEX with LIME, Smoothed LIME (S-LIME), MeLIME and MAPLE, where for S-LIME we average the explanations of LIME across the LINEX environments. SHAP's results are in Suppl. H, since it is not a natural fit here. Nor are methods such as saliency maps, gradcam, integrated gradients as they are \emph{white-box}
methods requiring access to a differentiable model.

\noindent\textbf{Metrics:} We evaluate using five simple metrics: Infidelity (INFD), Generalized Infidelity (GI), Coefficient Inconsistency (CI), Class Attribution Consistency (CAC) and Unidirectionality ($\Upsilon$), which are defined in section \ref{sec:desp}. The first two evaluate \textit{faithfulness}, the next two \textit{stability} and the last \textit{goodness for recourse}.
\begin{wrapfigure}{r}{0.6\textwidth}
\vspace{-.15cm}
\includegraphics[width=0.6\textwidth]{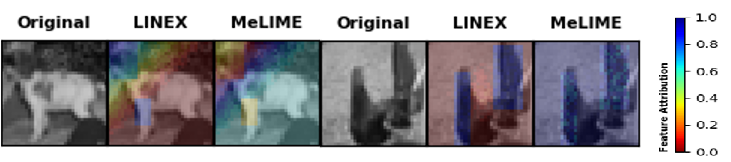}
\caption{Sample results using CIFAR10 dataset for dog and bird class. As can be seen LINEX focuses more on salient features such as head and legs for the dog, and wings for the bird (rather than also the background). More examples are shown in Suppl. G.}%\ref{app:image_examples}.
\vspace{-0.25cm}
\label{fig:cifar}
\end{wrapfigure}

We report the above metrics in Table \ref{tab:results}.
Each result in Table \ref{tab:results} is mean $\pm$ standard error of the mean over five kernel sizes $\tau \sqrt{d}$ generally, where $\tau = \{0.05, 0.1, 0.25, 0.5, 0.75\}$. Test neighborhoods do not make sense for random perturbations with FMNIST, CIFAR10 and Rotten Tomatoes because the features (viz. superpixels) used by neighboring test examples are different. Also, we do not use realistic perturbations with MEPS since KDE and VAE generators do not work well with categorical data. In addition, since MEPS data uses regression black-box, CAC cannot be computed. Also for CIFAR10 images in a class are not aligned so CAC is inapplicable. All these justify the missing entries in Table \ref{tab:results}. The results were generated on Linux machines with $56$ cores and $242$ GB RAM. %, and took just minutes  for the IRIS dataset to about 3 hours for FMNIST with realistic perturbations. 
More details regarding the exact perturbation schemes for LIME/MeLIME/MAPLE, the perturbation neighborhood sizes and the time taken by the methods are in Suppl. A
%\ref{app:linex_efficiency} 
and Suppl. D.
%\ref{app:exp_details}.

\noindent\textbf{Observations:} \emph{Quantitatively}, we see that in terms of CAC, LINEX is better than baselines in all cases which indicates that on average the LINEX explanations highlight the important features characterizing the entire class, making them more stable. This is also verified by looking at $\Upsilon$ and CI metrics where LINEX is similar or better than others. For GI and INFD, the results are more evenly spread which implies that LINEX's key advantage is obtaining stable and unidirectional explanations that are faithful to a similar degree. Ablation studies showing superiority of LINEX over MeLIME on the FMNIST dataset where we have significantly higher INFD than MeLIME are given in Suppl. J.

An interesting observation is that when it comes to the stability metrics (CI and CAC) and unidirectionality LINEX with even random perturbation model is better than MeLIME in some cases. This is very promising as it means LINEX could be potentially be trusted without the need to generate realistic perturbations which may be computationally expensive or not even possible.

\emph{Qualitatively}, we see in Figures \ref{fig:fmnist} and \ref{fig:cifar}, that LINEX explanations are more coherent and highlight more salient features compared to MeLIME. Even on the text data we see more reasonable attributions in Table \ref{tab:rot}, where ``masterpiece", ``moving" and ``audacious" are highlighted as the most important words indicative of positive sentiment in the three examples.
We also performed qualitative error analysis on FMNIST where our INFD is much worse than MeLIME and is described in Suppl. I.
%\ref{app:err_analysis}
We see that even where LINEX has high infidelity it invariably still focuses on salient features ignoring superfluous features which may result in lower fidelity but may not be critical for correct identification. The goodness of these features identified by LINEX can be further verified by looking at other metrics such as GI, CAC, CI and $\Upsilon$ in Table \ref{tab:results} where it is either comparable or better than MeLIME.

%\vspace{-0.25cm}
\section{Discussion}
\label{sec:disc}
%\vspace{-0.2cm}
In this paper we have provided a method based on a game theoretic formulation and inspired by the invariant risk minimization principle to provide faithful, stable and unidirectional explanations. We have defined the latter property and argued that it is somewhat of a necessity (may not be sufficient) for recourse. We have theoretically shown that our method has a strong tendency to be stable and unidirectional as we will mostly eliminate features where the black-box models gradient changes abruptly and in other cases choose a conservative value. Empirically, we have verified this where we outperform competitors in majority of the cases on these metrics. Interestingly, in some cases our method provides more stable and unidirectional explanations with just a random perturbation model relative to more expensive methods that use realistic neighbors.

We now discuss a real world use case we tested our method on. We worked with a large financial institution to explain the fraud detection model they had built. The Association of Certified Fraud Examiners (ACFE) claims that roughly 5\% of a companies revenue is lost to fraud every year. Thus, catching fraud or even non-compliance is extremely important for any organization. Their model (fraud $=1$
 else $0$) had 
 $\approx$ 91\% accuracy. The inputs to the model were (transactional) invoices and details corresponding to those invoices such as vendor name, invoice amount, purchase order (PO) or not, vendor address, commodity code, country perception indices (CPI), etc. Since, one of the focuses is to reduce false positives accurate explanations are important. We applied LINEX to this setting to explain why certain invoices were classified as fraudulent. The experts found that in majority of the cases (913 out of 1000) the attributions of LINEX especially in terms of sign made sense. For instance, low CPI implies high risk and so LINEX gave a negative coefficient for this feature for most examples, while LIME gave a positive coefficient for many instances. Going forward their plan is to incorporate such capabilities into their workflow to further improve fraud detection precision.

In the future, it would be worth experimenting with more varied strategies to form environments and if possible find the optimal ones \citep{envinf}, which may lead to picking even more relevant features that are ``causal" to the local decision.

\section{Summary of the Supplement}
Information about black-box classifier accuracies and realistic perturbation methods used for the datasets are provided in Table 3.
Suppl. A has run time comparisons. Suppl. B has proof of Theorem 1. Suppl. C discusses theoretical behavior of LINEX for more than two environments. Suppl. D has dataset details and hyperparameter specifications. Suppl. E has experiments with different hyperparameter combinations (including more than 2 environments). Suppl. F has additional examples of text data attributions. Suppl. G has example feature attributions with image data. Suppl. H has SHAP results. Suppl. I, J and K has error analysis and ablation studies. Suppl. L has additional synthetic experiments. Suppl. M discusses sensitivity to $\gamma$. Suppl. N demonstrates convergence of LINEX. Suppl. O discusses limitations of LINEX. Figure \ref{fig:IRIS_mean_median_mom_smoothing} depicts SLIME variants using median and median of means which turn out to be worse than using the (typical) mean.

\bibliography{ExAbsent}
\bibliographystyle{plainnat}

\clearpage
\appendix
%\section*{Appendix}

\begin{table*}
\small
\caption{Datasets, models and neighborhoods used in experiments. RF$\rightarrow$ Random Forest, NN$\rightarrow$ Neural Network, ResNet$\rightarrow$ Residual Network and NB$\rightarrow$ Naive Bayes.}
\label{tab:datasets}
\centering
\begin{tabular}{lllp{5.5cm}} 
\toprule
Dataset & Modality & Black-box model acc/$R^2$, & Realistic neighborhood creation methods\\
\midrule
IRIS~%\citep{uci}
& tabular & RF classifier, 93\% & KDEGen \citep{melime}, RF \citep{maple}\\  
MEPS~%\citep{MEPS}
& tabular & RF regressor, $0.325$ & \citep{maple} \\  
FMNIST~%\citep{fmnist}
& image & NN classifier, 87\% & VAEGen \citep{melime}\\ 
CIFAR10~& image & ResNet18, 95\% & VAEGen \citep{melime}\\
Rotten Tomatoes~%\citep{rotten}
& text & NB classifier, 75\% & Word2VecGen \citep{melime}\\ 
\bottomrule
\end{tabular}
\vspace{-0.2cm}
\end{table*}

\section{Efficiency of LINEX}
\label{app:linex_efficiency}

 It is important to note that the query complexity (i.e. number of times we query the black box to obtain an explanation) of LINEX is the same as that of LIME since the union of the environments is the same as a LIME perturbation neighborhood. This is important in todays cloud-driven world where models may exist on different cloud platforms and posthoc explanations are an independent service where each call to the model has an associated cost. In terms of running time for two environments, convergence was fast and running time was approximately 2.5 times that of LIME (LINEX took 2.5 seconds on IRIS for 30 examples as opposed to 1 second by LIME, LINEX took 47 seconds on MEPS for 500 examples as opposed to 18 seconds by LIME), which is very similar to Smoothed LIME (S-LIME) (took 2.3 seconds on IRIS and 40 seconds on MEPS) that we still outperform in majority of the cases. 
 
 Realistic neighborhood generation can be time consuming especially for MeLIME since generators have to be trained which may take up to an hour using a single GPU for datasets such as FMNIST. After the generator is trained and neighborhood sampled MeLIME takes the same amount of time as LIME since the model fitting procedure is the same. MAPLE took 1.5 seconds for the IRIS dataset for 30 examples and 27 seconds for 500 MEPS examples.
 
 A way to further speed up LINEX would be to implement it through \emph{embarrassing parallelism} which can easily be done across explanations. This will prevent scaling of the running time in the number of examples when many explanations are needed. The setting with many explanations is anyway where we would need efficiency because if only few explanations were desired the slightly higher running time of LINEX would not be an issue.

% \section{Additional Details on Figure \ref{fig:intro}}
% We use the unpenalized linear regression variant of LIME and LINEX here since there are only 2 features. 
% We notice that LIME feature attributions along the $x$ direction change significantly for the two neighborhoods with kernel width $1.0$ and $2.0$, due to the presence of the steep falling edge along that direction. LINEX for both cases (kernel width $1.0$, and with $2.0$) provides a more intuitive and stable explanation by keeping the attribution along the x direction small. This behavior also agrees with the theory which suggests that LINEX will provide conservative feature attributions (small or zero) along directions where the attributions of environment-wise explanations vary significantly.

\section{{Proof of Theorem 1}}

{Expanding on the proof sketch provided in the main paper we now provide a case wise analysis to prove Theorem 1.}

{$\bullet\; \bm{w}_1^{*}=\bm{w}_2^{*}$: If the optimal solutions to both environments in the convex set $[-\gamma,\gamma]^d$ are the same, then in the first iteration itself where we fit to the first environment we would have reached the optimal solution to our problem where $\tilde{\bm{w}}_1=\bm{w}_1^{*}$. This is because in the second iteration where we fit the second environment to the residual from the previous fit $\tilde{\bm{w}}_2=\bm{0}$ and the algorithm would terminate. This would imply the output of algorithm 1 would be $\bm{w}=\bm{w}_1^{*}$.}

{$\bullet$  $\bm{w}_1^{*}\not=\bm{w}_2^{*}$: When the optimal solutions for the two environments are not equal we consider the following two cases:
\begin{itemize}
    \item \textbf{Opposite  sign attributions:} If  the $i^{th}$ component of $\bm{w}_1^{*}$ and $\bm{w}_2^{*}$ have opposite signs, then the $i^{th}$ components of the  ensemble predictor, $\tilde{w}_{1i}$ and $\tilde{w}_{2i}$ are both at the boundary   $\gamma$ and $-\gamma$ respectively if $\tilde{w}_{1i}>0$. This is because both try to push the ensemble (i.e. their sum) towards the sign they have where eventually they reach the boundary $\pm\gamma$ and have no incentive to deviate. Any deviation from these values will lead to a higher least squares error in their environment, thus making this a NE.
    \item \textbf{Same  sign attributions:} If  the $i^{th}$ component of $\bm{w}_1^{*}$ and $\bm{w}_2^{*}$ have same signs, then the $i^{th}$ component of ensemble predictor constructed from the NE is set to the least squares attribution with a smaller absolute value, i.e., ${w}_{i} =w_{1i}^{*}$, where $|w_{1i}^{*}|\leq |w_{2i}^{*}|$.  Without loss of generality assume $0<w_{1i}^{*}<w_{2i}^{*}$, the attribution of the environments' predictors in NE, then $\tilde{w}_{1i}$ and $\tilde{w}_{2i}$ have opposite signs, i.e., $\tilde{w}_{2i} = \gamma$ and $\tilde{w}_{1i} = w_{1i}^{*}- \gamma$ where the ensemble predictor for the $i^{\text{th}}$ component would be $w_i=\tilde{w}_{1i}+\tilde{w}_{2i}=w_{1i}^{*}- \gamma+\gamma=w_{1i}^{*}$, since any deviation from this would lead to a worse least squares loss for the corresponding environment.  This shows that ensemble predictor is conservative and selects the smaller least squares attribution. 
\end{itemize}}

\section{Behavior for More than Two Environments}
\label{sec:m2envs}
Given Assumptions 1 and 2 we now discuss the behavior of our method for more than two environments. If the number of environments is odd, then using similar logic to that discussed in the proof sketch one can see that the feature attribution would be equal to the median of the feature attributions across all the environments. Essentially, all environments with optimal least squares attributions above the median would be at $+\gamma$, while those below it would be at $-\gamma$. The one at the median would remain so with no incentive for any environment to alter its attribution making it a NE. This is a stable choice that is also likely to be faithful as we have no more information to decide otherwise. On the other hand if we have an even number of environments the final attribution in this case depends on the middle two environments in the same manner as the two environment case proved in Theorem \ref{thm1}. Thus, if the optimal least squares attributions of the middle two environments have opposite sign, then the final attribution is zero, else its the lower of the two attributions in terms of the numerical value. This happens because the NE for the other environments is $\pm\gamma$ depending on if their optimal least squares attributions are above/below those of the middle two environments. This again is a stable and likely to be faithful choice, where also unidirectionality is preferred.

\section{Experimental Details}
\label{app:exp_details}

% Dataset characteristics (size, dimensionality, number of classes etc.)

\subsection{Dataset Details and Hyperparameter Specifications}
We describe the datasets and the hyperparameters used for each. We set perturbation neighborhood sizes 10 (IRIS), 500 (MEPS), 100 (FMNIST-random), 500 (FMNIST-realistic), 100 (CIFAR10-random), 500 (CIFAR10-realistic), 100 (Rotten tomatoes) for generating local explanations. We also use 3, 10, 10, 10, 5 as exemplar neighborhood sizes to compute GI, CI and $\Upsilon$ metrics for the five datasets respectively. We also use $5-$sparse explanations for all cases except FMNIST and CIFAR10 with realistic perturbations where we follow MeLIME and generate a dense explanation using ridge penalty with penalty multiplier value of $0.001$. The $\ell_\infty$ bound $\gamma$ in Algorithm \ref{algo:LINEX} is set as the maximum absolute value of linear coefficient computed by running LIME/MeLIME in the two individual environments. Please look at IRIS dataset first since it contains some of the common details used across others.

\paragraph{IRIS (Tabular):} This dataset has 150 instances with four numerical features representing the sepal and petal width and length in centimeters. The task is to classify instances of Iris flowers into three species: \textit{setosa}, \textit{versicolor}, and \textit{virginica}. A random forest classifier was trained with a train/test split of 0.8/0.2 and yielded a test accuracy of 93\%. We provide local explanations for the prediction probabilities for class \textit{setosa}. For both random and realistic perturbations, we use a perturbation neighborhood size of $n$. For random perturbations, we used the same approach followed by LIME and sample from a Gaussian around each data point.
Realistic perturbations (with the same number $n$) were generated using KDEGen~\cite{melime}, a kernel density estimator (KDE) with the Gaussian kernel fitted on the training dataset to sample data around a sample point. For both random and realistic perturbations, we weight the neighborhood using a Gaussian kernel of width $\tau\sqrt{d}$, where $d$ is the dimension of the feature vector and $\tau=\{0.05, 0.1, 0.25, 0.5, 0.75\}$, and this corresponded to kernel widths $\{0.1,0.2,0.5,1.0,1.5\}$. We also perform a weighted version of realistic selection where we use MAPLE~\cite{maple} to assign weights to all the test examples and pick the top $n$ weighted examples to use as the perturbation neighborhood. For random/realistic perturbations and realistic selection, the corresponding environments (of size $n$ each) for LINEX are created by drawing $k$ bootstrap samples where $k=\{2,3,4,5\}$ in our experiments. We test for $n=\{10,20,30,40,50\}$ with this dataset.
% We tested environment sizes of For random environments, weWe created  

\paragraph{Medical Expenditure Panel Survey (Tabular):}
The Medical Expenditure Panel Survey (MEPS) dataset is produced by the US Department of Health and Human Services. It is a collection of surveys of families of individuals, medical providers, and employers across the country. We choose \textit{Panel  19}  of the  survey which consists of a cohort that started in 2014 and consisted of data collected over $5$ rounds of interviews over $2014-2015$. The outcome variable was a composite utilization feature that quantified the total number of healthcare visits of a patient. The features used included demographic features, perceived  health status, various diagnosis, limitations, and socioeconomic factors. We filter  out records that had a utilization (outcome) of 0, and log-transformed the outcome for modeling. These pre-processing steps resulted in a dataset with $11136$ examples and $32$ categorical features. We train a random forest regressor that has a test $R^2$ of $0.325$ in this dataset. We provide local explanations of the predictions. With MEPS, we do not use realistic perturbations since KDE and VAE generators do not work well with categorical data. Otherwise the setting is similar as IRIS data, except that we use $n=\{50,100,200,300,400,500\}$. The kernel widths in this case were $\{0.28,0.57,1.41,2.83,4.24\}$. We use $k=\{2,3,4,5\}$ for this dataset.

\paragraph{Fashion MNIST (Images):}
This dataset has $28\times28$ grayscale images of fashion articles with 60,000 train and 10,000 test samples. 
The task is to classify these into 10 classes corresponding to coat, shoe, and so on. 
A neural network trained with test accuracy of 87\%. Explanations are generated for the prediction probabilities corresponding to the predicted class for each example. We choose 1000 test examples to generate explanations.
Realistic perturbations were generated using VAEGen~\cite{melime}, a Variational Auto Encoder (VAE) fitted on the training dataset. For random perturbations, we chose $n$ from $\{50,100,200,300,400,500\}$ and kernel sizes were $\{0.43, 0.85, 2.14, 4.27, 6.41\}$. For realistic perturbations we chose $n$ from $\{250, 500, 750, 1000\}$ and the kernel widths were  $\{1.4,2.8,7.0,14.0,21.0\}$. We use $k=\{2,3,4,5\}$ for this dataset.

\paragraph{CIFAR10 (Images):} This dataset has 32 × 32 colored images belonging to 10 different classes. The dataset has
50,000 train and 10,000 test samples. The task is to classify these into 10 classes corresponding to dog, bird, and so on. A residual network with 18 units (ResNet18) was trained with test accuracy of $\sim$ 95\%. Explanations
are generated for the prediction probabilities corresponding to the predicted class for each example. We choose 1000 test examples to generate explanations. Realistic perturbations were generated using VAEGen~\cite{melime}, a Variational Auto Encoder (VAE) fitted on the training dataset. For random perturbations, we chose $n$ from $\{50,100,200,300,400,500\}$ and kernel sizes were $\{0.43, 0.85, 2.14, 4.27, 6.41\}$. For realistic perturbations we chose $n$ from $\{250, 500, 750, 1000\}$ and the kernel widths were  $\{1.4,2.8,7.0,14.0,21.0\}$. We use $k=\{2,3,4,5\}$ for this dataset.

\paragraph{Rotten Tomatoes (Text):} 
This dataset contains 10662 movie reviews from rotten tomatoes website along with their sentiment polarity, i.e., positive or negative reviews and the task is to classify the sentiment of the reviews into positive or negative. 
The review sentences were vectorized using CountVectorizer and TfidfTransformer and a sklearn Naive Bayes classifier was fitted on training dataset which yielded a test accuracy of 75\%. Explanations are generated for the prediction probabilities corresponding to the predicted class for each example.
Realistic perturbations were generated using Word2VecGen~\cite{melime}, wherein  word2vec embeddings are first trained using the training corpus and new sentences are generated by randomly replacing a sentence word whose distance in the embedding space lies within the radius of the neighbourhood. For both random and realistic perturbations, $n$ was chosen from $\{25, 50, 75, 100\}$. The kernel sizes were $\{0.42, 1.06, 2.12, 3.18\}$ for random perturbations (kernel size $0.21$ resulted in numerical issues), and $\{0.21, 0.42, 1.06, 2.12, 3.18\}$ for realistic perturbations. We use $k=\{2,3,4,5\}$ for this dataset.

\section{Results with All Datasets and Hyperparameter Combinations for Random and Realistic Perturbations}
\label{app:all_res}
We present results with all hyperparameter combinations for random and realistic perturbations. Results for LIME with random perturbations (LIME), smoothed LIME (S-LIME), LINEX with random perturbations (LINEX/rand), MeLIME (MeLIME), LINEX with MeLIME-like realistic neighborhoods (LINEX/real), MAPLE (MAPLE), LINEX with MAPLE-like realistic neighborhoods (LINEX/mpl) are presented in figures \ref{fig:CI_cnt_all}-\ref{fig:INFD_kernel_width_all}. The legend for these figures are given in Figure \ref{fig:legend}. 

For the five datasets, we perform ablations by varying one of perturbation neighborhood size (Figures \ref{fig:CI_cnt_all}-\ref{fig:INFD_cnt_all}), number of environments (Figures \ref{fig:CI_num_envs_all}-\ref{fig:INFD_num_envs_all}), and kernel width (Figures \ref{fig:Upsilon_kernel_width_all}-\ref{fig:INFD_kernel_width_all}). Each point in these figures are averaged over all possible values for the two parameters that are not ablated. For example, each point in Figure \ref{fig:CI_cnt_all} is averaged over all possible values for kernel widths and number of environments for a given perturbation neighborhood size. Standard errors of the mean are also plotted in the same color with lesser opacity. Lower values of Infidelity (INFD), Generalized Infidelity (GI), Coefficient Inconsistency (CI) are better whereas for Unidirectionality ($\Upsilon$) and Class Attribution Consistency (CAC) higher values are better.

Figures \ref{fig:CI_cnt_all}-\ref{fig:INFD_cnt_all} show ablations with respect to perturbation neighborhood sizes. Considering all datasets, the stability/recourse metrics (CI, $\Upsilon$, CAC) are clearly better for LINEX compared to its counterparts. For LINEX methods (LINEX/rand, LINEX/real, LINEX/mpl), the metrics get better or stays approximately the same generally as perturbation neighborhood size increases keeping with the intuition that larger perturbation neighborhood sizes should produce explanations that are more stable in the exemplar neighborhood. $\Upsilon$ for FMNIST and CIFAR10 are already good for small perturbation neighborhood sizes possibly because of the quality of MeLIME perturbations. 

Turning to the fidelity metrics (INFD and GI) in tabular datasets, we see that the results still favor LINEX, but less heavily compared to the stability/recourse metrics. This is in line with what we observe in Table \ref{tab:results}. In IRIS and MEPS, LINEX is close to or outperforms the corresponding baselines in the GI measure (except for LINEX/mpl with MEPS). This gap closes a bit with INFD, but we note that GI is a better measure since it estimates how faithful explanations are in a exemplar neighborhood. With the text dataset, LINEX variants are slightly more favored, whereas with the image dataset, the baselines have an edge.

Considering Figures \ref{fig:CI_num_envs_all}-\ref{fig:INFD_num_envs_all}, we see that variations are less stark with respect to number of environments overall for LINEX variants. Note that except for S-LIME, other baselines do not use multiple environments, and hence stay constant. The slight variations in MAPLE are due to the effect of random seeds. In the stability/recourse metrics, again LINEX variants emerge as the clear winner across datssets. With the faithfulness metrics (GI and INFD), in the text dataset, LINEX variants generally perform better, whereas the baselines have a better performance in the image dataset.

Finally, we study the variation of the performance measures with respect to kernel width in Figures \ref{fig:Upsilon_kernel_width_all}-\ref{fig:INFD_kernel_width_all}. We see that the stability/recourse metrics flatten out in all cases with large kernel widths. This behaviour holds true for faithfulness metrics (GI and INFD) as well except in some cases. GI and INFD measures also increase before they flatten out since the fit becomes poorer at larger kernel widths. The stability/recourse metrics become better or remain approximately the same since explanations generally improve or preserve their stability properties as kernel widths increase. Note that very small kernel widths can lead to unexpected behavior that does not fit the trend as seen with the tabular datasets since explanations can become hyper-local. MAPLE and LINEX/mpl stay the same at different kernel widths since they use a different weighting scheme. As with other ablations, we see that LINEX variants are similar or better in stability/recourse metrics overall, while with the faithfulness metrics the results are more mixed.

Note that we do not compute MeLIME perturbations with MEPS since KDE and VAE generators do not work well with categorical data, and do not use compute CAC since the task is regression. Further, the features used in explanations for different test examples are not comparable for random perturbations with FMNIST, CIFAR10 and Rotten Tomatoes, hence we cannot compute CAC for those cases as well. This explains the missing curves/plots.

\begin{figure}[ht]
\centering
\includegraphics[width=0.25\columnwidth]{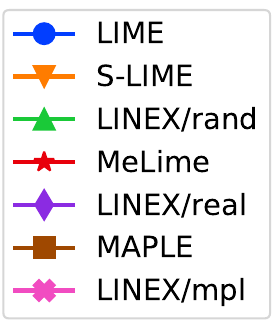}
\caption{Legend for figures \ref{fig:CI_cnt_all}-\ref{fig:INFD_kernel_width_all}}
\label{fig:legend}
\end{figure}

%%%%%%%%%%%%
% \begin{tabular}{}

\begin{figure}[!htb]
\centering
    \begin{subfigure}[b]{0.24\textwidth}
        \vspace{0pt}
        \centering
        \captionsetup{justification=centering}
        \includegraphics[width=\textwidth]{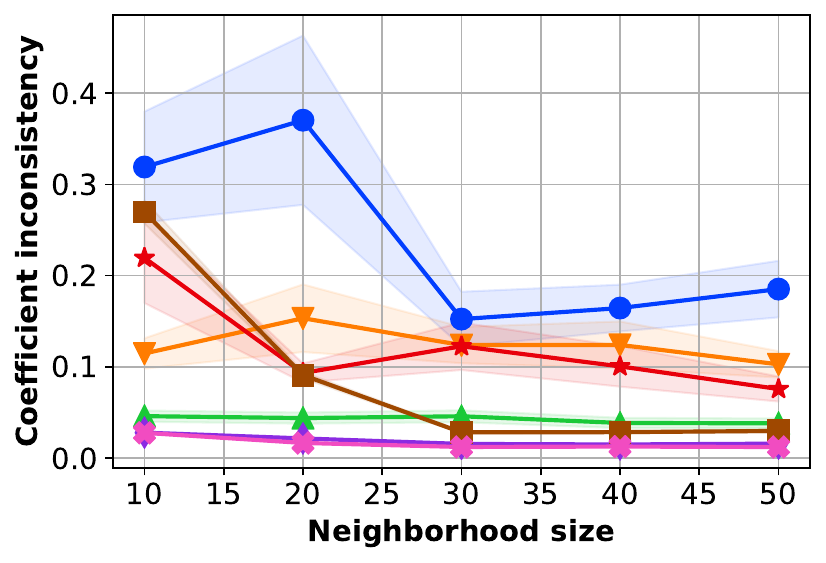}
        \caption{Iris}
        \label{fig:iris_rfc_cnt_CI}
    \end{subfigure}
    \begin{subfigure}[b]{0.24\textwidth}
        \vspace{0pt}
        \centering
        \captionsetup{justification=centering}
        \includegraphics[width=\textwidth]{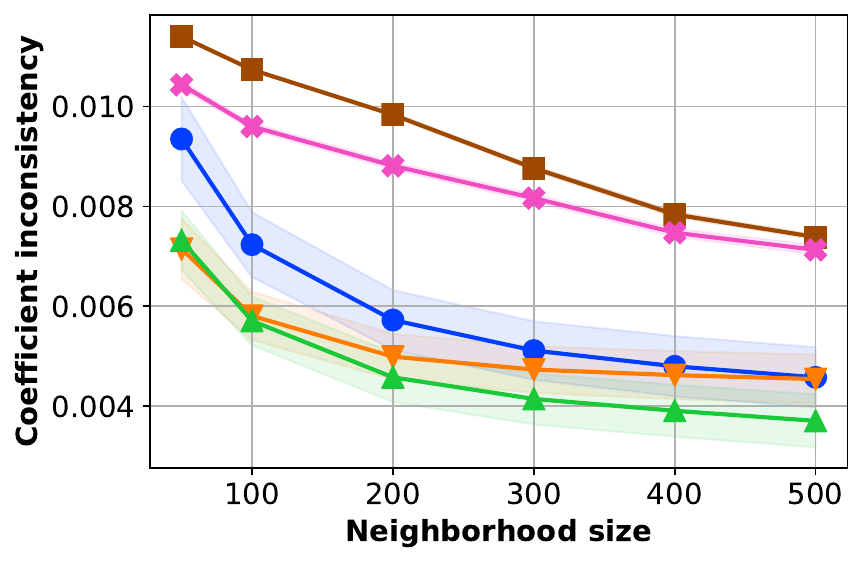}
        \caption{MEPS}
        \label{fig:MEPS_rfr_cnt_CI}
    \end{subfigure}
    \begin{subfigure}[b]{0.24\textwidth}
        \vspace{0pt}
        \centering
        \captionsetup{justification=centering}
        \includegraphics[width=\textwidth]{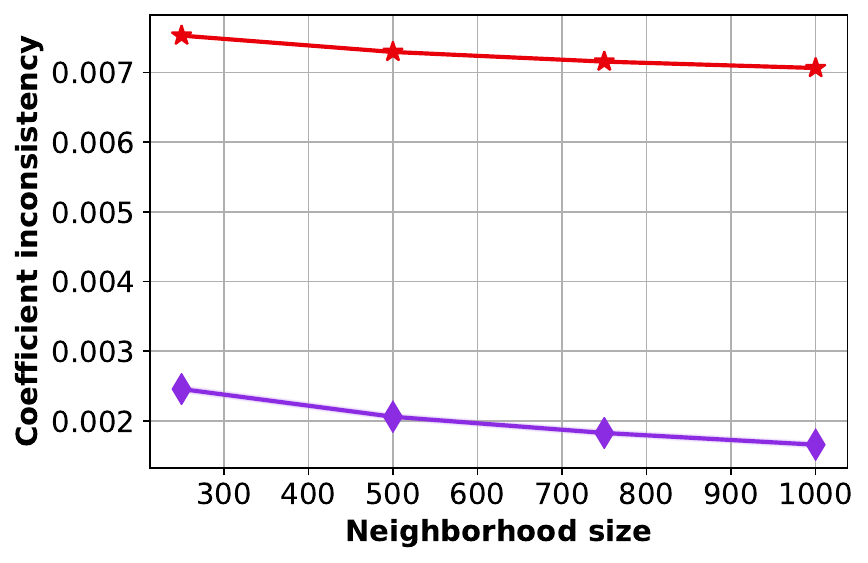}
        \caption{FMNIST}
        \label{fig:fmnist_nn_mp_cnt_CI}
    \end{subfigure}
    \begin{subfigure}[b]{0.24\textwidth}
        \vspace{0pt}
        \centering
        \captionsetup{justification=centering}
        \includegraphics[width=\textwidth]{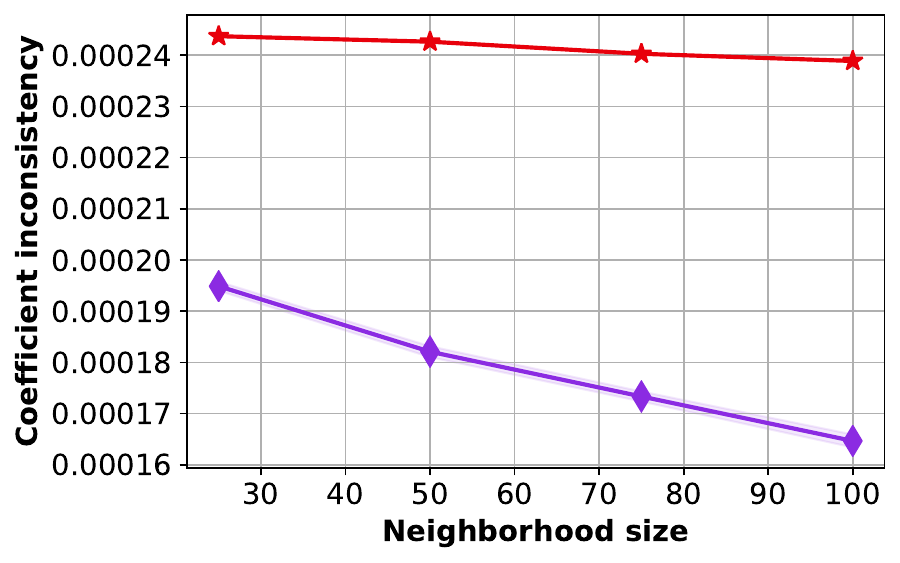}
        \caption{Rotten Tomatoes}
        \label{fig:rotten_mnb_mp_cnt_CI}
    \end{subfigure}
    \begin{subfigure}[b]{0.24\textwidth}
        \vspace{0pt}
        \centering
        \captionsetup{justification=centering}
        \includegraphics[width=\textwidth]{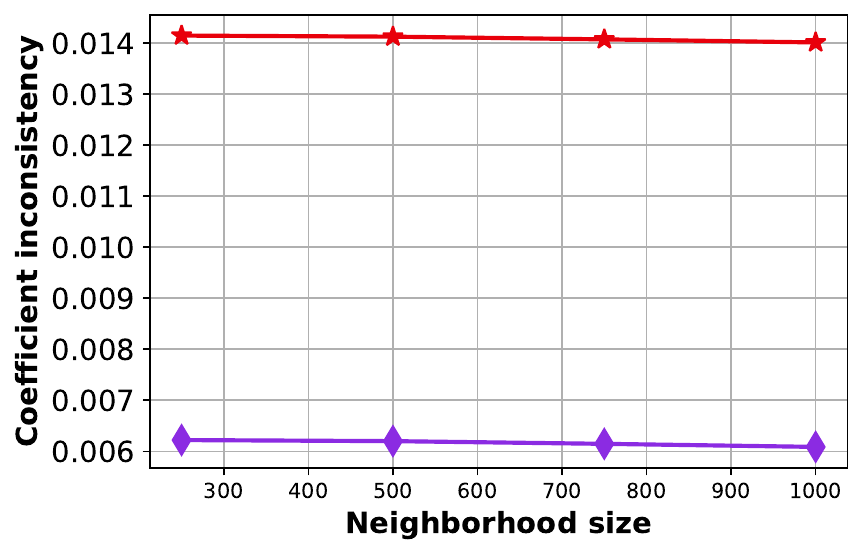}
        \caption{CIFAR10}
        \label{fig:cifar_mp_cnt_CI}
    \end{subfigure}
\caption{Coefficient inconsistency (CI) vs. Perturbation neighborhood size.}
\vspace{-0.5cm}
\label{fig:CI_cnt_all}
\end{figure}

\begin{figure}[!htb]
\centering
    \begin{subfigure}[b]{0.24\textwidth}
        \vspace{0pt}
        \centering
        \captionsetup{justification=centering}
        \includegraphics[width=\textwidth]{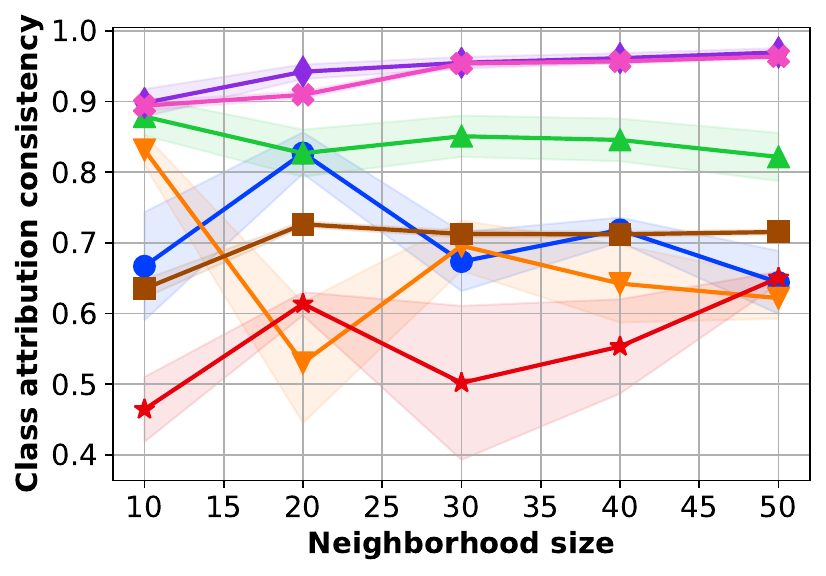}
        \caption{Iris}
        \label{fig:iris_rfc_cnt_CAC}
    \end{subfigure}
    \begin{subfigure}[b]{0.24\textwidth}
        \vspace{0pt}
        \centering
        \captionsetup{justification=centering}
        \includegraphics[width=\textwidth]{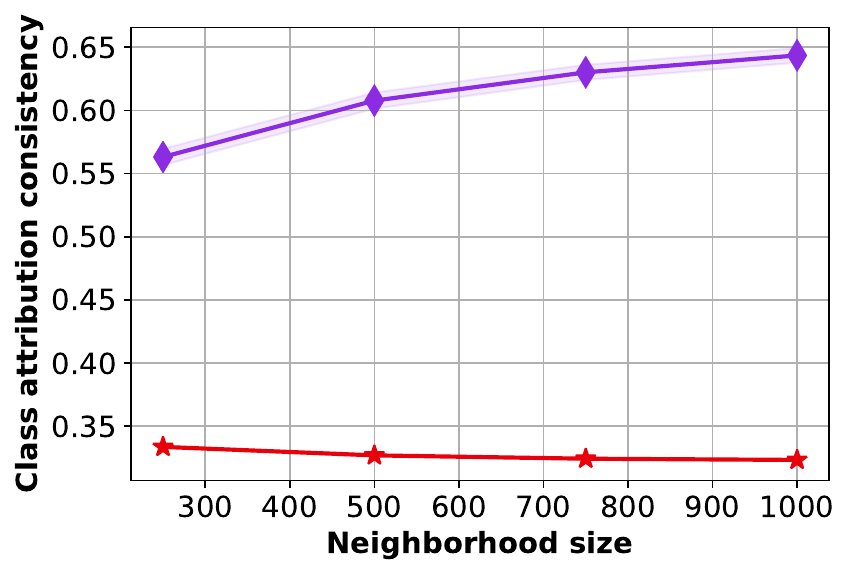}
        \caption{FMNIST}
        \label{fig:fmnist_nn_mp_cnt_CAC}
    \end{subfigure}
    \begin{subfigure}[b]{0.24\textwidth}
        \vspace{0pt}
        \centering
        \captionsetup{justification=centering}
        \includegraphics[width=\textwidth]{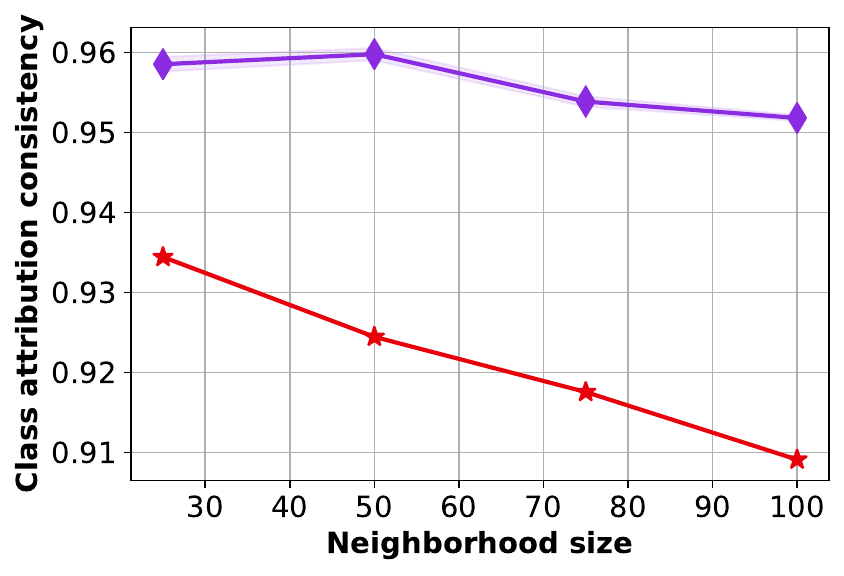}
        \caption{Rotten Tomatoes}
        \label{fig:rotten_mnb_mp_cnt_CAC}
    \end{subfigure}
\caption{Class attribution consistency (CAC) vs. Perturbation neighborhood size.}
\vspace{-0.5cm}
\label{fig:CAC_cnt_all}
\end{figure}

\begin{figure}[!htb]
\centering
    \begin{subfigure}[b]{0.24\textwidth}
        \vspace{0pt}
        \centering
        \captionsetup{justification=centering}
        \includegraphics[width=\textwidth]{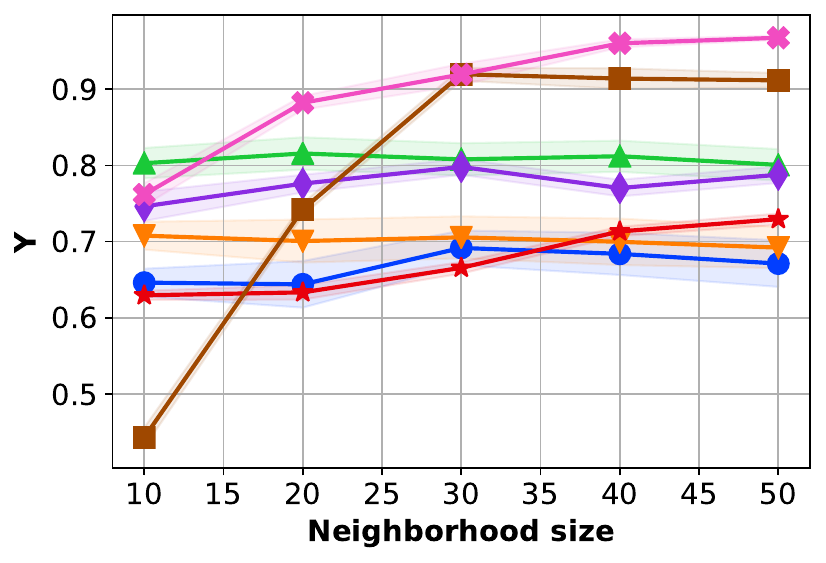}
        \caption{Iris}
        \label{fig:iris_rfc_cnt_Upsilon}
    \end{subfigure}
    \begin{subfigure}[b]{0.24\textwidth}
        \vspace{0pt}
        \centering
        \captionsetup{justification=centering}
        \includegraphics[width=\textwidth]{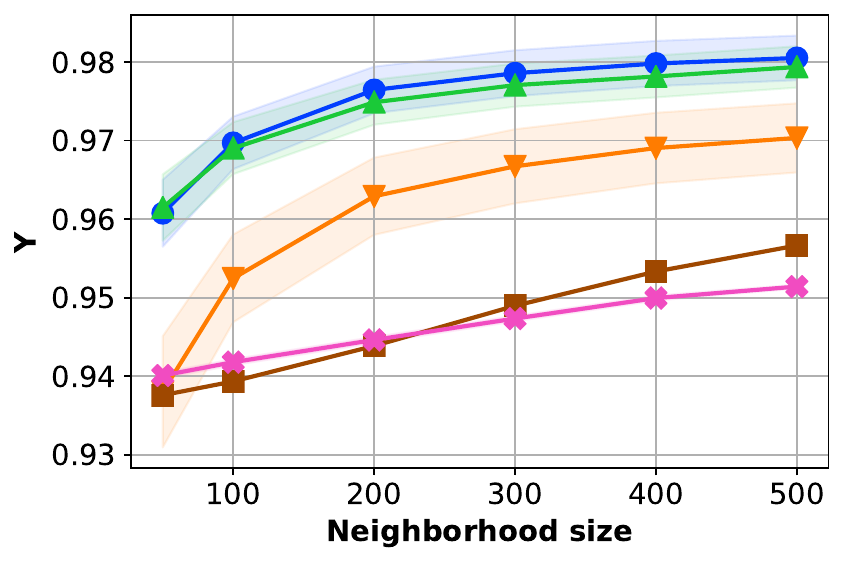}
        \caption{MEPS}
        \label{fig:MEPS_rfr_cnt_Upsilon}
    \end{subfigure}
    \begin{subfigure}[b]{0.24\textwidth}
        \vspace{0pt}
        \centering
        \captionsetup{justification=centering}
        \includegraphics[width=\textwidth]{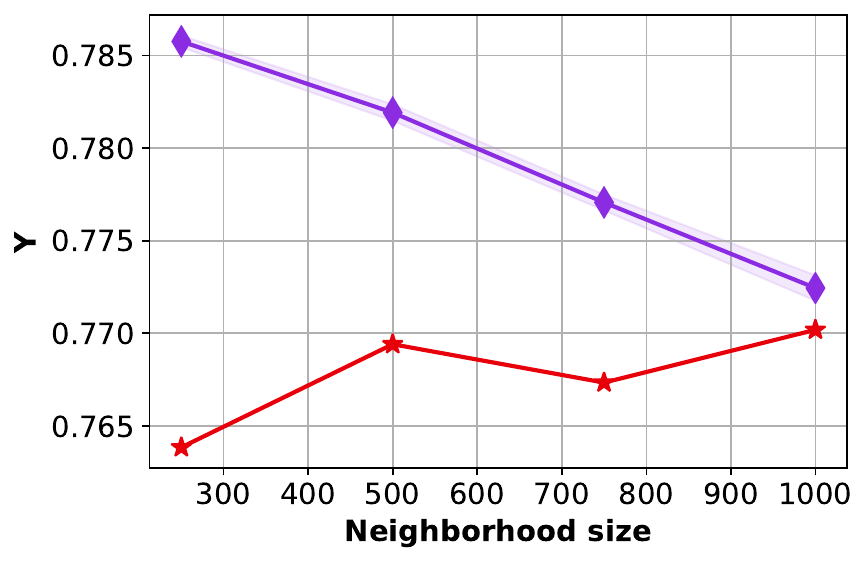}
        \caption{FMNIST}
        \label{fig:fmnist_nn_mp_cnt_Upsilon}
    \end{subfigure}
    \begin{subfigure}[b]{0.24\textwidth}
        \vspace{0pt}
        \centering
        \captionsetup{justification=centering}
        \includegraphics[width=\textwidth]{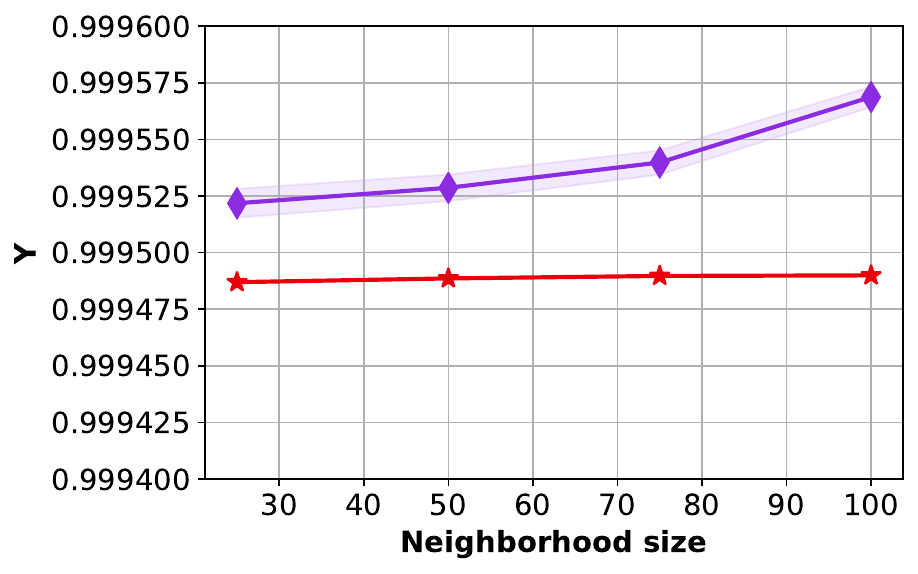}
        \caption{Rotten Tomatoes}
        \label{fig:rotten_mnb_mp_cnt_Upsilon}
    \end{subfigure}
    \begin{subfigure}[b]{0.24\textwidth}
        \vspace{0pt}
        \centering
        \captionsetup{justification=centering}
        \includegraphics[width=\textwidth]{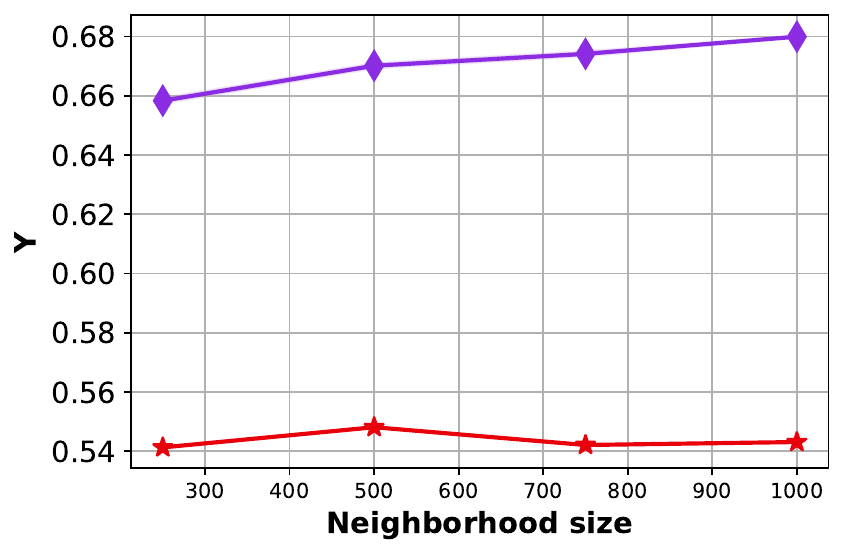}
        \caption{CIFAR10}
        \label{fig:cifar_mp_cnt_Upsilon}
    \end{subfigure}
\caption{Unidirectionality ($\Upsilon$) vs. Perturbation neighborhood size.}
\vspace{-0.5cm}
\label{fig:Upsilon_cnt_all}
\end{figure}
\begin{figure}[!htb]
\centering
    \begin{subfigure}[b]{0.24\textwidth}
        \vspace{0pt}
        \centering
        \captionsetup{justification=centering}
        \includegraphics[width=\textwidth]{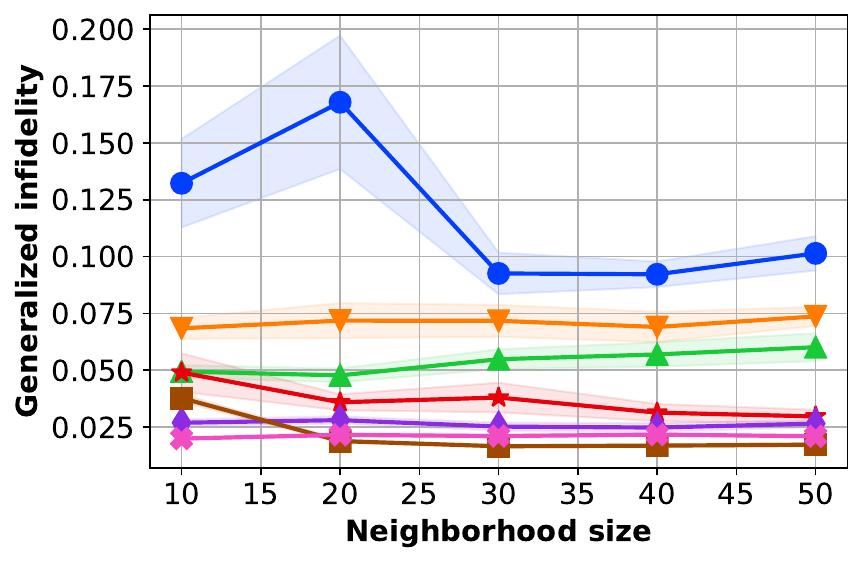}
        \caption{Iris}
        \label{fig:iris_rfc_cnt_GI}
    \end{subfigure}
    \begin{subfigure}[b]{0.24\textwidth}
        \vspace{0pt}
        \centering
        \captionsetup{justification=centering}
        \includegraphics[width=\textwidth]{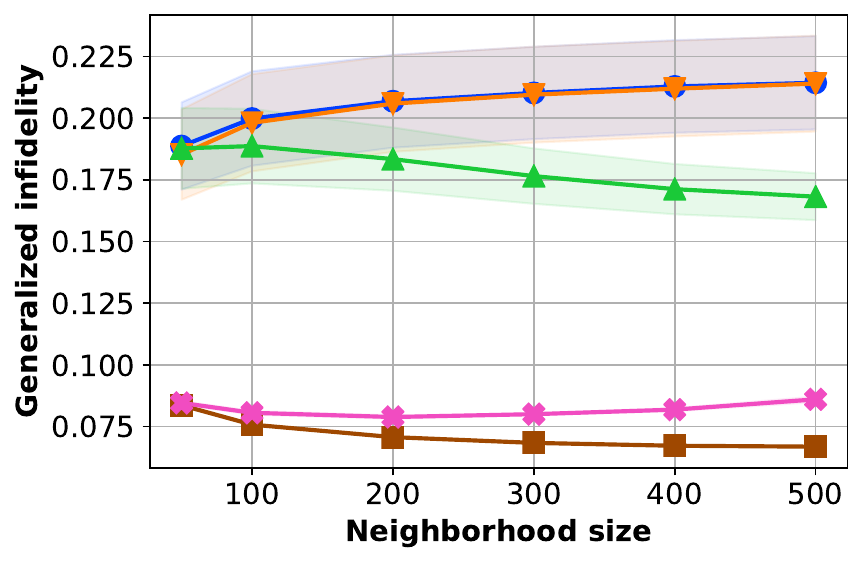}
        \caption{MEPS}
        \label{fig:MEPS_rfr_cnt_GI}
    \end{subfigure}
    \begin{subfigure}[b]{0.24\textwidth}
        \vspace{0pt}
        \centering
        \captionsetup{justification=centering}
        \includegraphics[width=\textwidth]{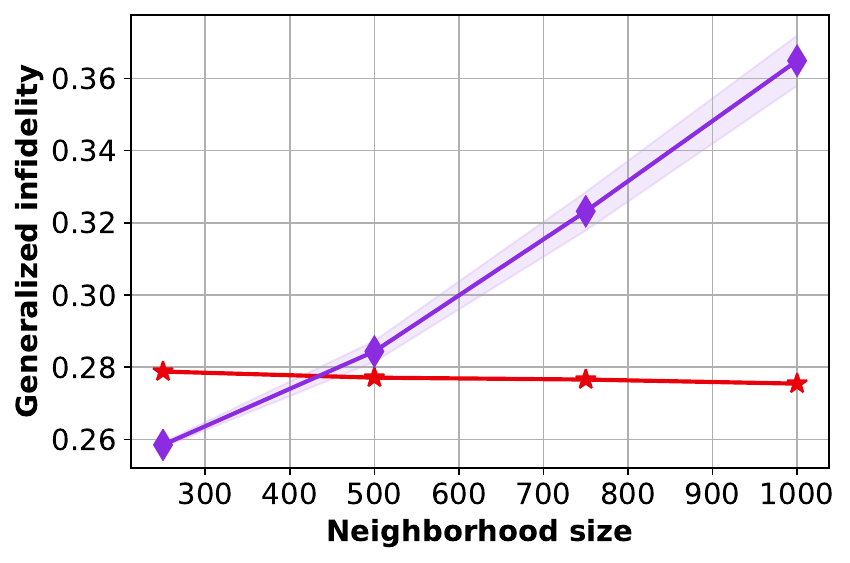}
        \caption{FMNIST}
        \label{fig:fmnist_nn_mp_cnt_GI}
    \end{subfigure}
    \begin{subfigure}[b]{0.24\textwidth}
        \vspace{0pt}
        \centering
        \captionsetup{justification=centering}
        \includegraphics[width=\textwidth]{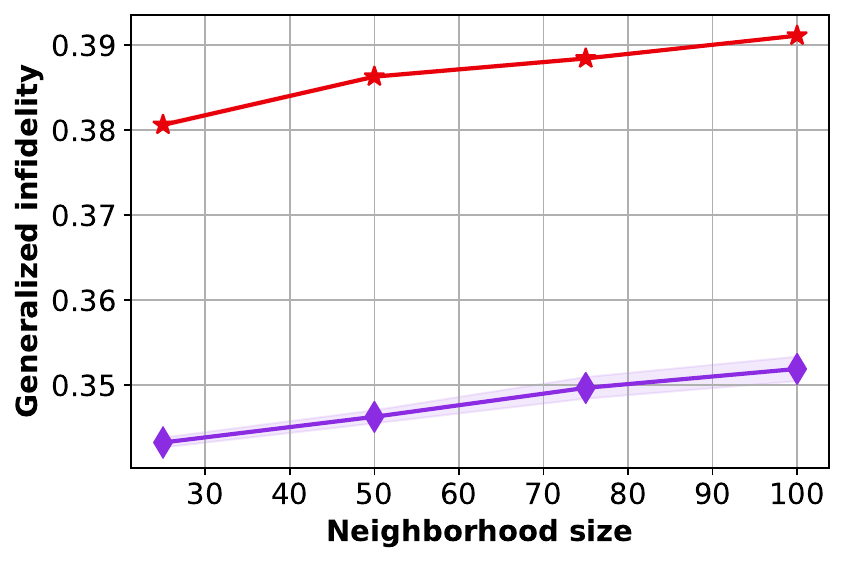}
        \caption{Rotten Tomatoes}
        \label{fig:rotten_mnb_mp_cnt_GI}
    \end{subfigure}
    \begin{subfigure}[b]{0.24\textwidth}
        \vspace{0pt}
        \centering
        \captionsetup{justification=centering}
        \includegraphics[width=\textwidth]{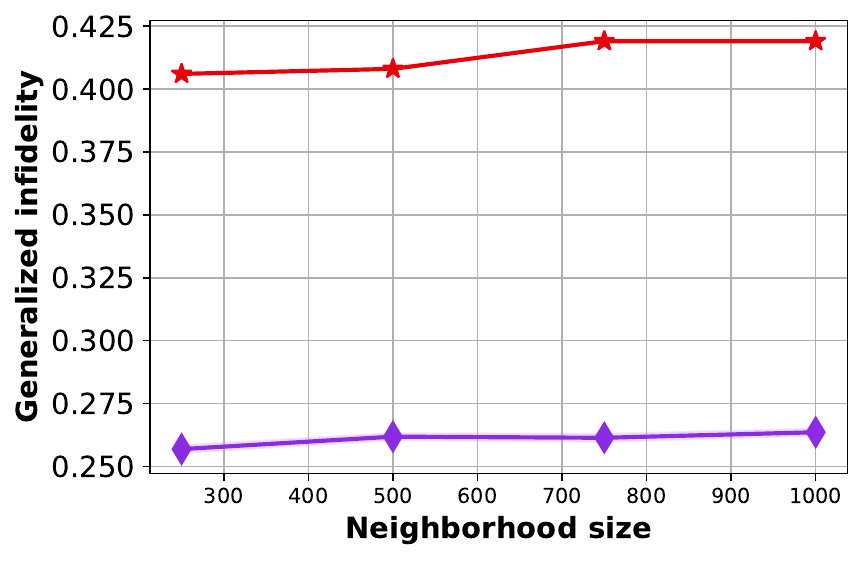}
        \caption{CIFAR10}
        \label{fig:cifar_mp_cnt_GI}
    \end{subfigure}

\caption{Generalized infidelity (GI) vs. Perturbation neighborhood size.}
\vspace{-0.5cm}
\label{fig:GI_cnt_all}
\end{figure}

\begin{figure}[!htb]
\centering
    \begin{subfigure}[b]{0.24\textwidth}
        \vspace{0pt}
        \centering
        \captionsetup{justification=centering}
        \includegraphics[width=\textwidth]{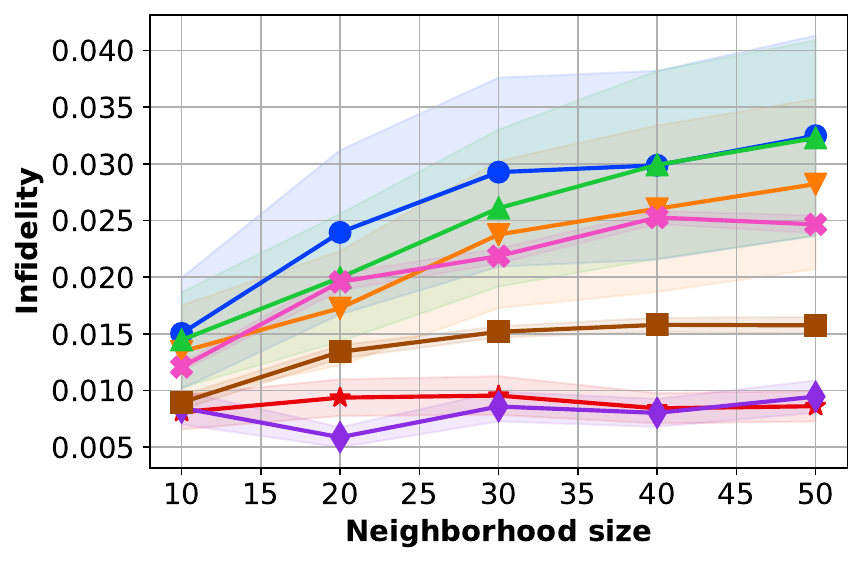}
        \caption{Iris}
        \label{fig:iris_rfc_cnt_INFD}
    \end{subfigure}
    \begin{subfigure}[b]{0.24\textwidth}
        \vspace{0pt}
        \centering
        \captionsetup{justification=centering}
        \includegraphics[width=\textwidth]{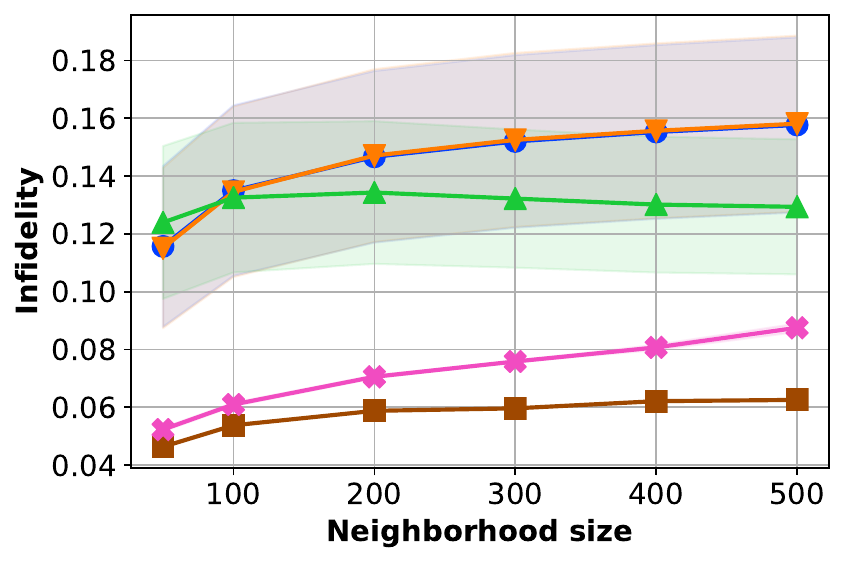}
        \caption{MEPS}
        \label{fig:MEPS_rfr_cnt_INFD}
    \end{subfigure}
    \begin{subfigure}[b]{0.24\textwidth}
        \vspace{0pt}
        \centering
        \captionsetup{justification=centering}
        \includegraphics[width=\textwidth]{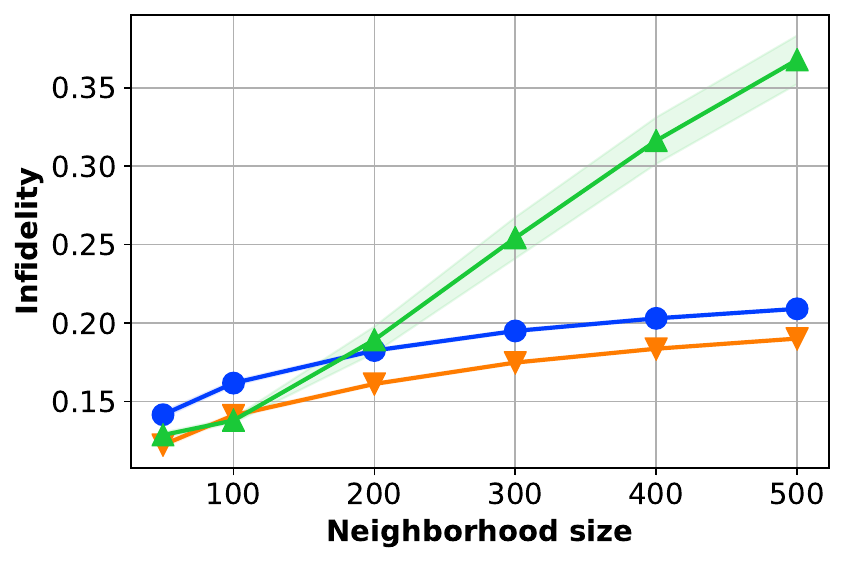}
        \caption{FMNIST (random)}
        \label{fig:fmnist_nn_bp_cnt_INFD}
    \end{subfigure}
    \begin{subfigure}[b]{0.24\textwidth}
        \vspace{0pt}
        \centering
        \captionsetup{justification=centering}
        \includegraphics[width=\textwidth]{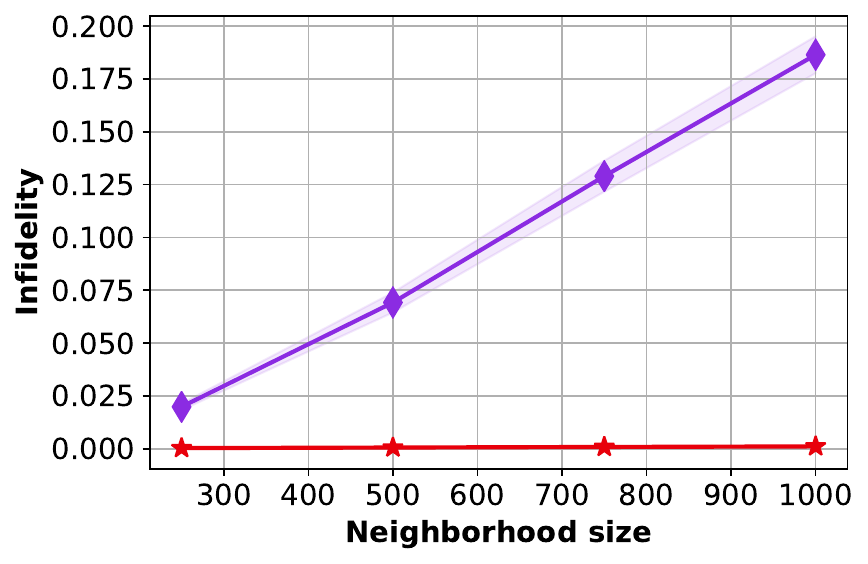}
        \caption{FMNIST (realistic)}
        \label{fig:fmnist_nn_mp_cnt_INFD}
    \end{subfigure}
    \begin{subfigure}[b]{0.24\textwidth}
        \vspace{0pt}
        \centering
        \captionsetup{justification=centering}
        \includegraphics[width=\textwidth]{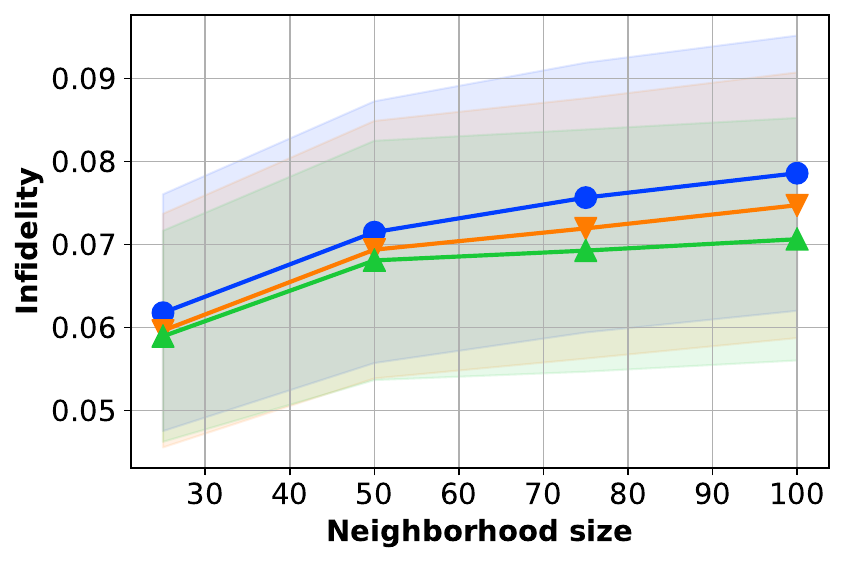}
        \caption{Rotten Tomatoes (random)}
        \label{fig:rotten_mnb_bp_cnt_INFD}
    \end{subfigure}
    \begin{subfigure}[b]{0.24\textwidth}
    \vspace{0pt}
    \centering
    \captionsetup{justification=centering}
    \includegraphics[width=\textwidth]{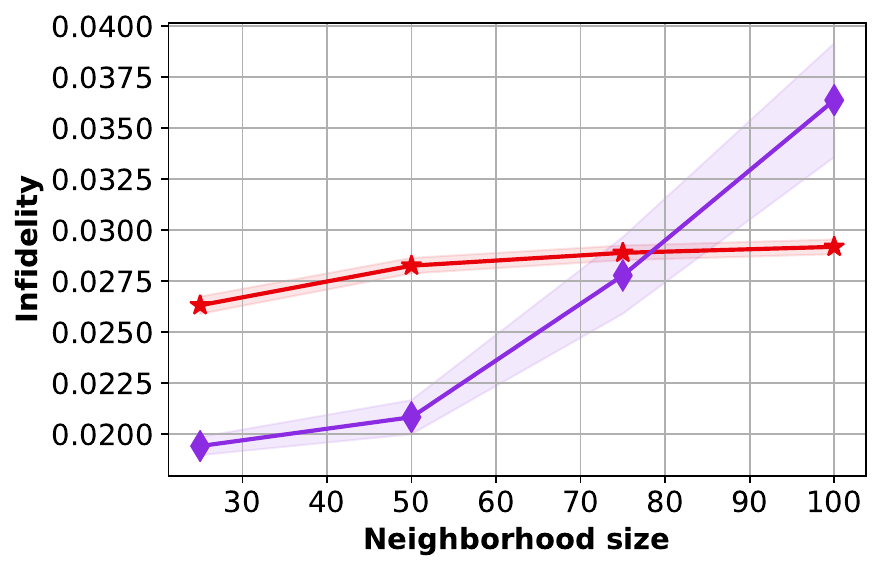}
    \caption{Rotten Tomatoes (realistic)}
    \label{fig:rotten_mnb_mp_cnt_INFD}
    \end{subfigure}
    \begin{subfigure}[b]{0.24\textwidth}
    \vspace{0pt}
    \centering
    \captionsetup{justification=centering}
    \includegraphics[width=\textwidth]{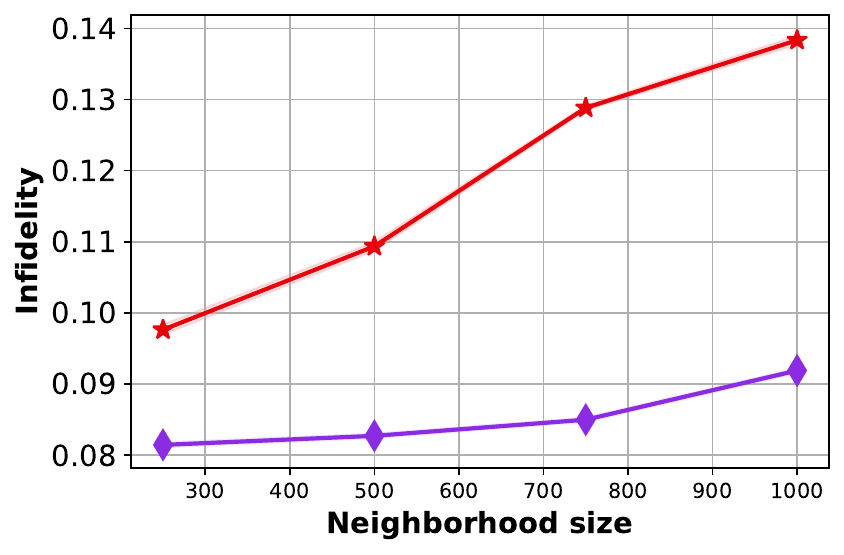}
    \caption{CIFAR10\\\phantom{test}}
    \label{fig:cifar_mp_cnt_INFD}
    \end{subfigure}
\caption{Infidelity (INFD) vs. Perturbation neighborhood size.}
%\vspace{-0.5cm}
\label{fig:INFD_cnt_all}
\end{figure}

\newpage
% \end{tabular}
%%%%%%%%%%%%

%%%%%%%%%%%%
% \begin{tabular}{}

\begin{figure}[!htb]
\centering
    \begin{subfigure}[b]{0.24\textwidth}
        \vspace{0pt}
        \centering
        \captionsetup{justification=centering}
        \includegraphics[width=\textwidth]{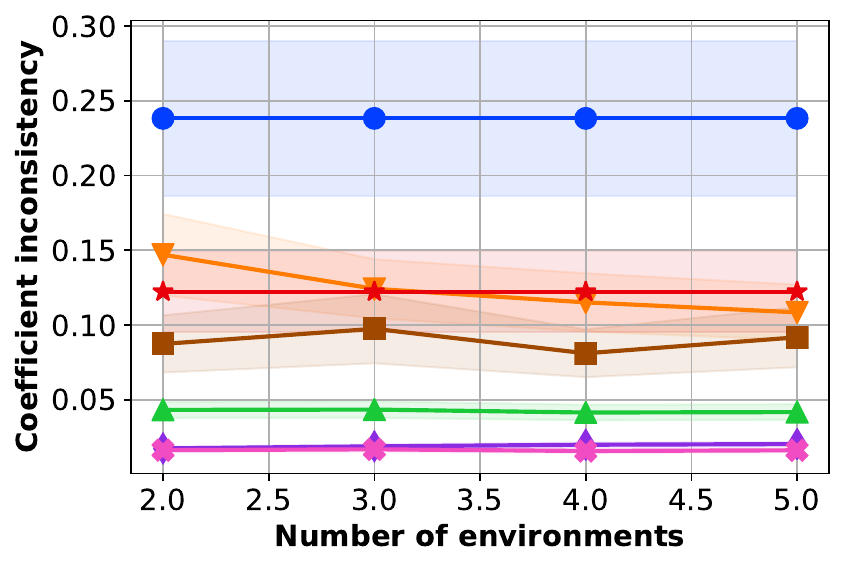}
        \caption{Iris}
        \label{fig:iris_rfc_num_envs_CI}
    \end{subfigure}
    \begin{subfigure}[b]{0.24\textwidth}
        \vspace{0pt}
        \centering
        \captionsetup{justification=centering}
        \includegraphics[width=\textwidth]{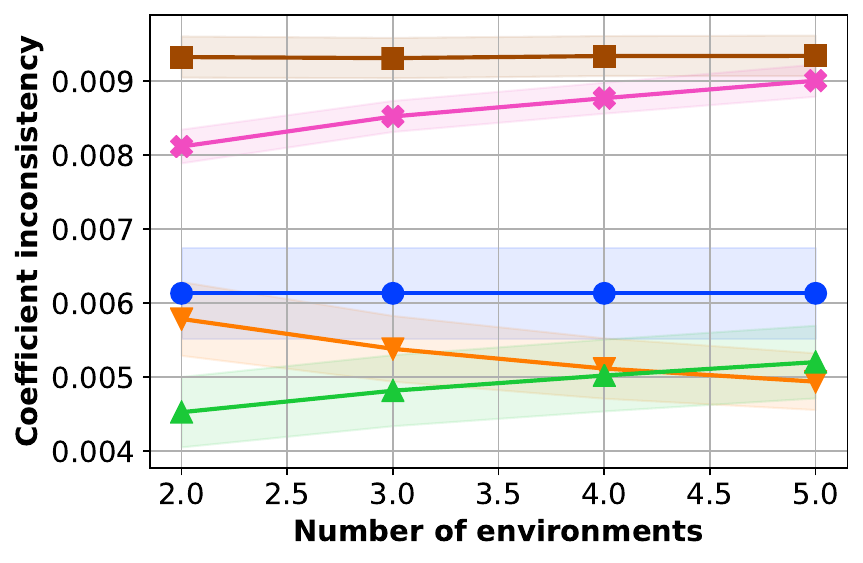}
        \caption{MEPS}
        \label{fig:MEPS_rfr_num_envs_CI}
    \end{subfigure}
    \begin{subfigure}[b]{0.24\textwidth}
        \vspace{0pt}
        \centering
        \captionsetup{justification=centering}
        \includegraphics[width=\textwidth]{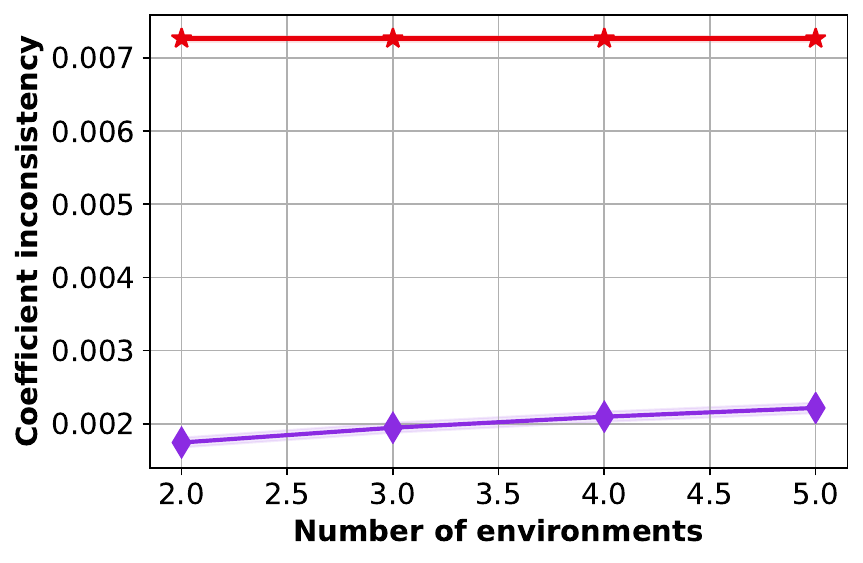}
        \caption{FMNIST}
        \label{fig:fmnist_nn_mp_num_envs_CI}
    \end{subfigure}
    \begin{subfigure}[b]{0.24\textwidth}
        \vspace{0pt}
        \centering
        \captionsetup{justification=centering}
        \includegraphics[width=\textwidth]{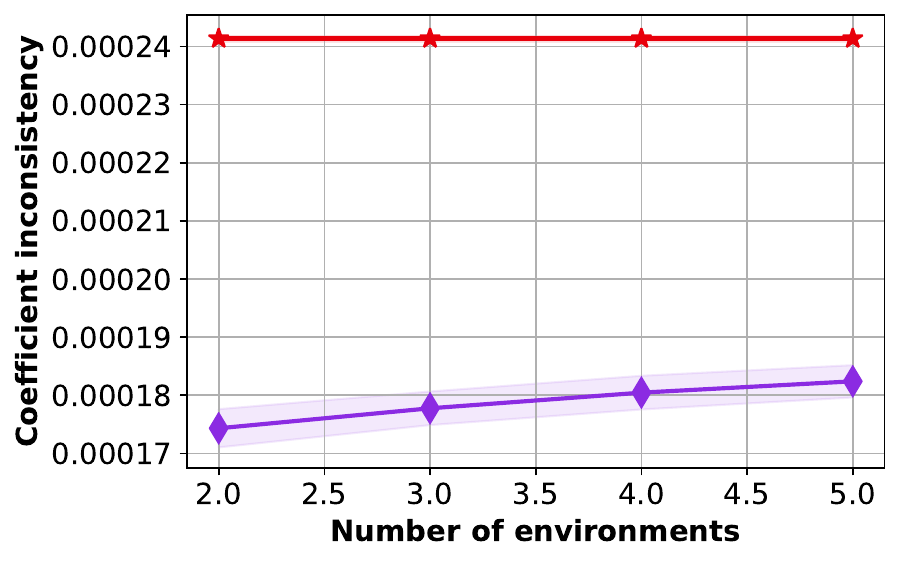}
        \caption{Rotten Tomatoes}
        \label{fig:rotten_mnb_mp_num_envs_CI}
    \end{subfigure}
    \begin{subfigure}[b]{0.24\textwidth}
        \vspace{0pt}
        \centering
        \captionsetup{justification=centering}
        \includegraphics[width=\textwidth]{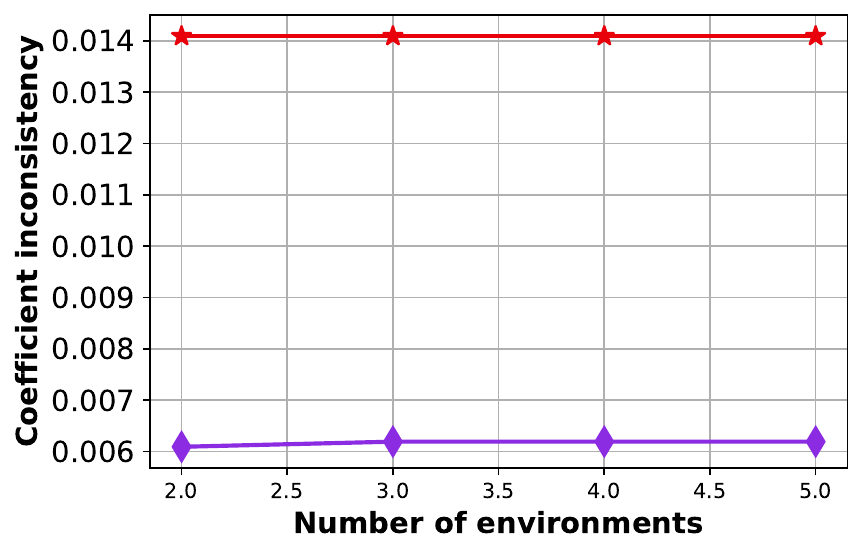}
        \caption{CIFAR10}
        \label{fig:cifar_mp_num_envs_CI}
    \end{subfigure}
\caption{Coefficient inconsistency (CI) vs. Number of environments.}
%\vspace{-0.5cm}
\label{fig:CI_num_envs_all}
\end{figure}

\begin{figure}[!htb]
\centering
    \begin{subfigure}[b]{0.24\textwidth}
        \vspace{0pt}
        \centering
        \captionsetup{justification=centering}
        \includegraphics[width=\textwidth]{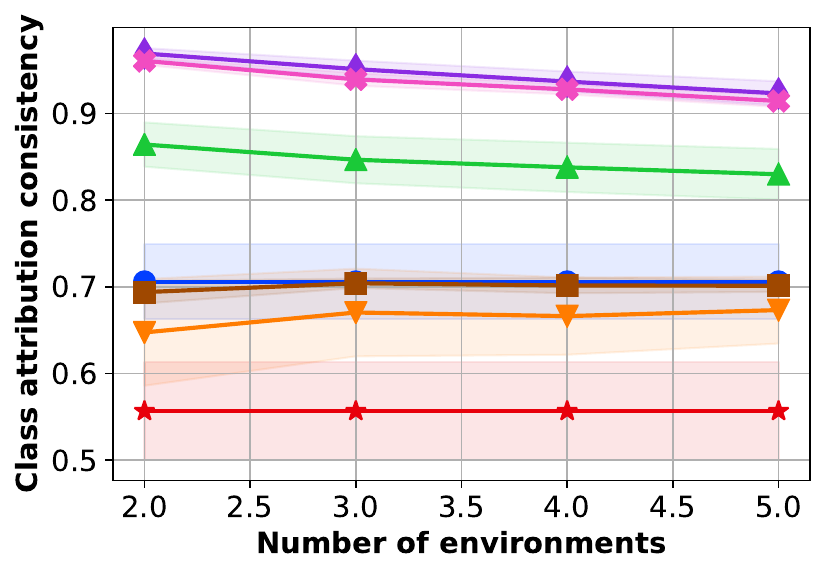}
        \caption{Iris}
        \label{fig:iris_rfc_num_envs_CAC}
    \end{subfigure}
    \begin{subfigure}[b]{0.24\textwidth}
        \vspace{0pt}
        \centering
        \captionsetup{justification=centering}
        \includegraphics[width=\textwidth]{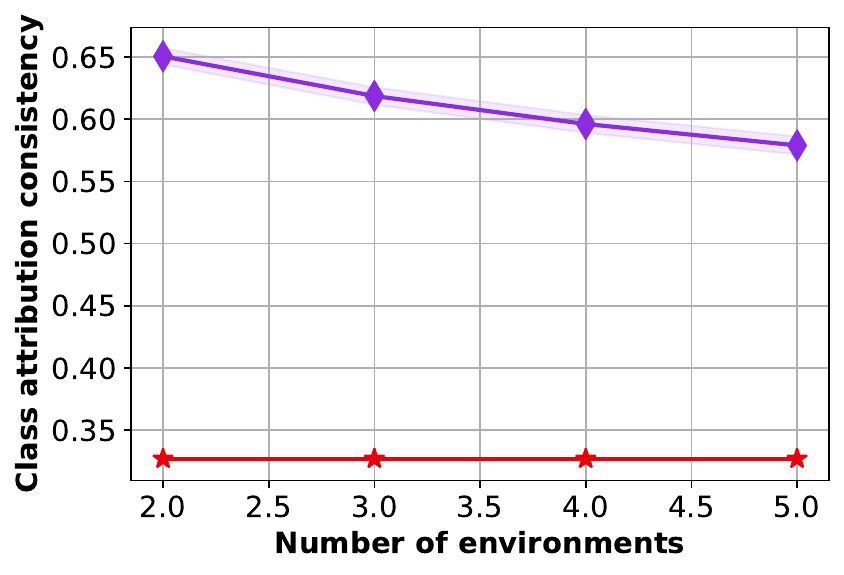}
        \caption{FMNIST}
        \label{fig:fmnist_nn_mp_num_envs_CAC}
    \end{subfigure}
    \begin{subfigure}[b]{0.24\textwidth}
        \vspace{0pt}
        \centering
        \captionsetup{justification=centering}
        \includegraphics[width=\textwidth]{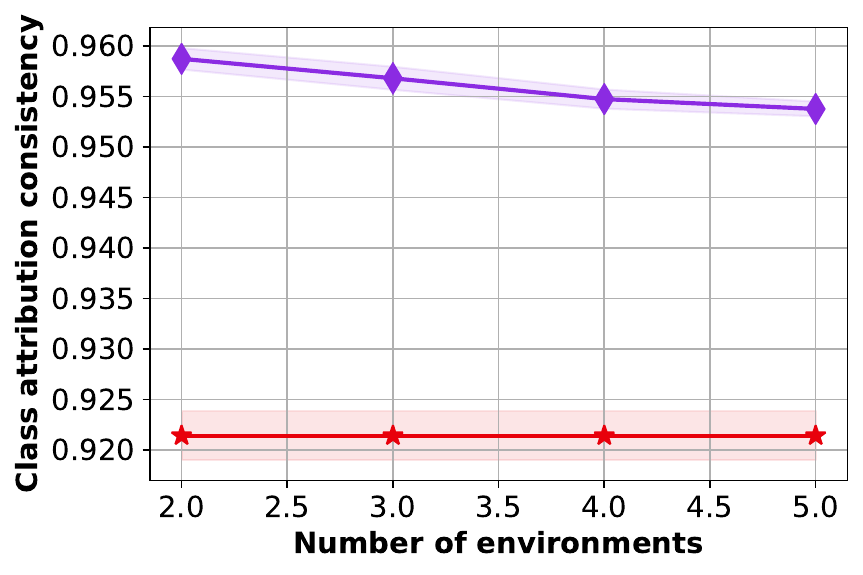}
        \caption{Rotten Tomatoes}
        \label{fig:rotten_mnb_mp_num_envs_CAC}
    \end{subfigure}
\caption{Class attribution consistency (CAC) vs. Number of environments.}
%\vspace{-0.5cm}
\label{fig:CAC_num_envs_all}
\end{figure}

\begin{figure}[!htb]
\centering
    \begin{subfigure}[b]{0.24\textwidth}
        \vspace{0pt}
        \centering
        \captionsetup{justification=centering}
        \includegraphics[width=\textwidth]{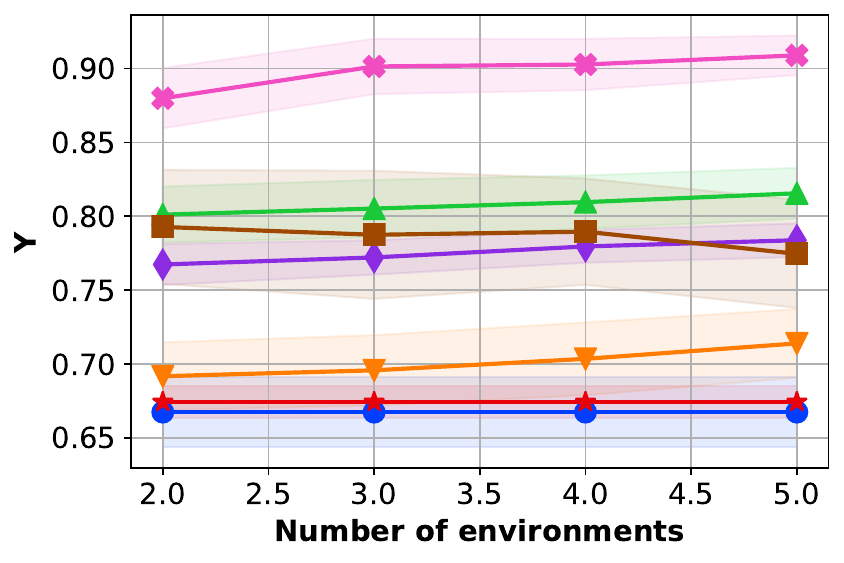}
        \caption{Iris}
        \label{fig:iris_rfc_num_envs_Upsilon}
    \end{subfigure}
    \begin{subfigure}[b]{0.24\textwidth}
        \vspace{0pt}
        \centering
        \captionsetup{justification=centering}
        \includegraphics[width=\textwidth]{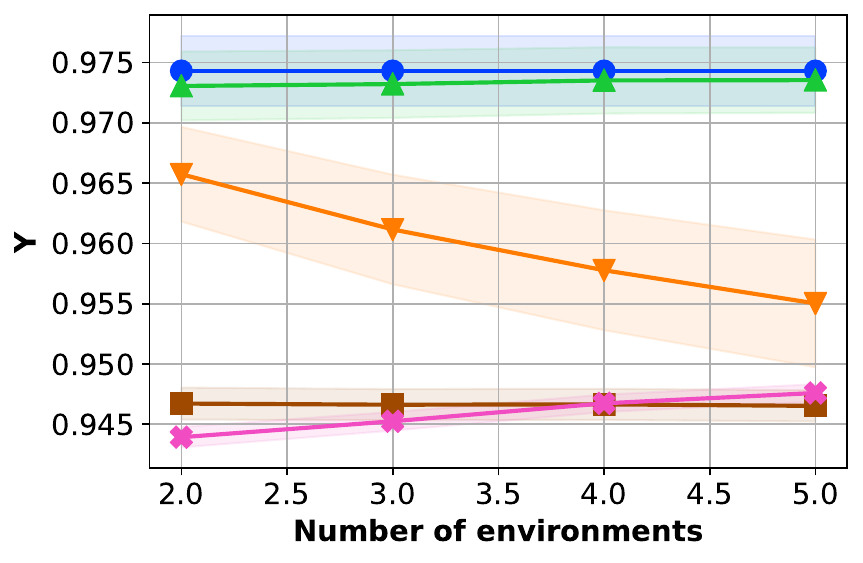}
        \caption{MEPS}
        \label{fig:MEPS_rfr_num_envs_Upsilon}
    \end{subfigure}
    \begin{subfigure}[b]{0.24\textwidth}
        \vspace{0pt}
        \centering
        \captionsetup{justification=centering}
        \includegraphics[width=\textwidth]{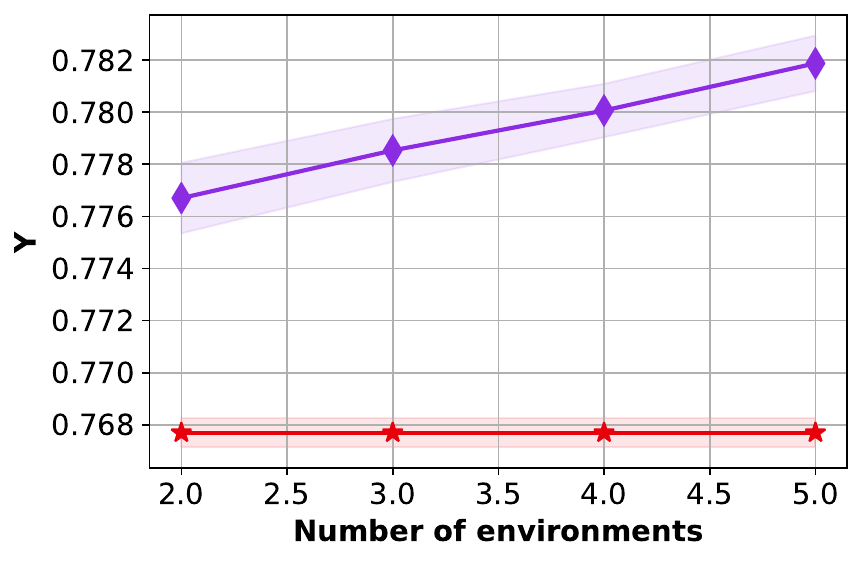}
        \caption{FMNIST}
        \label{fig:fmnist_nn_mp_num_envs_Upsilon}
    \end{subfigure}
    \begin{subfigure}[b]{0.24\textwidth}
        \vspace{0pt}
        \centering
        \captionsetup{justification=centering}
        \includegraphics[width=\textwidth]{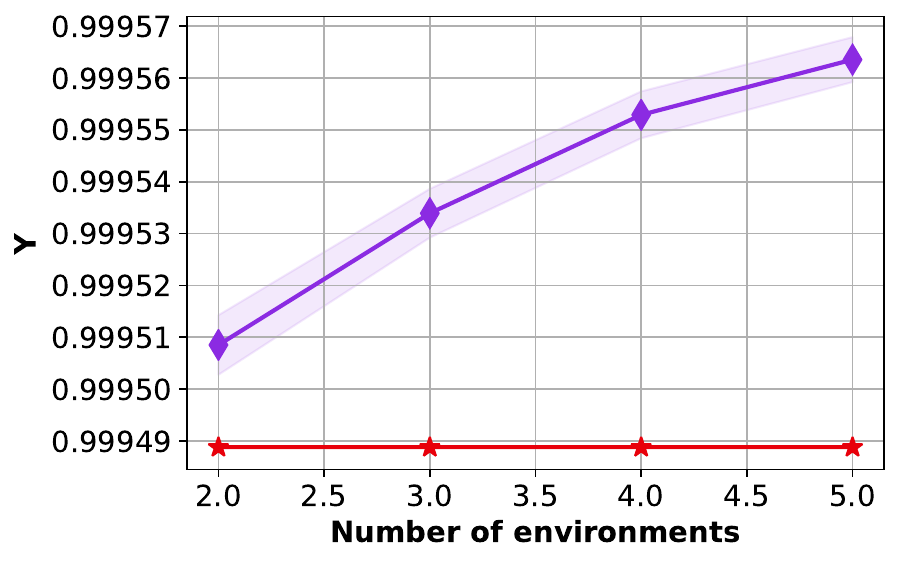}
        \caption{Rotten Tomatoes}
        \label{fig:rotten_mnb_mp_num_envs_Upsilon}
    \end{subfigure}
    \begin{subfigure}[b]{0.24\textwidth}
        \vspace{0pt}
        \centering
        \captionsetup{justification=centering}
        \includegraphics[width=\textwidth]{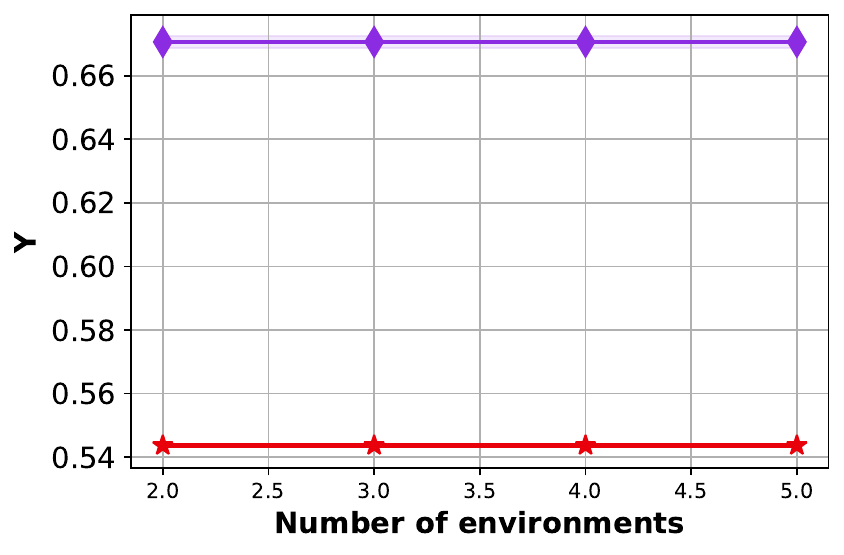}
        \caption{CIFAR10}
        \label{fig:cifar_mp_num_envs_Upsilon}
    \end{subfigure}
\caption{Unidirectionality ($\Upsilon$) vs. Number of environments.}
\vspace{-0.5cm}
\label{fig:Upsilon_num_envs_all}
\end{figure}

\begin{figure}[!htb]
\centering
    \begin{subfigure}[b]{0.24\textwidth}
        \vspace{0pt}
        \centering
        \captionsetup{justification=centering}
        \includegraphics[width=\textwidth]{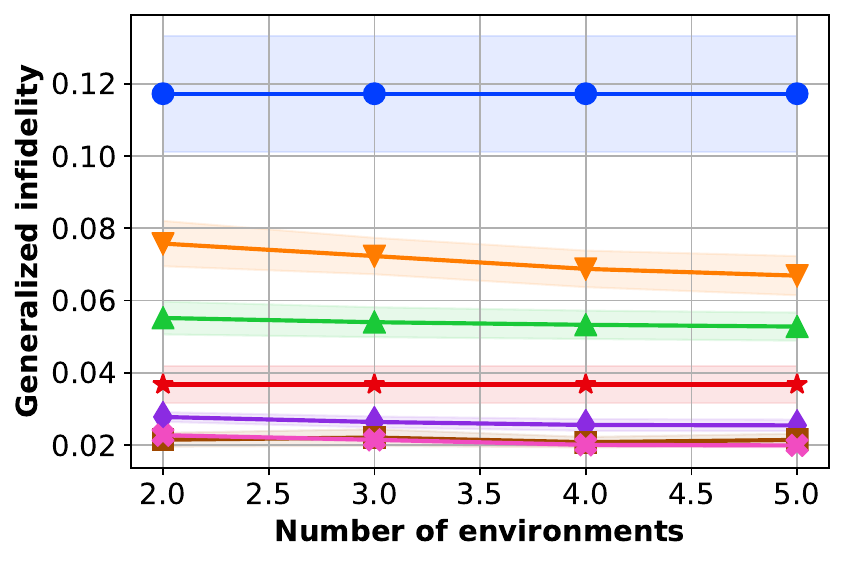}
        \caption{Iris}
        \label{fig:iris_rfc_num_envs_GI}
    \end{subfigure}
    \begin{subfigure}[b]{0.24\textwidth}
        \vspace{0pt}
        \centering
        \captionsetup{justification=centering}
        \includegraphics[width=\textwidth]{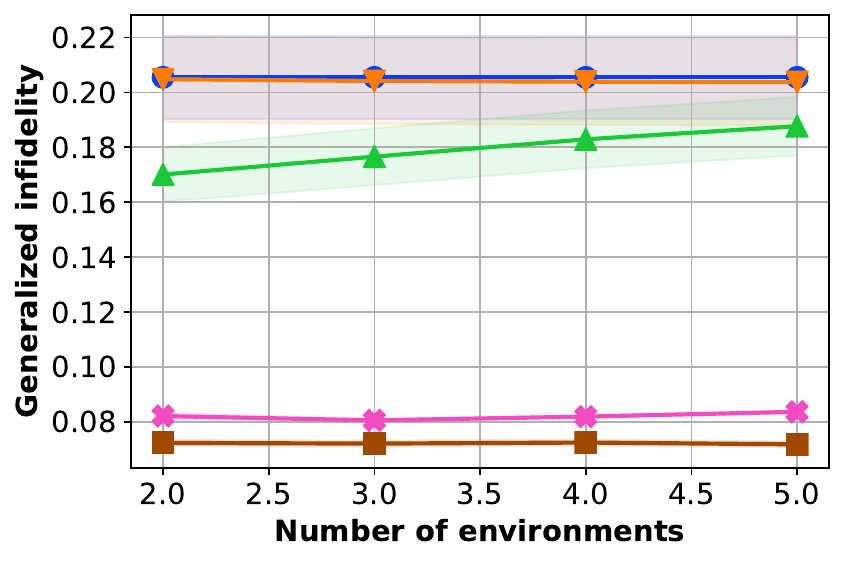}
        \caption{MEPS}
        \label{fig:MEPS_rfr_num_envs_GI}
    \end{subfigure}
    \begin{subfigure}[b]{0.24\textwidth}
        \vspace{0pt}
        \centering
        \captionsetup{justification=centering}
        \includegraphics[width=\textwidth]{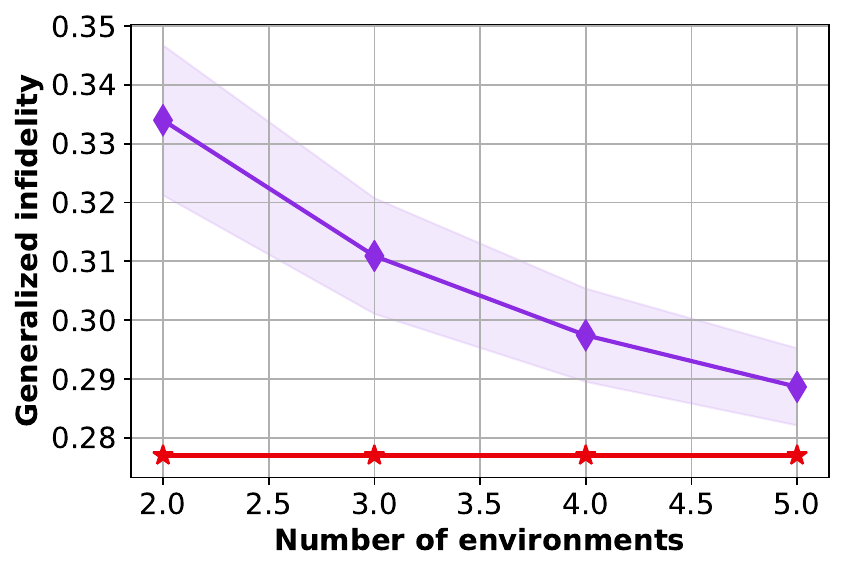}
        \caption{FMNIST}
        \label{fig:fmnist_nn_mp_num_envs_GI}
    \end{subfigure}
    \begin{subfigure}[b]{0.24\textwidth}
        \vspace{0pt}
        \centering
        \captionsetup{justification=centering}
        \includegraphics[width=\textwidth]{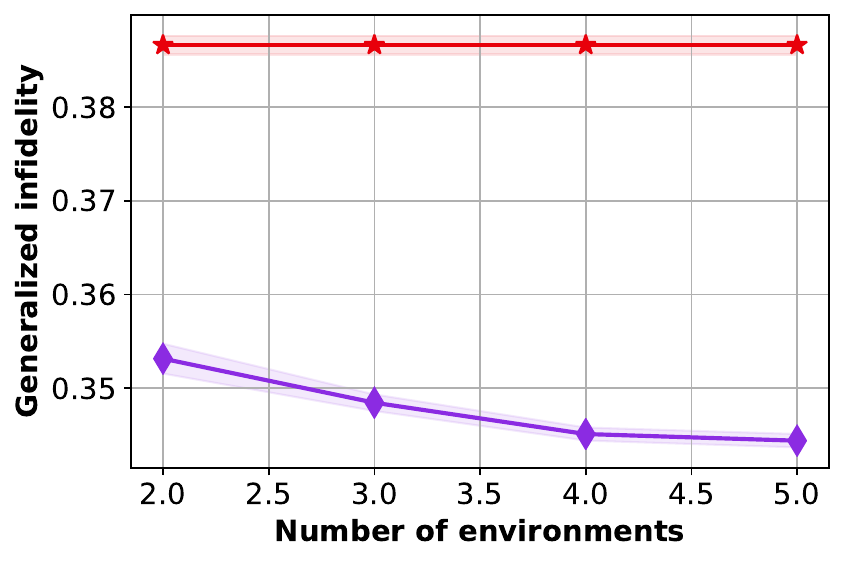}
        \caption{Rotten Tomatoes}
        \label{fig:rotten_mnb_mp_num_envs_GI}
    \end{subfigure}
    \begin{subfigure}[b]{0.24\textwidth}
        \vspace{0pt}
        \centering
        \captionsetup{justification=centering}
        \includegraphics[width=\textwidth]{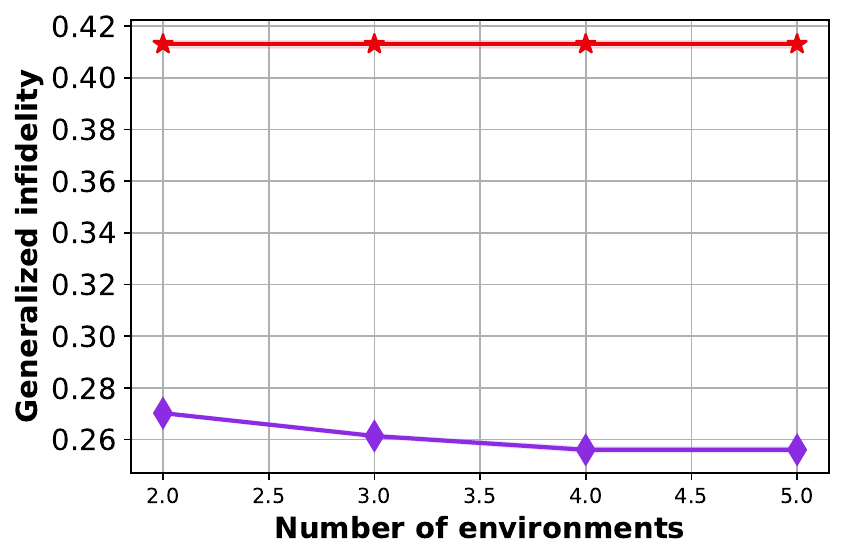}
        \caption{CIFAR10}
        \label{fig:cifar_mp_num_envs_GI}
    \end{subfigure}
    
\caption{Generalized infidelity (GI) vs. Number of environments.}
\vspace{-0.5cm}
\label{fig:GI_num_envs_all}
\end{figure}

\begin{figure}[!htb]
\centering
    \begin{subfigure}[b]{0.24\textwidth}
        \vspace{0pt}
        \centering
        \captionsetup{justification=centering}
        \includegraphics[width=\textwidth]{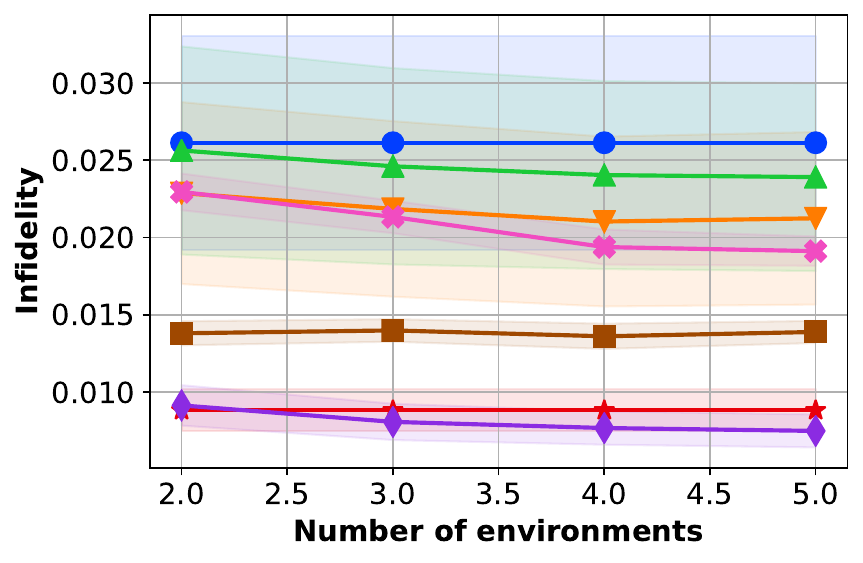}
        \caption{Iris}
        \label{fig:iris_rfc_num_envs_INFD}
    \end{subfigure}
    \begin{subfigure}[b]{0.24\textwidth}
        \vspace{0pt}
        \centering
        \captionsetup{justification=centering}
        \includegraphics[width=\textwidth]{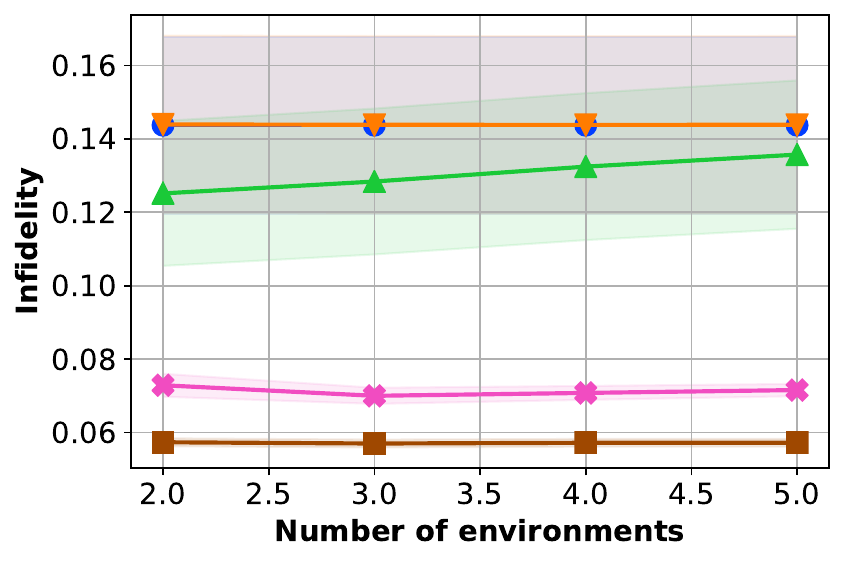}
        \caption{MEPS}
        \label{fig:MEPS_rfr_num_envs_INFD}
    \end{subfigure}
    \begin{subfigure}[b]{0.24\textwidth}
        \vspace{0pt}
        \centering
        \captionsetup{justification=centering}
        \includegraphics[width=\textwidth]{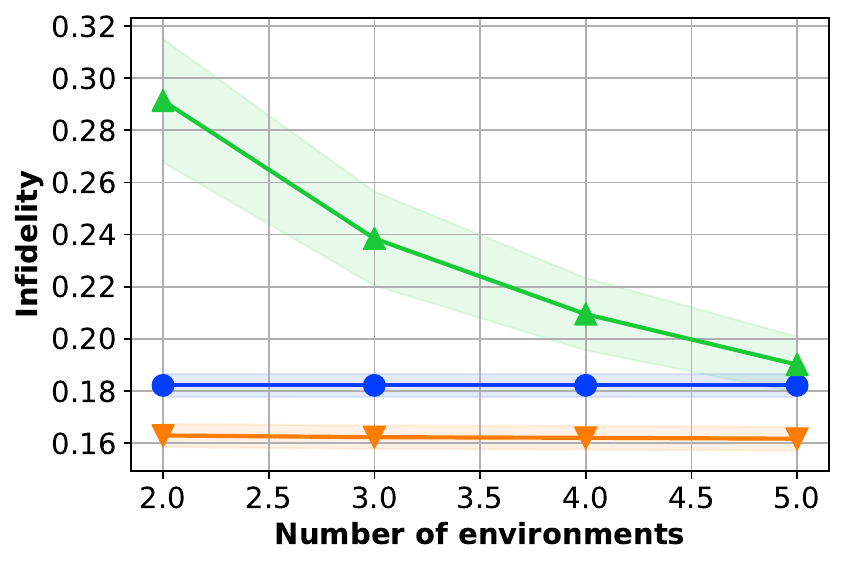}
        \caption{FMNIST (random)}
        \label{fig:fmnist_nn_bp_num_envs_INFD}
    \end{subfigure}
    \begin{subfigure}[b]{0.24\textwidth}
        \vspace{0pt}
        \centering
        \captionsetup{justification=centering}
        \includegraphics[width=\textwidth]{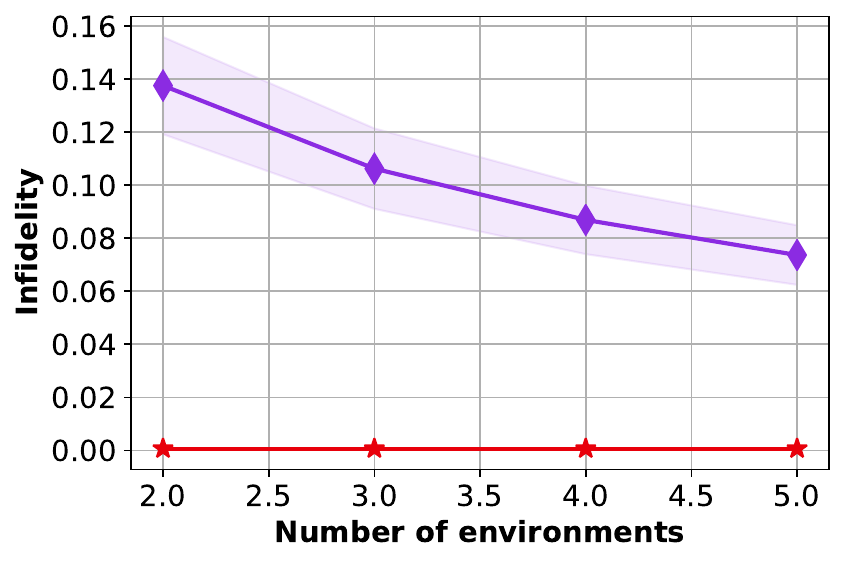}
        \caption{FMNIST (realistic)}
        \label{fig:fmnist_nn_mp_num_envs_INFD}
    \end{subfigure}
    \begin{subfigure}[b]{0.24\textwidth}
        \vspace{0pt}
        \centering
        \captionsetup{justification=centering}
        \includegraphics[width=\textwidth]{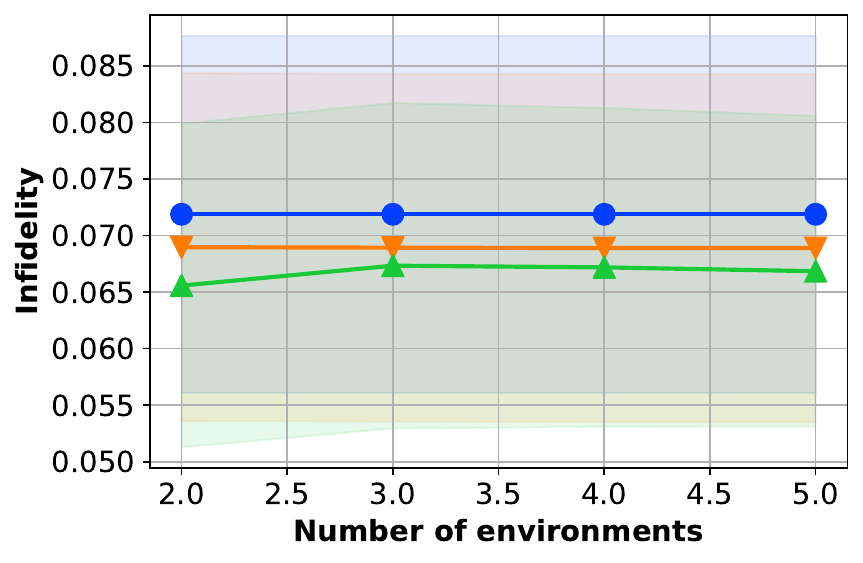}
        \caption{Rotten Tomatoes (random)}
        \label{fig:rotten_mnb_bp_num_envs_INFD}
    \end{subfigure}
    \begin{subfigure}[b]{0.24\textwidth}
    \vspace{0pt}
    \centering
    \captionsetup{justification=centering}
    \includegraphics[width=\textwidth]{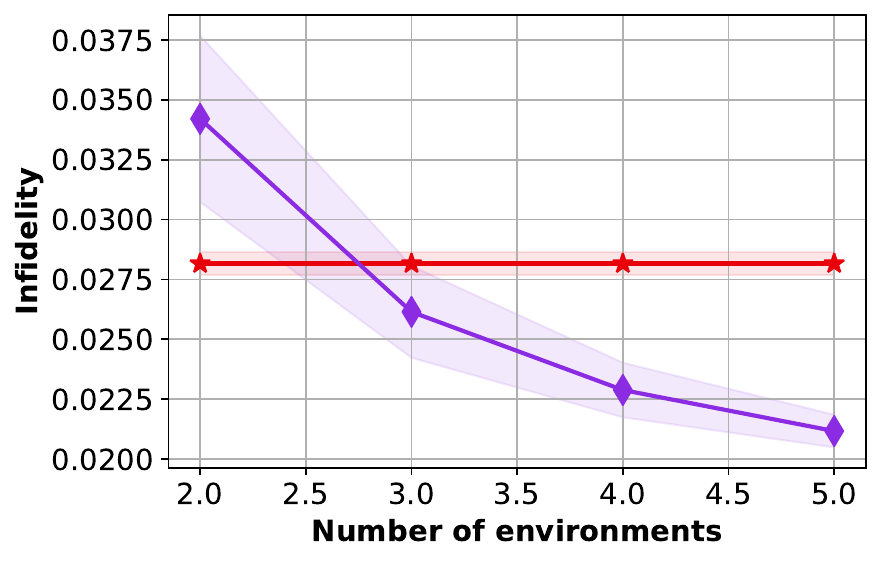}
    \caption{Rotten Tomatoes (realistic)}
    \label{fig:rotten_mnb_mp_num_envs_INFD}
    \end{subfigure}
    \begin{subfigure}[b]{0.24\textwidth}
    \vspace{0pt}
    \centering
    \captionsetup{justification=centering}
    \includegraphics[width=\textwidth]{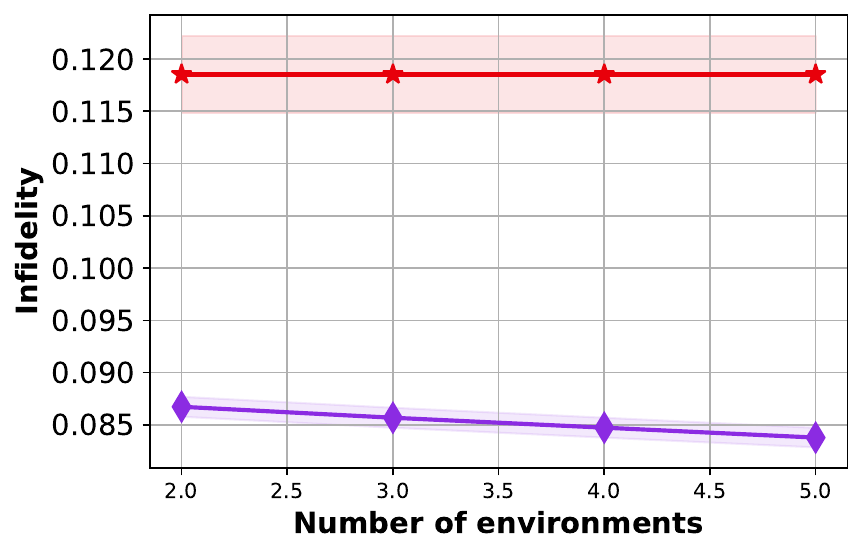}
    \caption{CIFAR10\\\phantom{test}}
    \label{fig:cifar_mp_num_envs_INFD}
    \end{subfigure}
\caption{Infidelity (INFD) vs. Number of environments.}
%\vspace{-0.5cm}
\label{fig:INFD_num_envs_all}
\end{figure}

\newpage
% \end{tabular}
%%%%%%%%%%%%

%%%%%%%%%%%%
% \begin{tabular}{}

\begin{figure}[!htb]
\centering
    \begin{subfigure}[b]{0.24\textwidth}
        \vspace{0pt}
        \centering
        \captionsetup{justification=centering}
        \includegraphics[width=\textwidth]{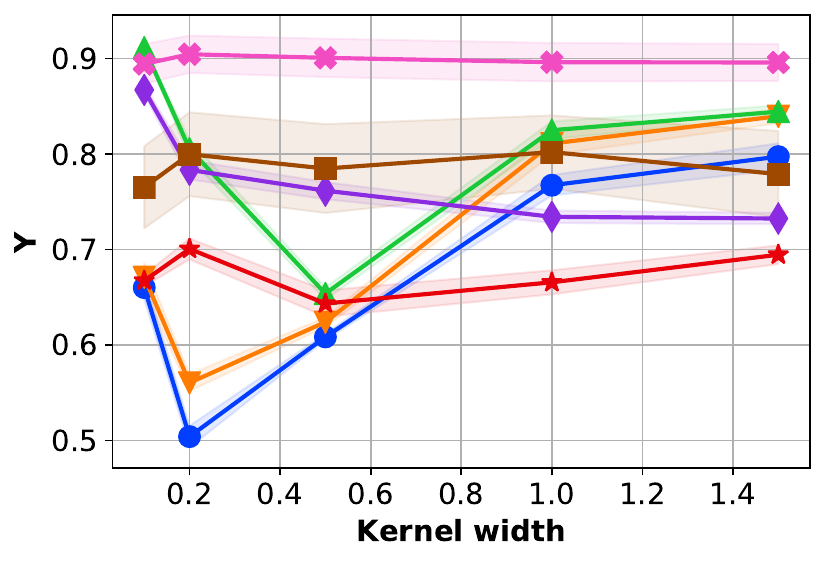}
        \caption{Iris}
        \label{fig:iris_rfc_kernel_width_Upsilon}
    \end{subfigure}
    \begin{subfigure}[b]{0.24\textwidth}
        \vspace{0pt}
        \centering
        \captionsetup{justification=centering}
        \includegraphics[width=\textwidth]{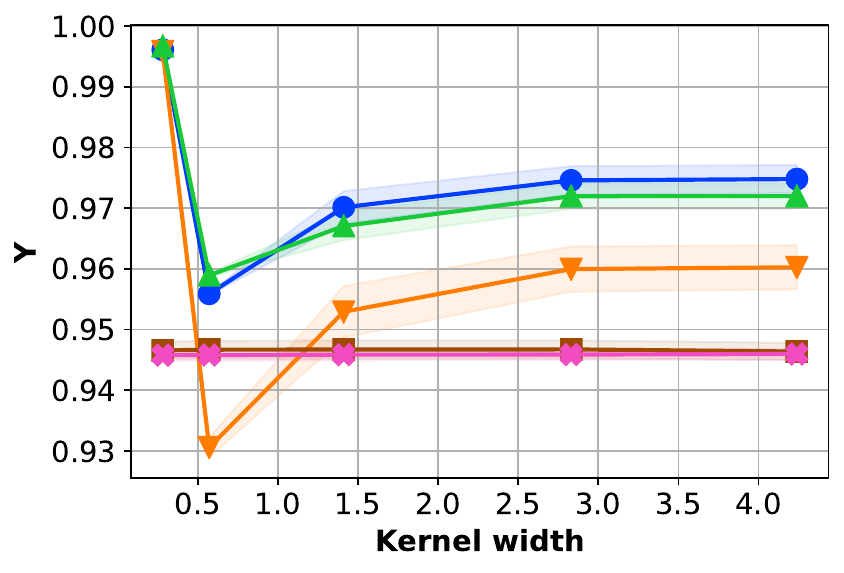}
        \caption{MEPS}
        \label{fig:MEPS_rfr_kernel_width_Upsilon}
    \end{subfigure}
    \begin{subfigure}[b]{0.24\textwidth}
        \vspace{0pt}
        \centering
        \captionsetup{justification=centering}
        \includegraphics[width=\textwidth]{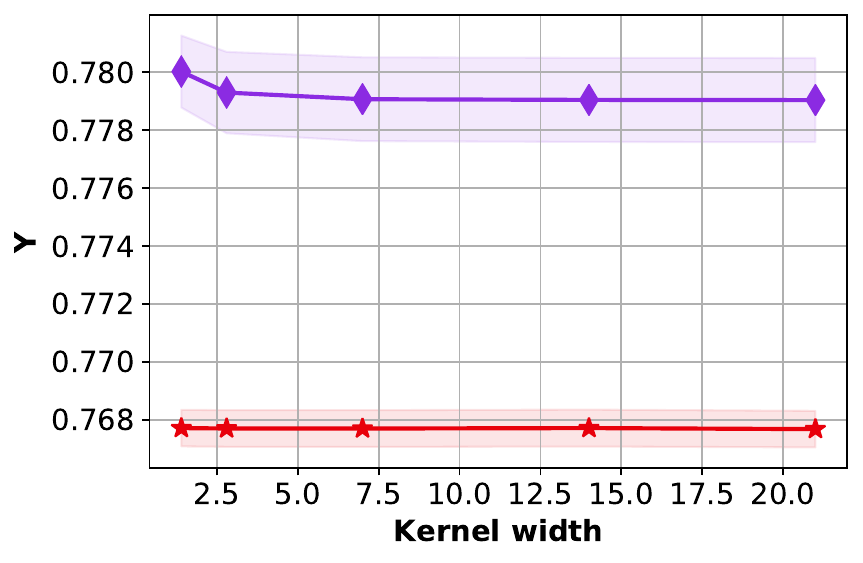}
        \caption{FMNIST}
        \label{fig:fmnist_nn_mp_kernel_width_Upsilon}
    \end{subfigure}
    \begin{subfigure}[b]{0.24\textwidth}
        \vspace{0pt}
        \centering
        \captionsetup{justification=centering}
        \includegraphics[width=\textwidth]{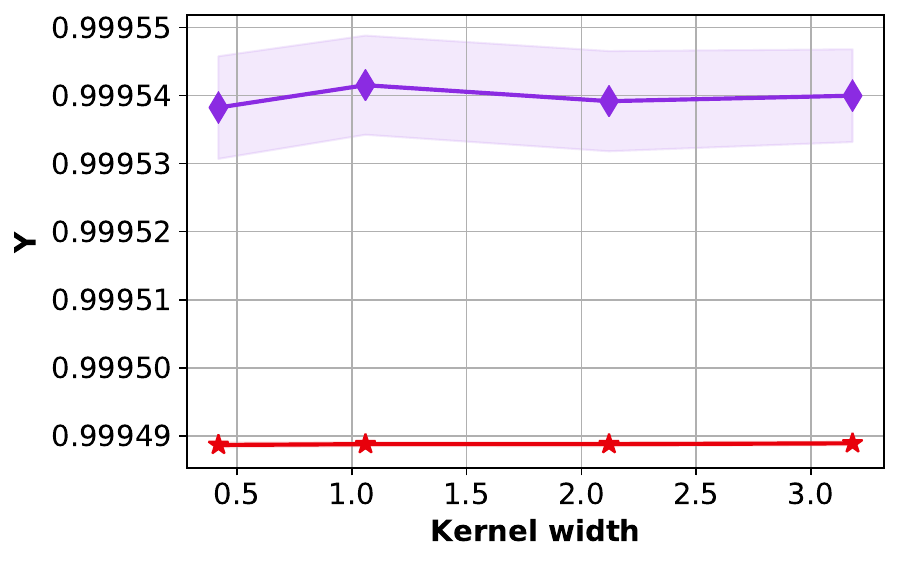}
        \caption{Rotten Tomatoes}
        \label{fig:rotten_mnb_mp_kernel_width_Upsilon}
    \end{subfigure}
    \begin{subfigure}[b]{0.24\textwidth}
        \vspace{0pt}
        \centering
        \captionsetup{justification=centering}
        \includegraphics[width=\textwidth]{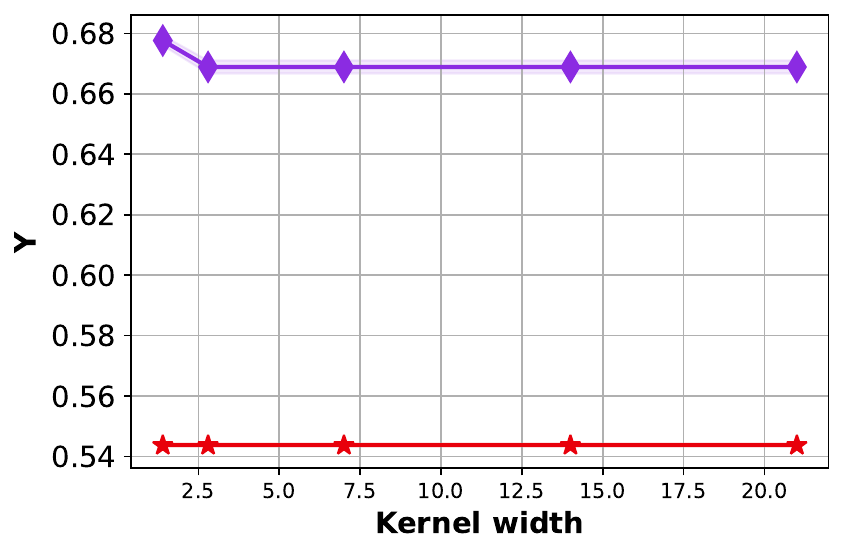}
        \caption{CIFAR10}
        \label{fig:cifar_mp_kernel_width_Upsilon}
    \end{subfigure}
\caption{Unidirectionality ($\Upsilon$) vs. Kernel width.}
%\vspace{-0.5cm}
\label{fig:Upsilon_kernel_width_all}
\end{figure}

\begin{figure}[!htb]
\centering
    \begin{subfigure}[b]{0.24\textwidth}
        \vspace{0pt}
        \centering
        \captionsetup{justification=centering}
        \includegraphics[width=\textwidth]{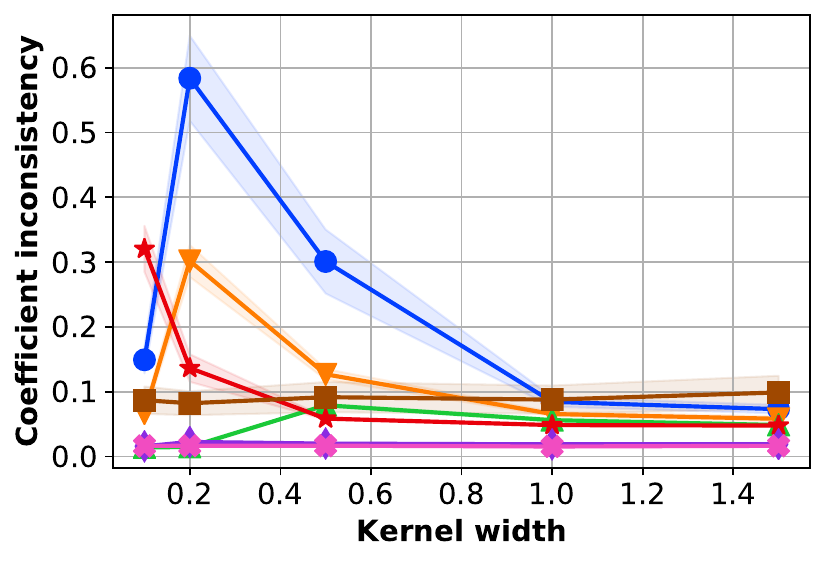}
        \caption{Iris}
        \label{fig:iris_rfc_kernel_width_CI}
    \end{subfigure}
    \begin{subfigure}[b]{0.24\textwidth}
        \vspace{0pt}
        \centering
        \captionsetup{justification=centering}
        \includegraphics[width=\textwidth]{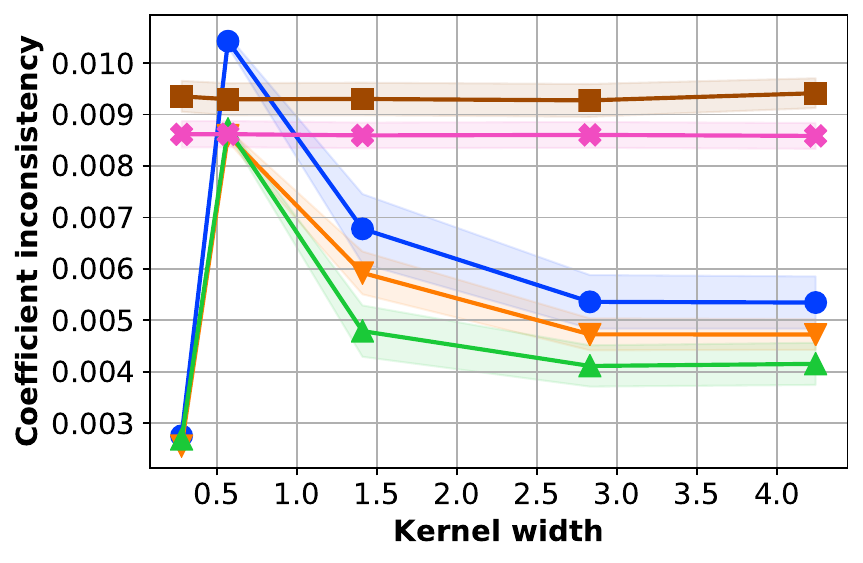}
        \caption{MEPS}
        \label{fig:MEPS_rfr_kernel_width_CI}
    \end{subfigure}
    \begin{subfigure}[b]{0.24\textwidth}
        \vspace{0pt}
        \centering
        \captionsetup{justification=centering}
        \includegraphics[width=\textwidth]{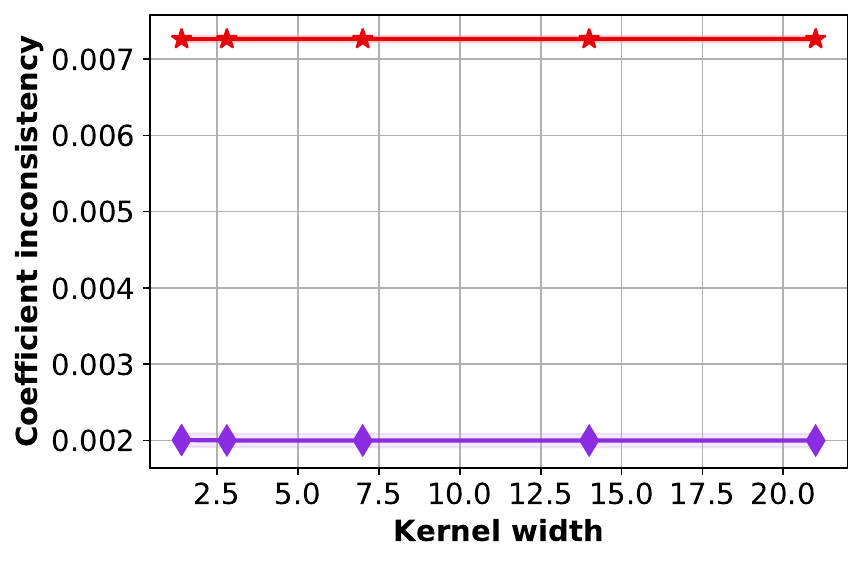}
        \caption{FMNIST}
        \label{fig:fmnist_nn_mp_kernel_width_CI}
    \end{subfigure}
    \begin{subfigure}[b]{0.24\textwidth}
        \vspace{0pt}
        \centering
        \captionsetup{justification=centering}
        \includegraphics[width=\textwidth]{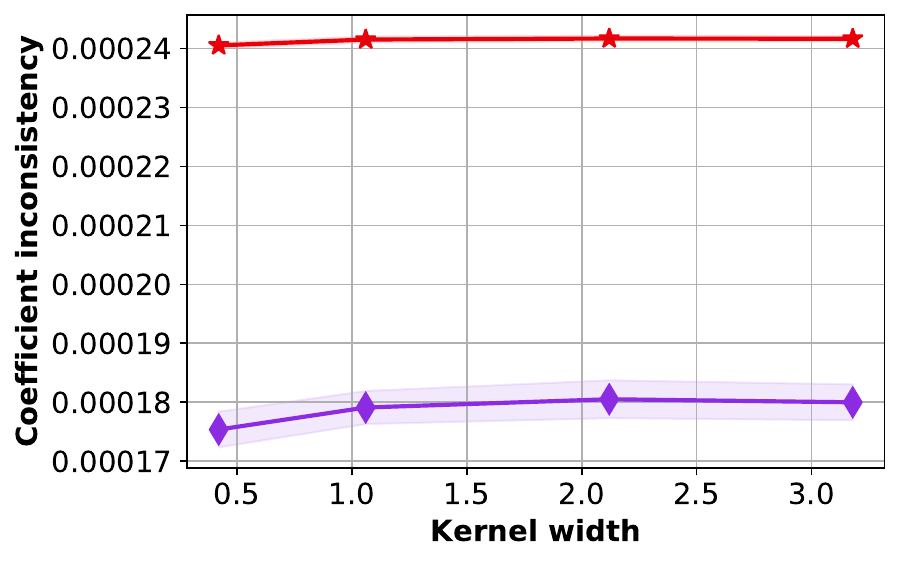}
        \caption{Rotten Tomatoes}
        \label{fig:rotten_mnb_mp_kernel_width_CI}
    \end{subfigure}
    \begin{subfigure}[b]{0.24\textwidth}
        \vspace{0pt}
        \centering
        \captionsetup{justification=centering}
        \includegraphics[width=\textwidth]{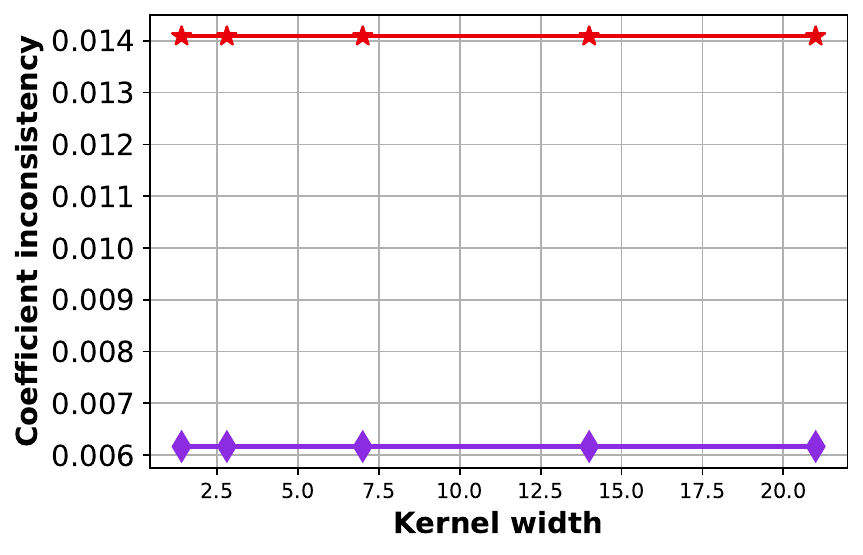}
        \caption{CIFAR10}
        \label{fig:cifar_mp_kernel_width_CI}
    \end{subfigure}
\caption{Coefficient inconsistency (CI) vs. Kernel width.}
\vspace{-0.5cm}
\label{fig:CI_kernel_width_all}
\end{figure}

\begin{figure}[!htb]
\centering
    \begin{subfigure}[b]{0.24\textwidth}
        \vspace{0pt}
        \centering
        \captionsetup{justification=centering}
        \includegraphics[width=\textwidth]{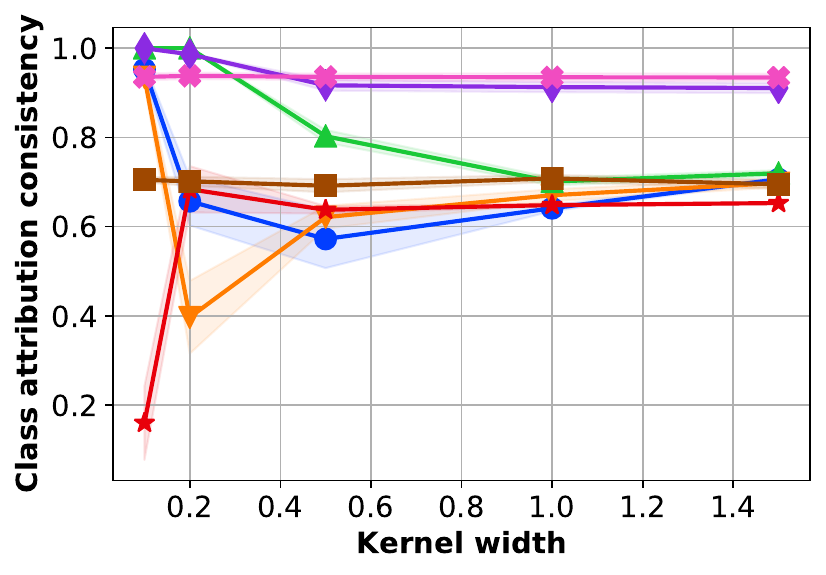}
        \caption{Iris}
        \label{fig:iris_rfc_kernel_width_CAC}
    \end{subfigure}
    \begin{subfigure}[b]{0.24\textwidth}
        \vspace{0pt}
        \centering
        \captionsetup{justification=centering}
        \includegraphics[width=\textwidth]{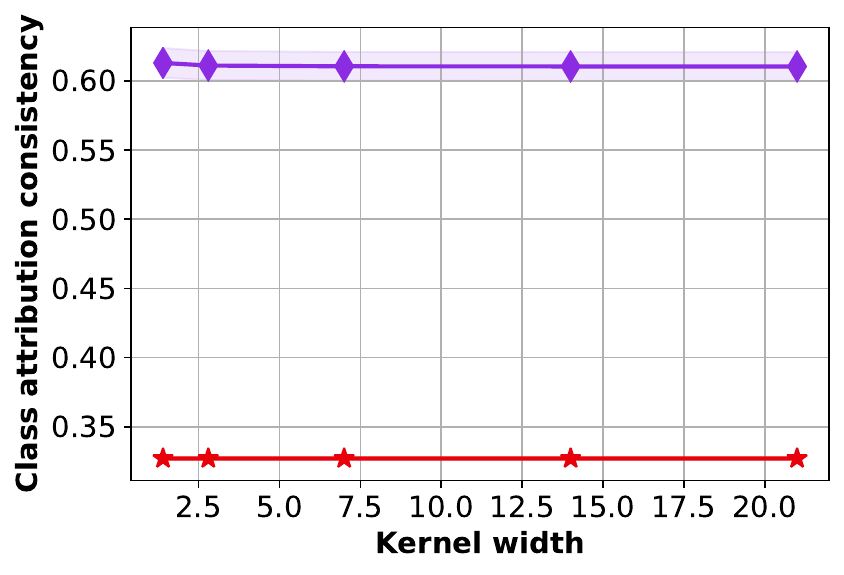}
        \caption{FMNIST}
        \label{fig:fmnist_nn_mp_kernel_width_CAC}
    \end{subfigure}
    \begin{subfigure}[b]{0.24\textwidth}
        \vspace{0pt}
        \centering
        \captionsetup{justification=centering}
        \includegraphics[width=\textwidth]{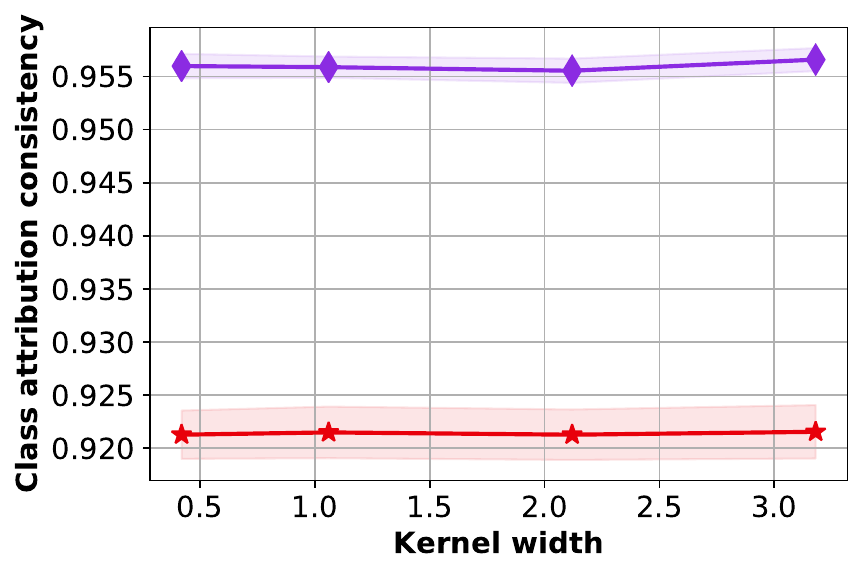}
        \caption{Rotten Tomatoes}
        \label{fig:rotten_mnb_mp_kernel_width_CAC}
    \end{subfigure}
\caption{Class attribution consistency (CAC) vs. Kernel width.}
\vspace{-0.5cm}
\label{fig:CAC_kernel_width_all}
\end{figure}

\begin{figure}[!htb]
\centering
    \begin{subfigure}[b]{0.24\textwidth}
        \vspace{0pt}
        \centering
        \captionsetup{justification=centering}
        \includegraphics[width=\textwidth]{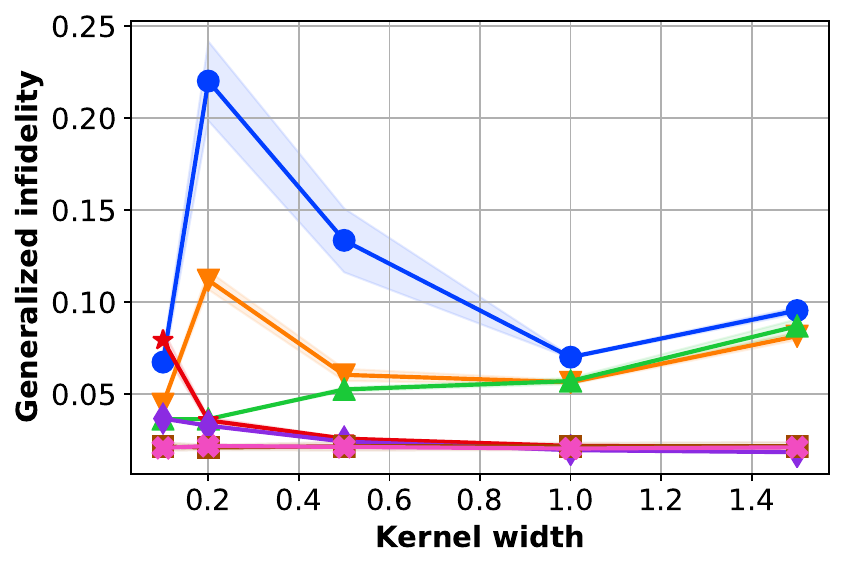}
        \caption{Iris}
        \label{fig:iris_rfc_kernel_width_GI}
    \end{subfigure}
    \begin{subfigure}[b]{0.24\textwidth}
        \vspace{0pt}
        \centering
        \captionsetup{justification=centering}
        \includegraphics[width=\textwidth]{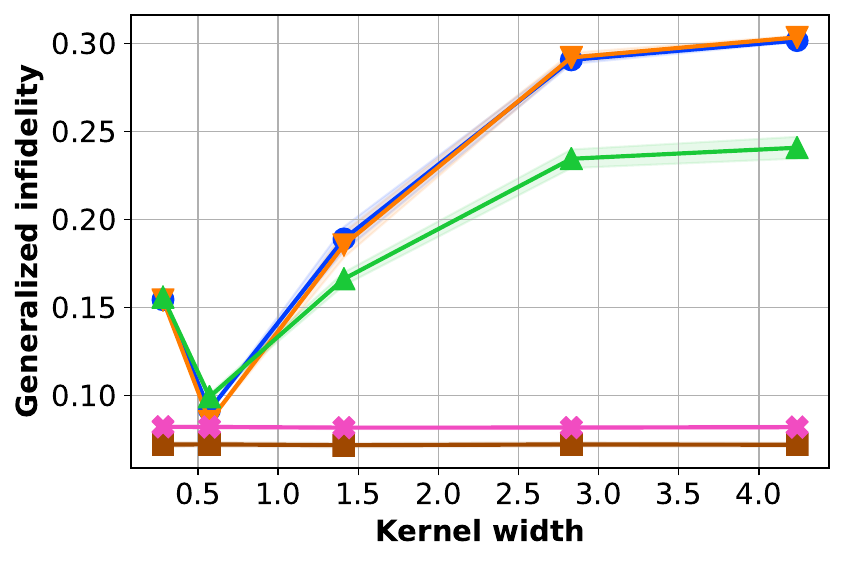}
        \caption{MEPS}
        \label{fig:MEPS_rfr_kernel_width_GI}
    \end{subfigure}
    \begin{subfigure}[b]{0.24\textwidth}
        \vspace{0pt}
        \centering
        \captionsetup{justification=centering}
        \includegraphics[width=\textwidth]{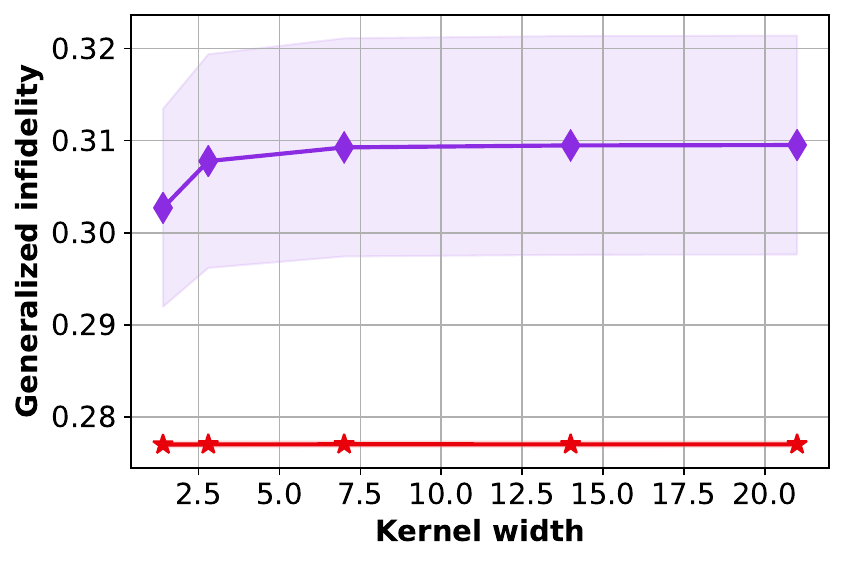}
        \caption{FMNIST}
        \label{fig:fmnist_nn_mp_kernel_width_GI}
    \end{subfigure}
    \begin{subfigure}[b]{0.24\textwidth}
        \vspace{0pt}
        \centering
        \captionsetup{justification=centering}
        \includegraphics[width=\textwidth]{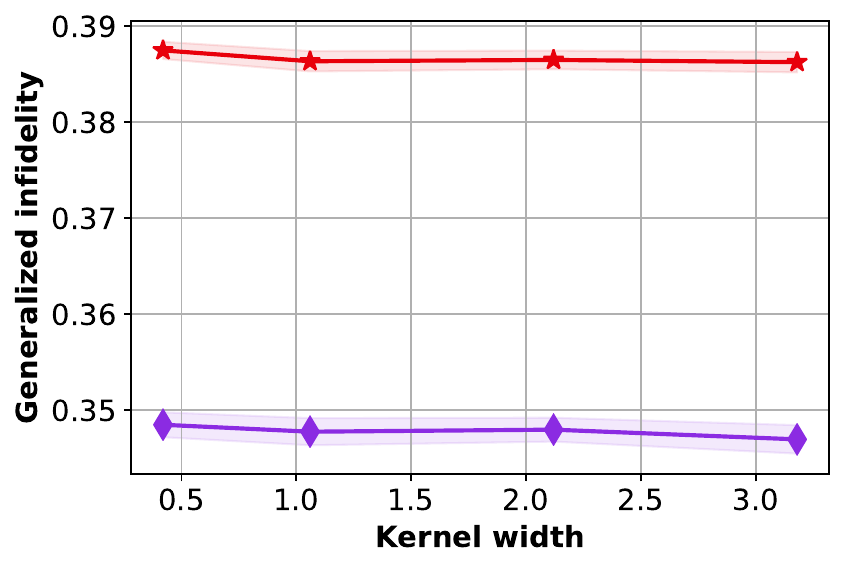}
        \caption{Rotten Tomatoes}
        \label{fig:rotten_mnb_mp_kernel_width_GI}
    \end{subfigure}
    \begin{subfigure}[b]{0.24\textwidth}
        \vspace{0pt}
        \centering
        \captionsetup{justification=centering}
        \includegraphics[width=\textwidth]{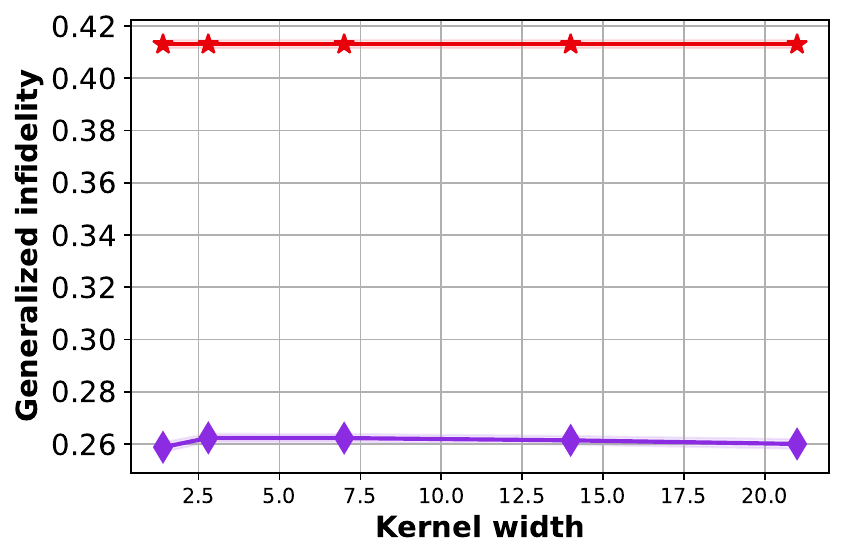}
        \caption{CIFAR10}
        \label{fig:cifar_mp_kernel_width_GI}
    \end{subfigure}
    
\caption{Generalized infidelity (GI) vs. Kernel width.}
\vspace{-0.5cm}
\label{fig:GI_kernel_width_all}
\end{figure}

\begin{figure}[!htb]
\centering
    \begin{subfigure}[b]{0.24\textwidth}
        \vspace{0pt}
        \centering
        \captionsetup{justification=centering}
        \includegraphics[width=\textwidth]{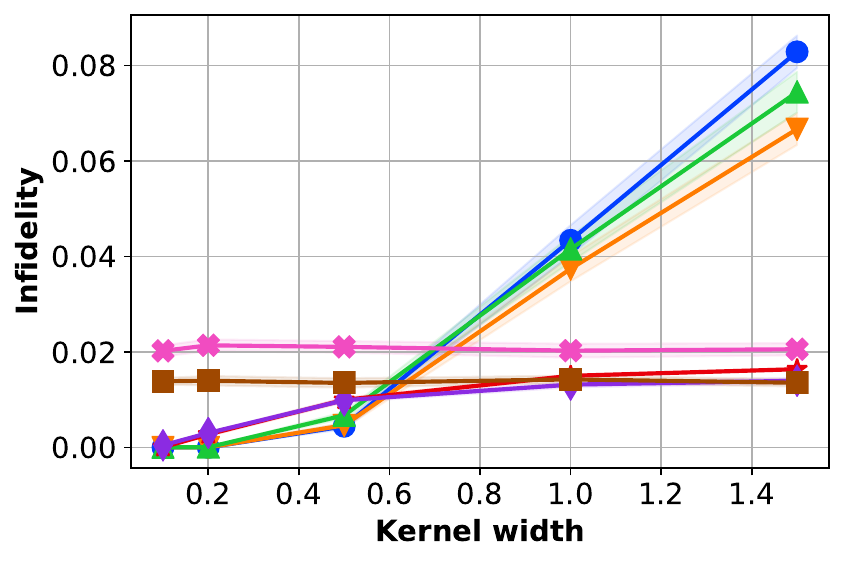}
        \caption{Iris}
        \label{fig:iris_rfc_kernel_width_INFD}
    \end{subfigure}
    \begin{subfigure}[b]{0.24\textwidth}
        \vspace{0pt}
        \centering
        \captionsetup{justification=centering}
        \includegraphics[width=\textwidth]{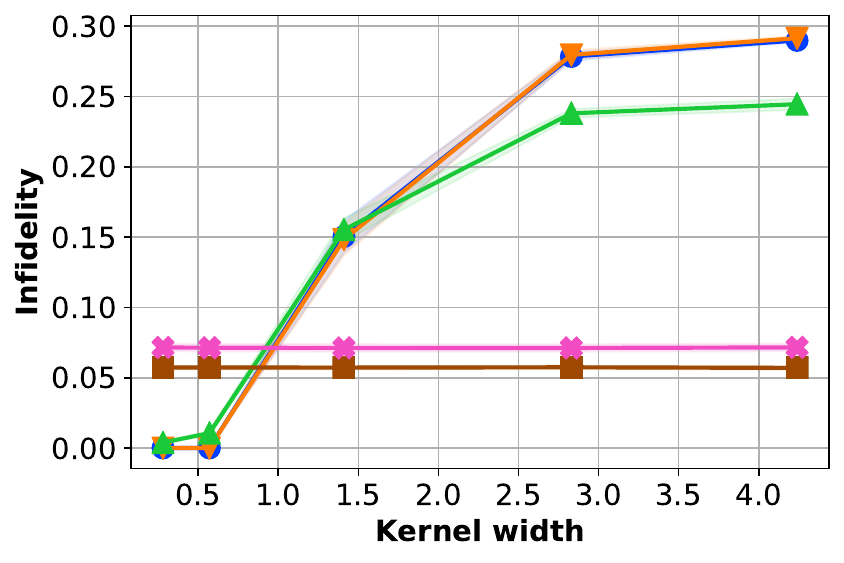}
        \caption{MEPS}
        \label{fig:MEPS_rfr_kernel_width_INFD}
    \end{subfigure}
    \begin{subfigure}[b]{0.24\textwidth}
        \vspace{0pt}
        \centering
        \captionsetup{justification=centering}
        \includegraphics[width=\textwidth]{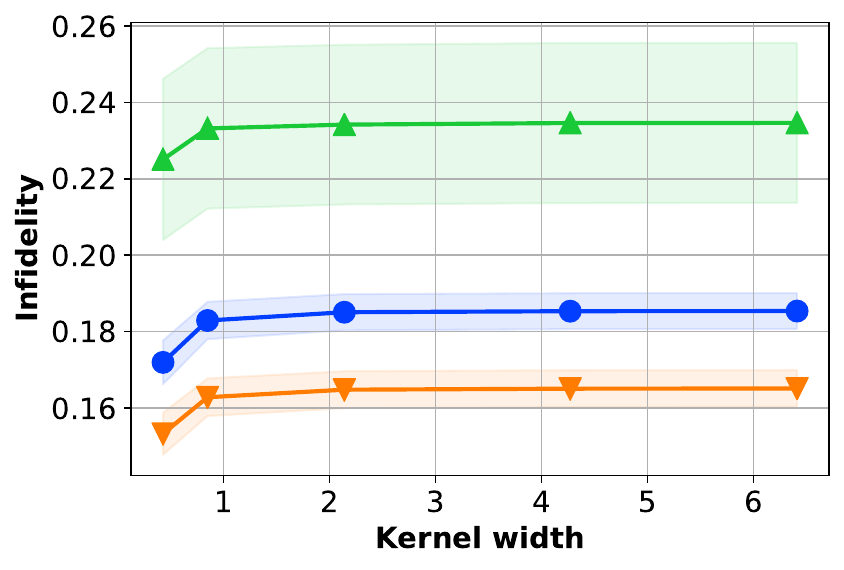}
        \caption{FMNIST (random)}
        \label{fig:fmnist_nn_bp_kernel_width_INFD}
    \end{subfigure}
    \begin{subfigure}[b]{0.24\textwidth}
        \vspace{0pt}
        \centering
        \captionsetup{justification=centering}
        \includegraphics[width=\textwidth]{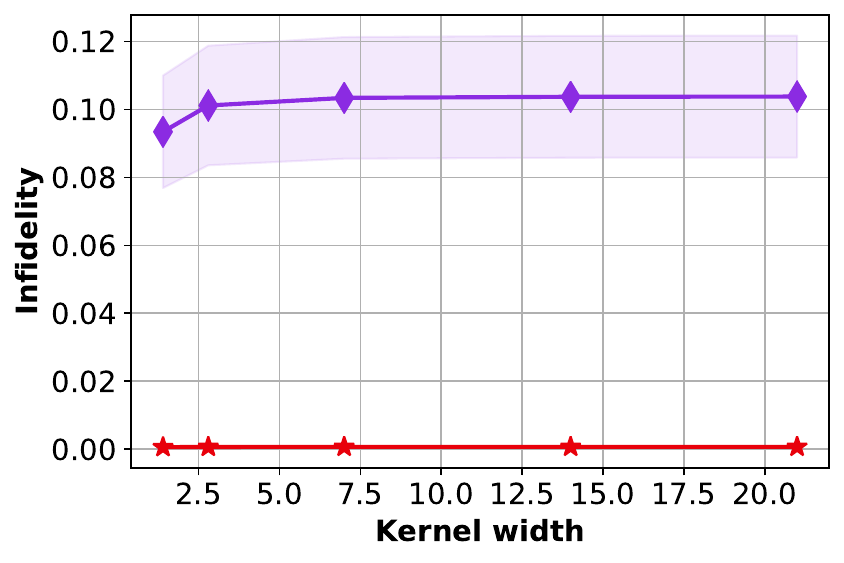}
        \caption{FMNIST (realistic)}
        \label{fig:fmnist_nn_mp_kernel_width_INFD}
    \end{subfigure}
    \begin{subfigure}[b]{0.24\textwidth}
        \vspace{0pt}
        \centering
        \captionsetup{justification=centering}
        \includegraphics[width=\textwidth]{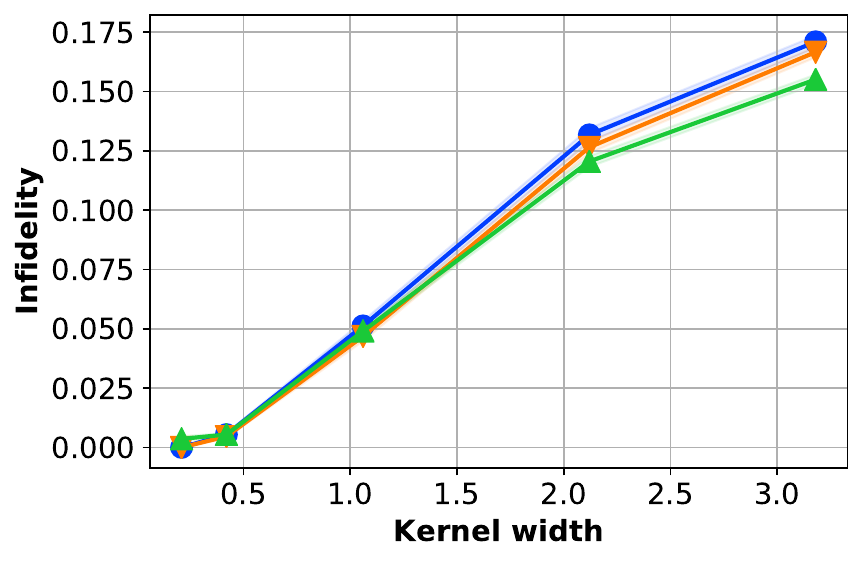}
        \caption{Rotten Tomatoes (random)}
        \label{fig:rotten_mnb_bp_kernel_width_INFD}
    \end{subfigure}
    \begin{subfigure}[b]{0.24\textwidth}
    \vspace{0pt}
    \centering
    \captionsetup{justification=centering}
    \includegraphics[width=\textwidth]{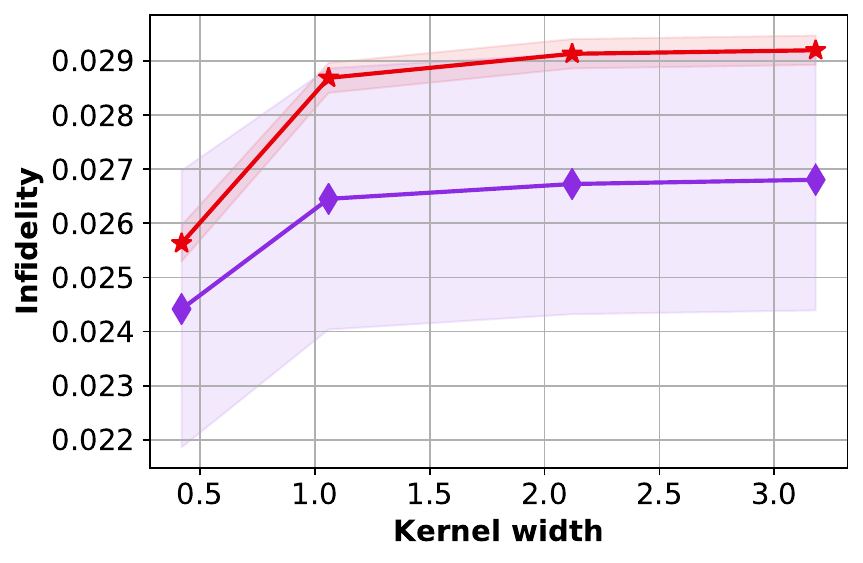}
    \caption{Rotten Tomatoes (realistic)}
    \label{fig:rotten_mnb_mp_kernel_width_INFD}
    \end{subfigure}
    \begin{subfigure}[b]{0.24\textwidth}
    \vspace{0pt}
    \centering
    \captionsetup{justification=centering}
    \includegraphics[width=\textwidth]{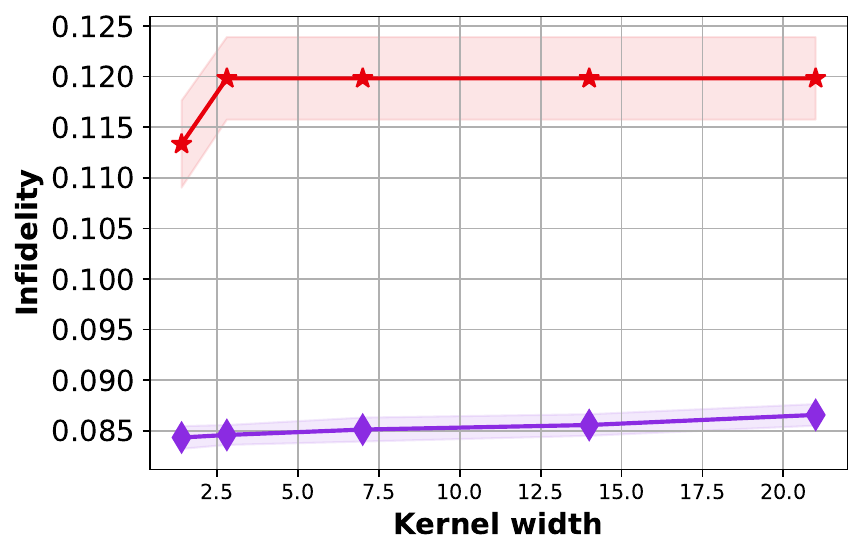}
    \caption{CIFAR10\\\phantom{test}}
    \label{fig:cifar_mp_kernel_width_INFD}
    \end{subfigure}
\caption{Infidelity (INFD) vs. Kernel width.}
%\vspace{-0.5cm}
\label{fig:INFD_kernel_width_all}
\end{figure}

% \clearpage
% \end{tabular}
%%%%%%%%%%%%

\section{Example Feature Attributions in Text Data: MeLIME vs LINEX}
\label{app:text_examples}

Below we see sample attributions by the two methods along with the magnitude of the attributions. Attribution magnitudes are printed with a precision of $10^{-3}$ and shown along with the corresponding words in descending order. 

\subsection{Positive Sentiment}
\begin{verbatim}
enticing and often funny documentary .
MeLIME: documentary funny and enticing often
LINEX : documentary funny often enticing and
MeLIME: 0.517 0.446 0.333 0.317 0.311
LINEX : 0.416 0.377 0.342 0.331 0.330

one-of-a-kind near-masterpiece .
MeLIME: kind near masterpiece
LINEX : masterpiece kind one
MeLIME: 0.832 0.695 0.182 
LINEX : 0.712 0.384 0.381 

a fast , funny , highly enjoyable movie .
MeLIME: enjoyable highly funny fast movie
LINEX : enjoyable highly fast funny movie
MeLIME: 0.550 0.432 0.412 0.389 0.198
LINEX : 0.409 0.389 0.372 0.350 0.326

ferrara's strongest and most touching movie of recent years .
MeLIME: touching years most strongest and 
LINEX : touching most recent strongest and
MeLIME: 0.735 0.490 0.450 0.443 0.427 
LINEX : 0.490 0.488 0.450 0.444 0.407 

saved from being merely way-cool by a basic , credible compassion .
MeLIME: cool basic credible merely from
LINEX: cool credible merely compassion from
MeLIME: 1.514 0.050 0.040 0.029 0.026 
LINEX : 0.358 0.308 0.304 0.299 0.293 

really quite funny .
MeLIME: funny quite really
LINEX : funny quite really
MeLIME: 0.559 0.417 0.233
LINEX : 0.462 0.368 0.275

spare yet audacious . . .
MeLIME: spare yet audacious
LINEX : audacious spare yet
MeLIME: 0.626 0.447 0.395
LINEX : 0.501 0.431 0.422

an engrossing and infectiously enthusiastic documentary .
MeLIME: engrossing documentary and enthusiastic an
LINEX : engrossing documentary an enthusiastic and
MeLIME: 0.593 0.455 0.358 0.354 0.333
LINEX : 0.461 0.407 0.374 0.357 0.350

a wildly funny prison caper .
MeLIME: funny caper wildly prison
LINEX : funny caper prison wildly
MeLIME: 0.541 0.364 0.214 0.193
LINEX : 0.403 0.335 0.245 0.239

this charming but slight tale has warmth , wit 
and interesting characters compassionately portrayed .
MeLIME: charming compassionately and interesting portrayed 
LINEX : charming compassionately has tale portrayed 
MeLIME: 0.690 0.507 0.456 0.444 0.424
LINEX : 0.464 0.435 0.431 0.430 0.429

thoughtful , provocative and entertaining .
MeLIME: thoughtful entertaining and provocative
LINEX : thoughtful entertaining and provocative
MeLIME: 0.612 0.517 0.402 0.395
LINEX : 0.505 0.461 0.415 0.404

the film is quiet , threatening and unforgettable .
MeLIME: quiet unforgettable and film the
LINEX : unforgettable quiet film and is
MeLIME: 0.597 0.483 0.412 0.325 0.303 
LINEX : 0.421 0.416 0.388 0.378 0.338 

a moving tale of love and destruction in unexpected places , unexamined lives .
MeLIME: unexpected moving love tale lives
LINEX : moving unexpected places lives in
MeLIME: 0.692 0.662 0.577 0.538 0.499
LINEX : 0.538 0.530 0.521 0.513 0.501

though frodo's quest remains unfulfilled , a hardy group of 
determined new zealanders has proved its creative mettle .
MeLIME: creative group proved has new 
LINEX : creative quest its proved determined 
MeLIME: 0.602 0.441 0.424 0.402 0.393 
LINEX : 0.410 0.392 0.390 0.385 0.381 
\end{verbatim}

\subsection{Negative Sentiment}
\begin{verbatim}
originality is sorely lacking .
MeLIME: lacking sorely is originality
LINEX : lacking sorely originality is
MeLIME: 0.543 0.381 0.296 0.278
LINEX : 0.430 0.356 0.314 0.271

an ugly , pointless , stupid movie .
MeLIME: stupid pointless ugly movie an
LINEX : stupid pointless ugly movie an
MeLIME: 0.543 0.499 0.385 0.365 0.276
LINEX : 0.446 0.411 0.373 0.360 0.350

so devoid of pleasure or sensuality that it cannot even be dubbed hedonistic .
MeLIME: devoid even be dubbed of
LINEX : devoid so dubbed be cannot 
MeLIME: 0.666 0.416 0.413 0.372 0.344 
LINEX : 0.400 0.392 0.387 0.380 0.368 

neither revelatory nor truly edgy--merely crassly flamboyant 
and comedically labored .
MeLIME: edgy neither nor labored revelatory 
LINEX : edgy neither nor labored truly 
MeLIME: 1.256 0.338 0.277 0.204 0.021 
LINEX : 0.439 0.398 0.398 0.369 0.349 

occasionally funny , sometimes inspiring , often boring .
MeLIME: boring occasionally inspiring sometimes often
LINEX : boring occasionally sometimes often inspiring
MeLIME: 0.669 0.242 0.218 0.210 0.182 
LINEX : 0.377 0.266 0.266 0.250 0.236 

a cumbersome and cliche-ridden movie greased
with every emotional device known to man .
MeLIME: cliche every device movie with
LINEX : cliche every man cumbersome emotional
MeLIME: 0.695 0.449 0.327 0.280 0.268 
LINEX : 0.385 0.361 0.354 0.349 0.309 

ponderous , plodding soap opera disguised as a feature film .
MeLIME: plodding soap ponderous opera disguised 
LINEX : plodding soap film ponderous feature 
MeLIME: 0.579 0.522 0.421 0.408 0.382 
LINEX : 0.442 0.440 0.418 0.406 0.377 

kitschy , flashy , overlong soap opera .
MeLIME: soap flashy opera overlong kitschy
LINEX : soap flashy opera overlong kitschy
MeLIME: 0.499 0.397 0.391 0.358 0.230
LINEX : 0.389 0.362 0.360 0.346 0.300

[a] poorly executed comedy .
MeLIME: poorly comedy executed
LINEX : poorly comedy executed
MeLIME: 0.653 0.348 0.257
LINEX : 0.502 0.335 0.309

a bad movie that happened to good actors .
MeLIME: bad happened movie to that 
LINEX : bad happened to movie actors 
MeLIME: 0.692 0.396 0.371 0.367 0.242 
LINEX : 0.442 0.384 0.367 0.361 0.344 

a complete waste of time .
MeLIME: waste complete time of
LINEX : waste complete time of
MeLIME: 0.614 0.425 0.313 0.247
LINEX : 0.480 0.381 0.348 0.278

don't waste your money .
MeLIME: waste money don your
LINEX : waste money don your
MeLIME: 0.592 0.497 0.408 0.309
LINEX : 0.483 0.450 0.411 0.337

witless and utterly pointless .
MeLIME: pointless witless and utterly
LINEX : pointless witless utterly and
MeLIME: 0.652 0.491 0.263 0.245
LINEX : 0.506 0.444 0.311 0.269
\end{verbatim}

\section{Example Feature Attributions in Image Data: MeLIME vs LINEX}
\label{app:image_examples}
We show feature attributions for individual example images with MeLIME and LINEX with MeLIME perturbations in Figure \ref{fig:fmnist_ind_examples2}. In Figure \ref{fig:fmnist_ind_examples} we show class-wise mean feature attributions along with mean images. In Figure \ref{fig:cifar_ind_examples}, we see examples from CIFAR10. LINEX explanations seem to provide more meaningful feature attributions.

% individual example figures
\begin{figure}[t]
\includegraphics[width=0.5\textwidth]{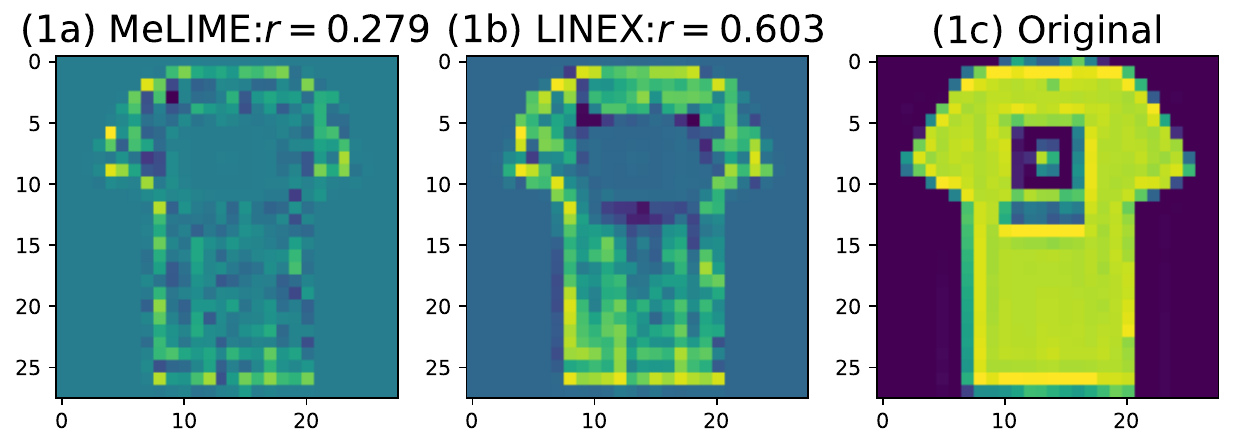}
\includegraphics[width=0.5\textwidth]{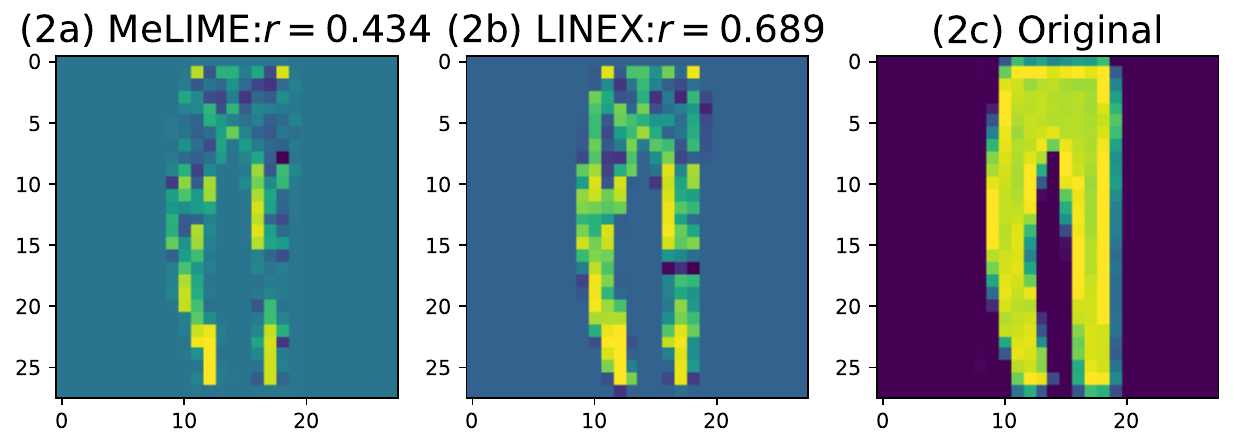}
\includegraphics[width=0.5\textwidth]{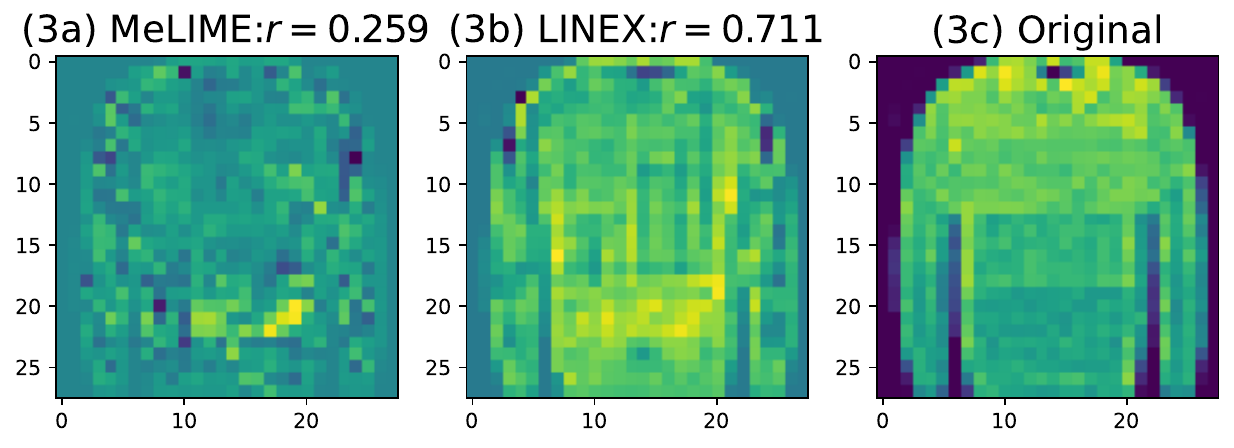}
\includegraphics[width=0.5\textwidth]{figures/fmnist-class-3.pdf}
\includegraphics[width=0.5\textwidth]{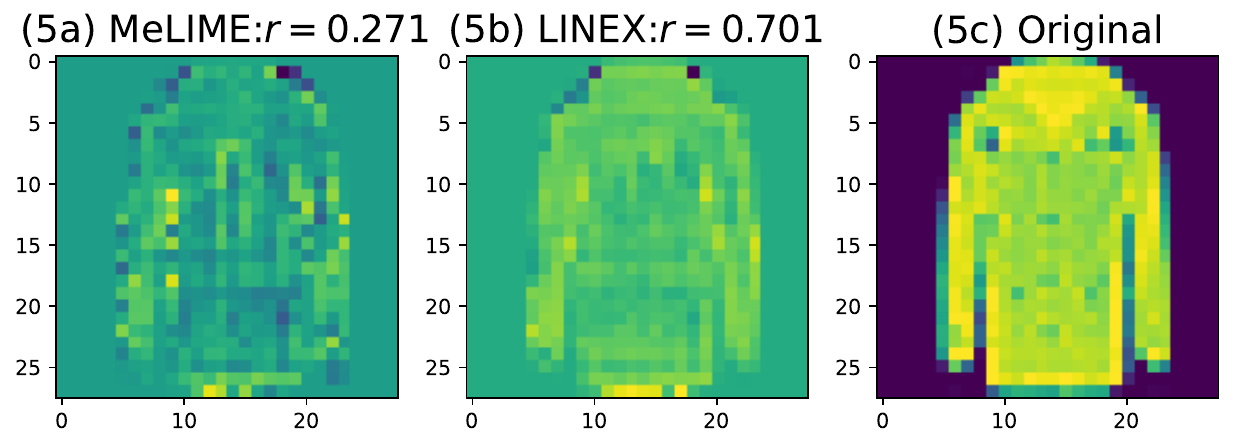}
\includegraphics[width=0.5\textwidth]{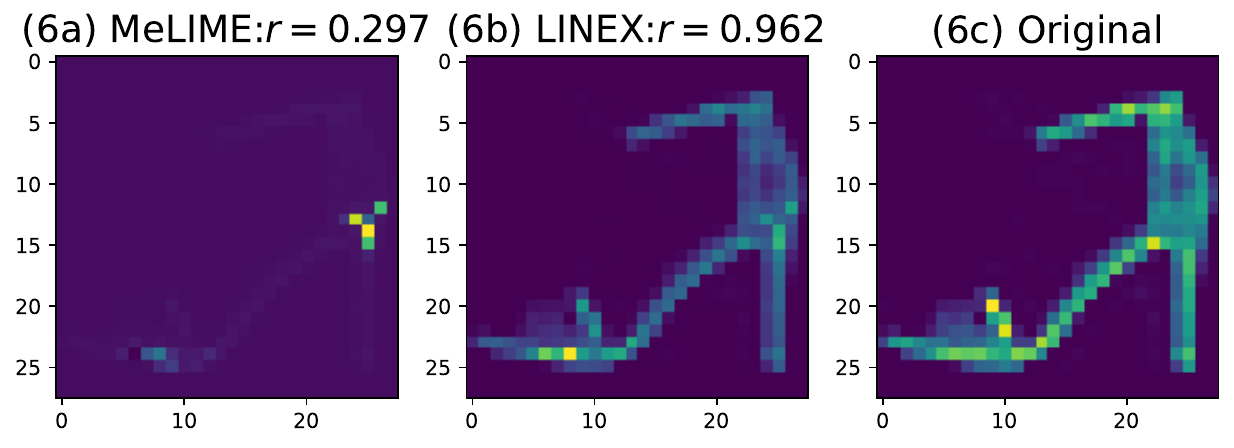}
\includegraphics[width=0.5\textwidth]{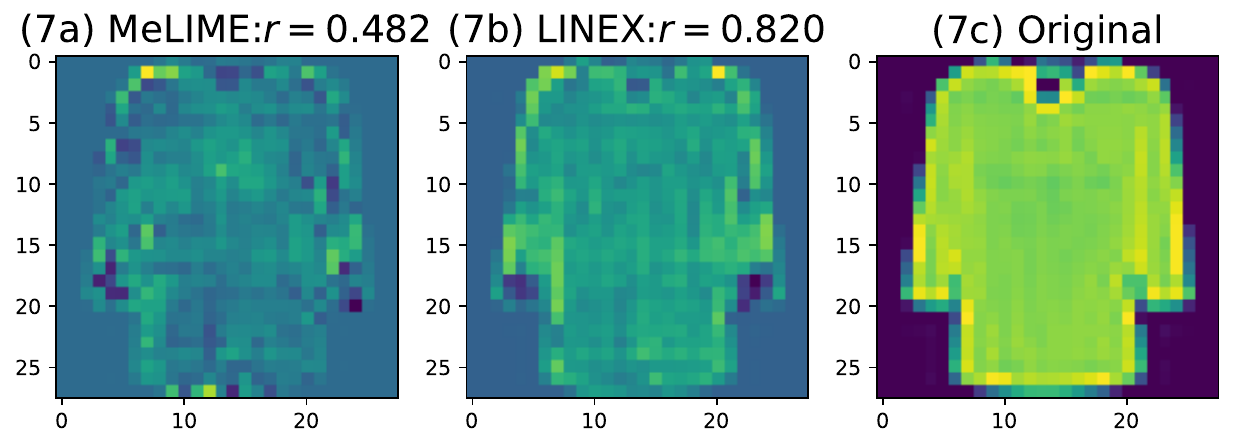}
\includegraphics[width=0.5\textwidth]{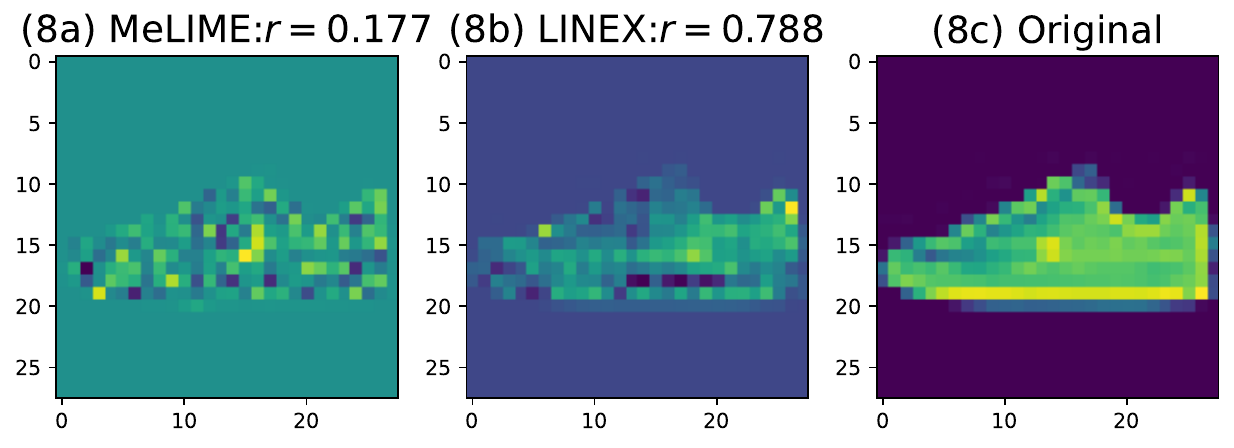}
\includegraphics[width=0.5\textwidth]{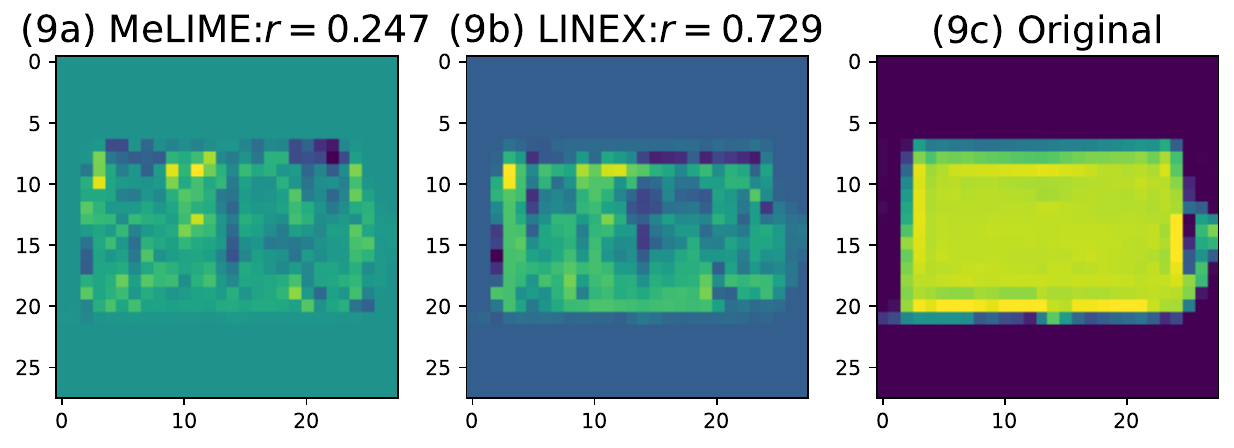}
\includegraphics[width=0.5\textwidth]{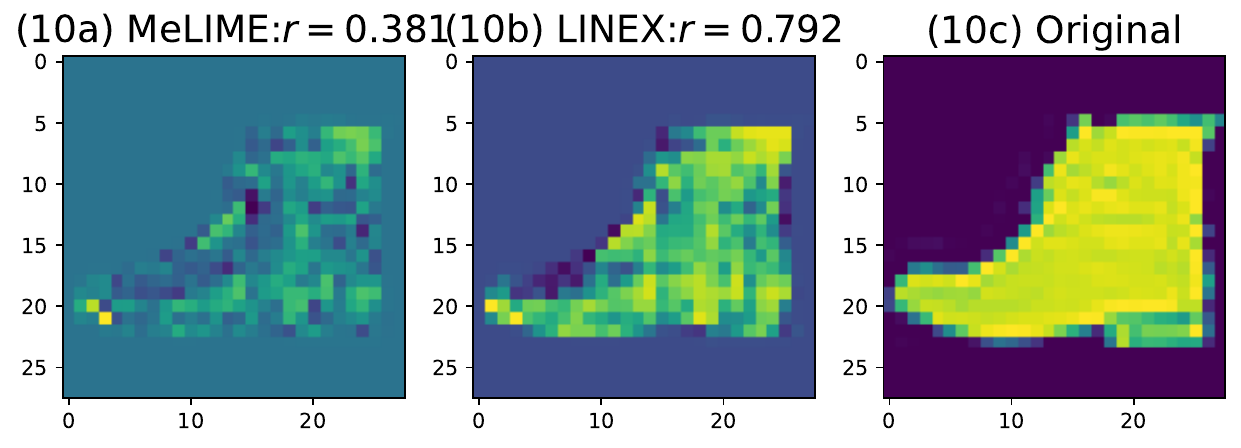}
\caption{Results using individual samples for realistic perturbations for FMNIST dataset for all classes:$1$-$10$ (\emph{T-shirt/top, Trouser, Pullover, Dress, Coat, 
Sandal, Shirt, Sneaker, Bag} and \emph{Ankle boot}). (a) MeLIME feature attributions for an image. (b) LINEX feature attributions for an image. (c) Original image in the class. The $r$ values show Pearson's correlation between feature attributions and the original image from the respective class. We observe that LINEX attributions/explanations exhibit significantly higher correlation with the original image belonging to a particular class (i.e. high CAC).}
\label{fig:fmnist_ind_examples2}
\end{figure}

\begin{figure}[t]
\includegraphics[width=0.5\textwidth]{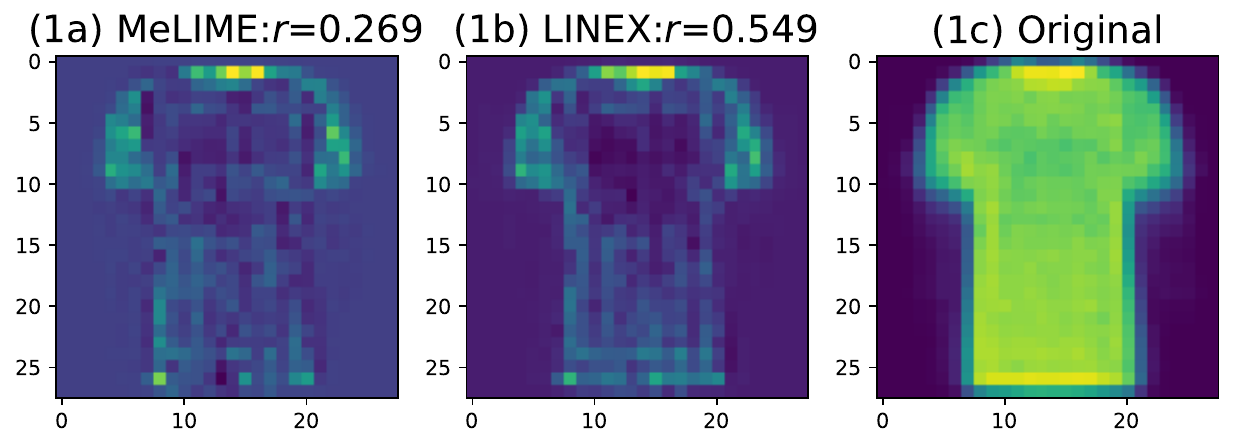}
\includegraphics[width=0.5\textwidth]{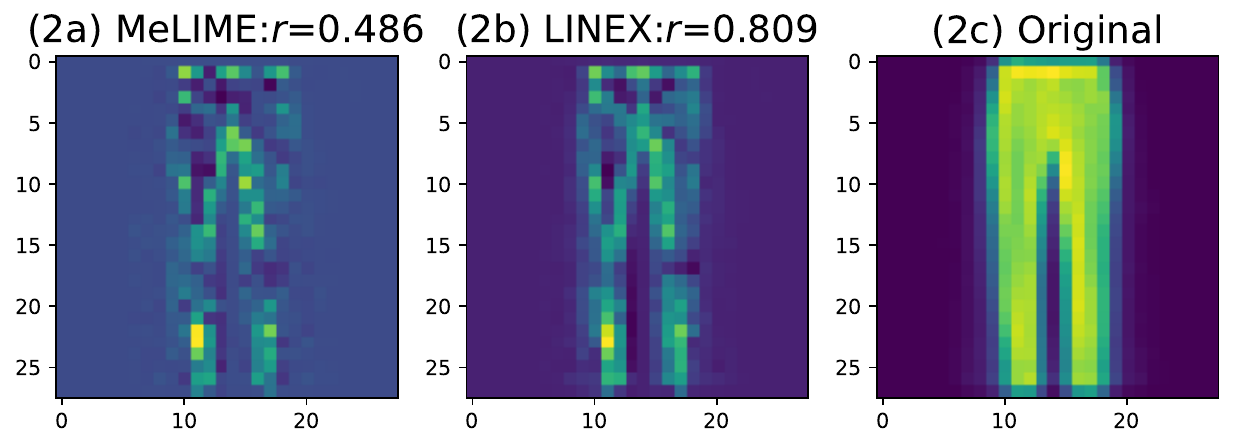}
\includegraphics[width=0.5\textwidth]{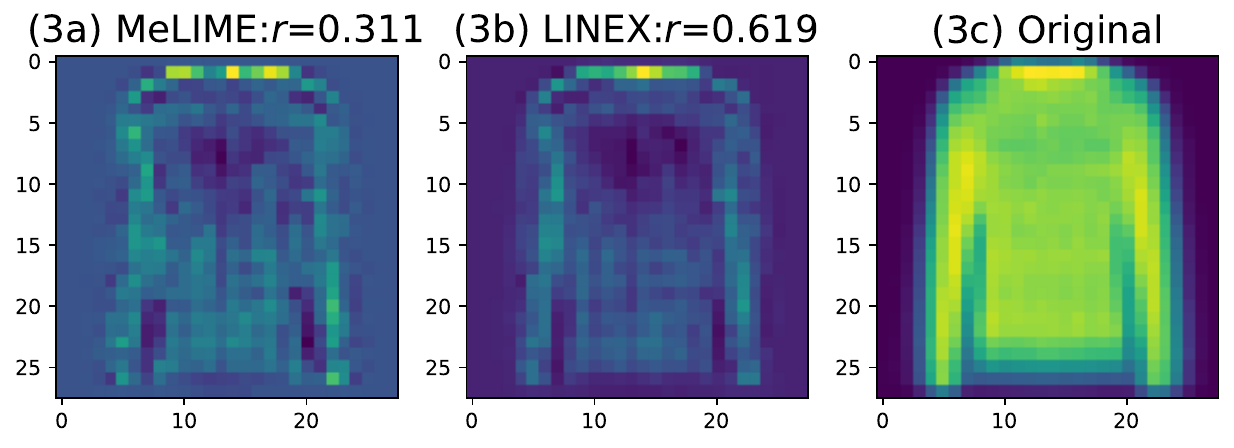}
\includegraphics[width=0.5\textwidth]{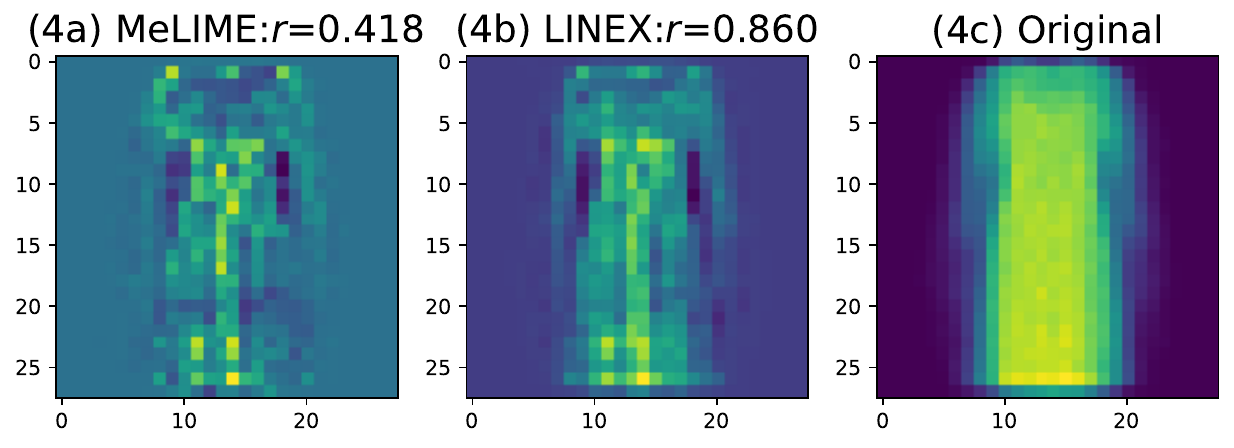}
\includegraphics[width=0.5\textwidth]{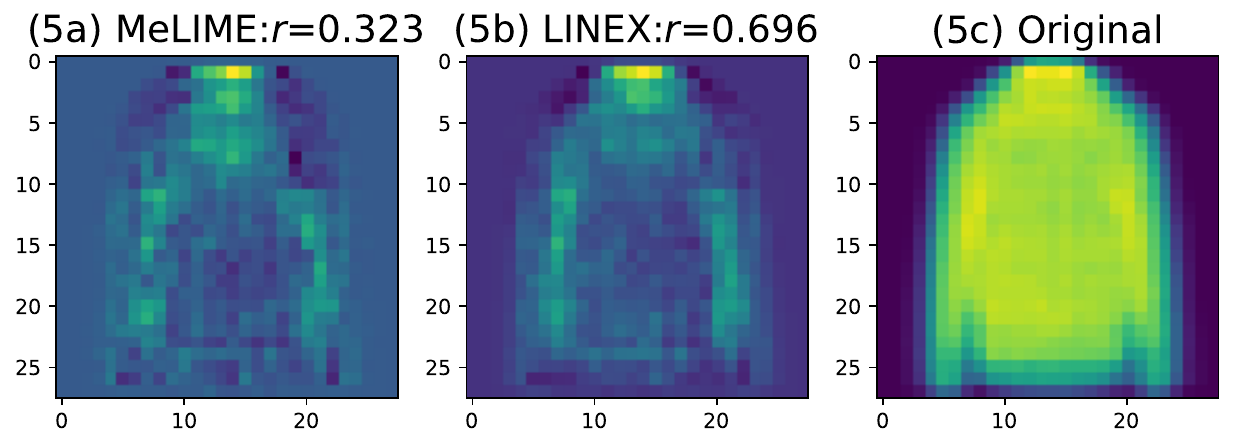}
\includegraphics[width=0.5\textwidth]{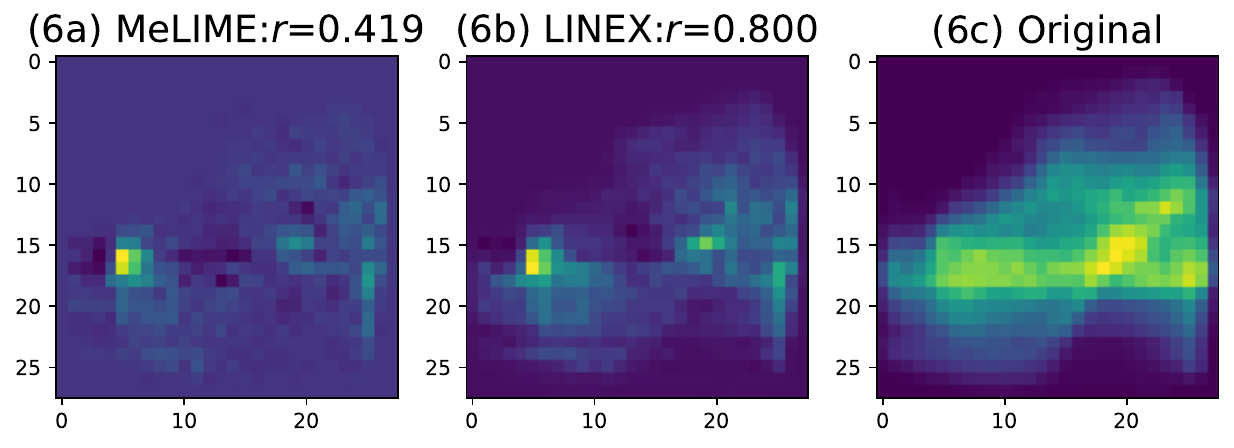}
\includegraphics[width=0.5\textwidth]{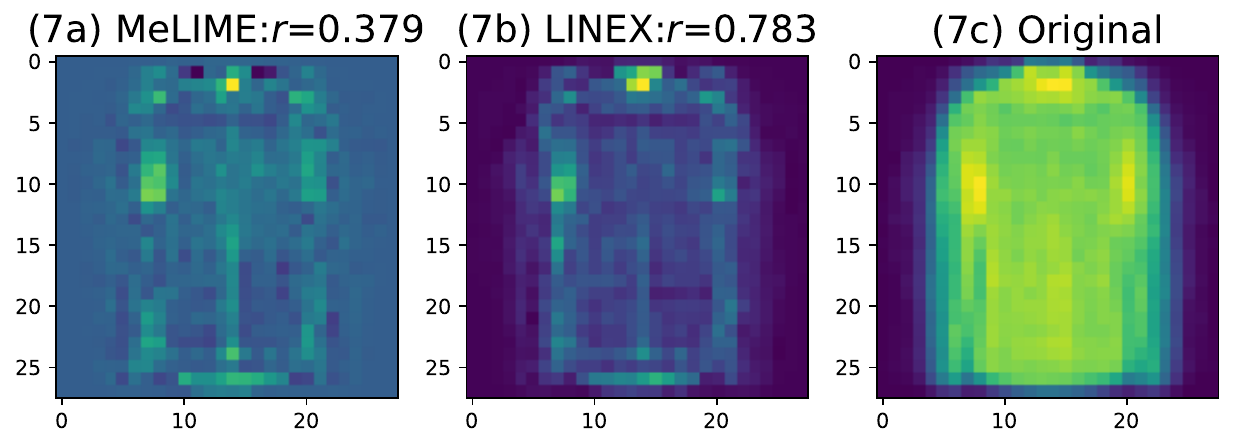}
\includegraphics[width=0.5\textwidth]{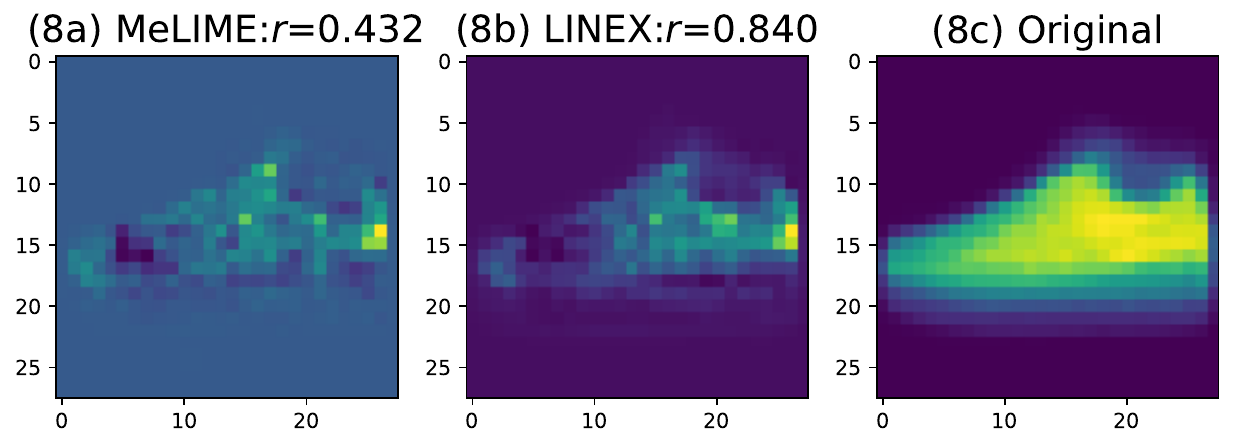}
\includegraphics[width=0.5\textwidth]{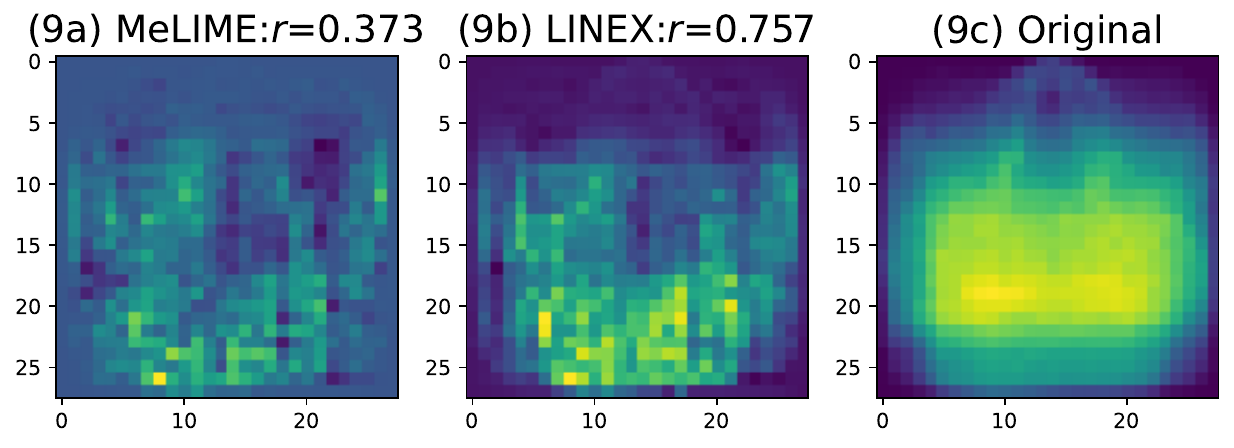}
\includegraphics[width=0.5\textwidth]{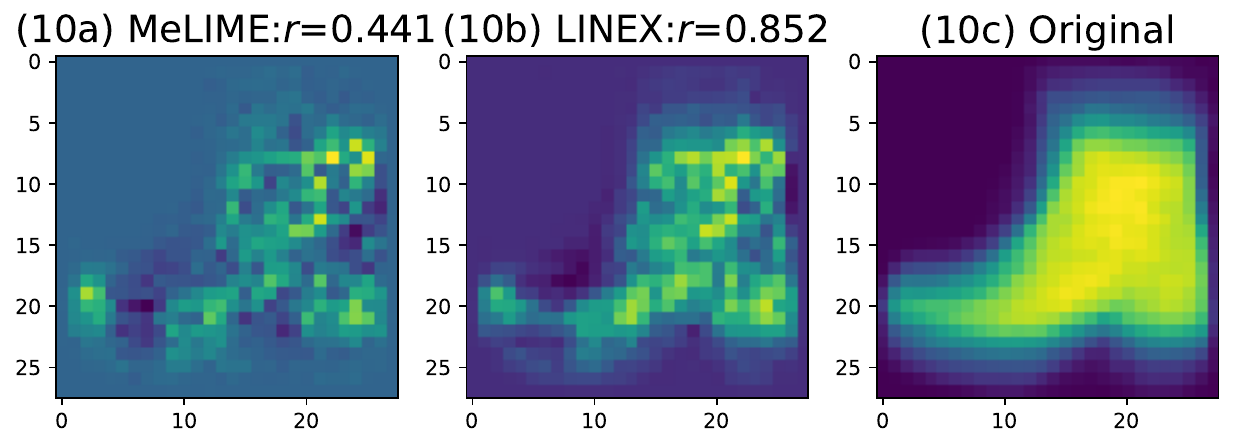}
\caption{Results using realistic perturbations for FMNIST dataset with mean feature importances for all classes:$1$-$10$ (\emph{T-shirt/top, Trouser, Pullover, Dress, Coat, 
Sandal, Shirt, Sneaker, Bag} and \emph{Ankle boot}). (a) Mean feature attributions of all images in the class using MeLIME. (b) Mean feature attributions of all images in the class using LINEX.  (c) Mean of all images in the class. The $r$ values show Pearson's correlation between average feature attributions and mean of the original images from the respective classes. We observe that LINEX explanations/attributions exhibit significantly higher correlation with the original images belonging to a particular class (i.e. high CAC).}
\label{fig:fmnist_ind_examples}
\end{figure}

\begin{figure}[t]
\includegraphics[width=\textwidth]{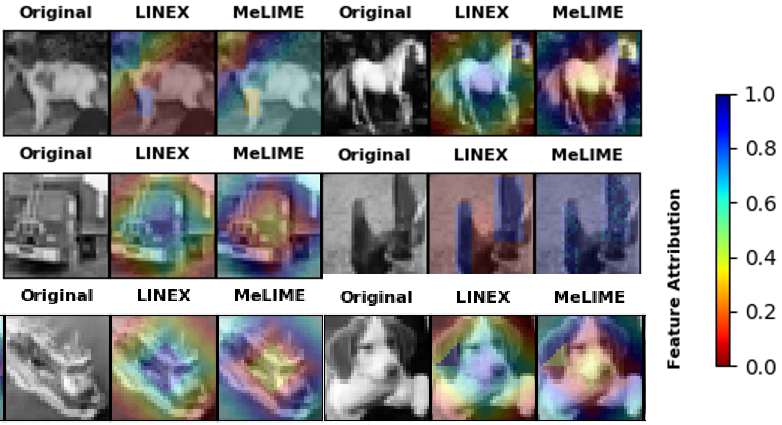}
\caption{Results using realistic perturbations for CIFAR10 dataset. We see above images of a dog, a horse, a truck, a bird, a boat and a dog again randomly selected from CIFAR10. The original images are greyed out here so that the (normalized) attributions are clearly visible. As can be seen LINEX attributions seem to consistently focus on salient features as compared to MeLIME. For example for the first dog image we highlight the head, ears and leg, while MeLIME focuses more on the neck and some of the background. For horse too LINEX focuses on head and body, while MeLIME focuses on the legs and neck. For truck both seem to focus on important features. For bird LINEX hones in on the wings, while MeLIME although giving importance to wings also attributes some of the background. The boat image LINEX focuses on the center of the boat, while Melime on the edges and some of the water around the boat. For the dog face image LINEX focuses on the nose, eyes and ears, while Melime focuses on the ears and neck.}
\label{fig:cifar_ind_examples}
\end{figure}

% \begin{figure}[t]
% \centering
% \includegraphics[width=0.32\textwidth]{figures/legend.pdf}
% \includegraphics[width=0.32\textwidth]{figures/meps_rfr_cnt_CI.pdf}
% \includegraphics[width=0.32\textwidth]{figures/meps_rfr_cnt_Upsilon.pdf}\\
% \vspace{0.5cm}
% \includegraphics[width=0.32\textwidth]{figures/iris_rfc_cnt_CAC.pdf}
% \includegraphics[width=0.32\textwidth]{figures/iris_rfc_cnt_CI.pdf}
% \includegraphics[width=0.32\textwidth]{figures/iris_rfc_cnt_Upsilon.pdf}\\
% \vspace{0.5cm}
% \includegraphics[width=0.32\textwidth]{figures/rotten_mnb_mp_cnt_CAC.pdf}
% \includegraphics[width=0.32\textwidth]{figures/rotten_mnb_mp_cnt_CI.pdf}
% \includegraphics[width=0.32\textwidth]{figures/rotten_mnb_mp_cnt_Upsilon.pdf}

% \caption{Above we see how the different methods behave relative to the stability metrics CAC (left), CI (center) and Unidirectionality ($\Upsilon$) (right) as neighborhood size is varied. Top row are plots for MEPS, center row is for IRIS and last row is for Rotten Tomatoes. Higher values are better for CAC and $\Upsilon$ and lower values are better for CI. We see that LINEX outperforms the competitors in most scenarios. Further, increasing neighborbood size generally improves the stability metrics for LINEX. Reasons for why CAC is not plotted for MEPS and why LIME variants and LINEX/random are not shown for Rotten Tomatoes are provided in setup details of the main manuscript.}
% %\vspace{-0.5cm}
% \label{fig:stabsample}
% \end{figure}

\section{Results for All Methods Including SHAP}
\label{app:res_shap}
In Table \ref{tab:results2}, we provide the results for SHAP along with all methods for easy comparison. Note that SHAP does not have standard errors since it is computed only once per test point. The INFD values for SHAP are miniscule since SHAP values add up to the predictions by definition. In order to compute GI, CI, $\Upsilon$, CAC, we convert the SHAP values to SHAP attributions \cite{amparore2021trust} first and follow the same approach used by other explanation methods.

\begin{table}[t]
\small
\caption{Comparing the different methods (including SHAP) using metrics infidelity (INFD), generalized infidelity (GI), coefficient inconsistency (CI), class attribution consistency (CAC) and unidirectionality ($\Upsilon$).}
\addtolength{\tabcolsep}{-4pt}
\begin{tabular}{|c | c | c | c | c | c | c |}
% \begin{tabular}{c  c  c  c  c  c  c }
\hline
\textit{Dataset} & \textit{Method} & INFD $\downarrow$ & GI $\downarrow$ & CI $\downarrow$ & $\Upsilon$ $\uparrow$ & CAC $\uparrow$ \\
%   & \multicolumn{1}{c}{SP-LIME} & \multicolumn{1}{c}{Two Step} & \multicolumn{1}{c}{MAME} &
%  \multicolumn{1}{c}{SP-LIME} & \multicolumn{1}{c}{Two Step} & \multicolumn{1}{c}{MAME} \\
\hline
\hline
\multirow{8}{*}{\textit{IRIS}} 

& {LIME}  &  $0.015 \pm 0.011$ & $0.132 \pm 0.042$ & $0.319 \pm 0.132$ & $0.646 \pm 0.040$ & $0.667 \pm 0.167$ \\
& {S-LIME}  &  $0.015 \pm 0.010$ & $0.077 \pm 0.011$ & $0.143 \pm 0.045$ & $0.704 \pm 0.037$ & $0.878 \pm 0.034$ \\
& {LINEX/rand} &  $0.013 \pm 0.009$ & $\mathbf{0.052 \pm 0.008}$ & $\mathbf{0.044 \pm 0.013}$ & $\mathbf{0.802 \pm 0.043}$ & $\mathbf{0.921 \pm 0.042}$ \\

% & {{NB/rand}}  &  {$0.013 \pm 0.009$} & {$0.062 \pm 0.029$} & {$0.319 \pm 0.132$} & {$0.646 \pm 0.040$} & {$0.667 \pm 0.167$} \\

& {{NB/rand}}  &  {$0.040 \pm 0.010$} & {$0.067 \pm 0.003$} & {$0.319 \pm 0.132$} & {$0.646 \pm 0.040$} & {$0.667 \pm 0.167$} \\

\cline{2-7}

& {MeLIME}  & $0.008 \pm 0.003$ & $0.049 \pm 0.018$ & $0.219 \pm 0.108$ & $0.629 \pm 0.013$ & $0.464 \pm 0.100$\\
& {LINEX/real}  & $0.009 \pm 0.003$ & $\mathbf{0.029 \pm 0.003}$ & $\mathbf{0.024 \pm 0.002}$ & $\mathbf{0.744 \pm 0.044}$ & $\mathbf{0.942 \pm 0.023}$\\

% & {{NB/real}}  & {$0.020 \pm 0.012$} & {$0.038 \pm 0.011$} & {$0.219 \pm 0.108$} & {$0.629 \pm 0.013$} & {$0.464 \pm 0.100$}\\
& {{NB/real}}  & {$0.058 \pm 0.022$} & {$0.034 \pm 0.000$} & {$0.219 \pm 0.108$} & {$0.629 \pm 0.013$} & {$0.464 \pm 0.100$}\\

\cline{2-7}
& {MAPLE}  & $0.009 \pm 0.001$ & $0.038 \pm 0.004$ & $0.261 \pm 0.033$ & $0.458 \pm 0.032$ & $0.586 \pm 0.035$\\
& {LINEX/mpl}  & $0.013 \pm 0.000$ & $\mathbf{0.020 \pm 0.000}$ & $\mathbf{0.026 \pm 0.002}$ & $\mathbf{0.694 \pm 0.008}$ & $\mathbf{0.929 \pm 0.004}$\\

\cline{2-7}
& {SHAP}  & $0.007$ & $0.197$ & $0.248$ & $0.664$ & $0.524$\\

\hline
\hline
\multirow{6}{*}{\textit{MEPS}} 
& {LIME} & $0.158 \pm 0.066$ & $0.214 \pm 0.041$ & $0.005 \pm 0.001$ & $0.981 \pm 0.006$ & \multirow{4}{*}{NA}\\
& {S-LIME} & $0.158 \pm 0.066$ & $0.214 \pm 0.042$ & $0.005 \pm 0.001$ & $0.974 \pm 0.008$ & \\
& {LINEX/rand} & $\mathbf{0.130 \pm 0.052}$ & $\mathbf{0.164 \pm 0.021}$ & $0.003 \pm 0.001$ & $0.979 \pm 0.006$ & \\
& {NB/rand} & {$0.275 \pm 0.062$} & {$0.311 \pm 0.079$} & {$0.005 \pm 0.001$} & {$0.981 \pm 0.006$} & \\

\cline{2-7}
& {MAPLE} & $\mathbf{0.063 \pm 0.000}$ & $\mathbf{0.067 \pm 0.000}$ & $0.007 \pm 0.000$ & $0.957 \pm 0.000$ & \multirow{2}{*}{NA}\\
& {LINEX/mpl}  & $0.098 \pm 0.001$ & $0.094 \pm 0.001$ & $0.007 \pm 0.000$ & $0.950 \pm 0.000$	& \\
                            
\cline{2-7}
& {SHAP}  & $0.000$ & $0.091$ & $0.009$ & $0.940$ & NA \\

\hline
\hline
\multirow{4}{*}{\textit{FMNIST}} 
& {LIME}  &  $0.162 \pm 0.003$ & \multirow{4}{*}{NA} & \multirow{4}{*}{NA} &	\multirow{4}{*}{NA} & \multirow{4}{*}{NA} \\
& {S-LIME}  & $0.142 \pm 0.003$ &  &  &	&  \\
& {LINEX/rand} & $0.149 \pm 0.002$ &  &	 &	& \\
& {NB/rand} & {$0.207 \pm 0.000$} &  &	 &	& \\

\cline{2-7}
& {MeLIME}  & $\mathbf{0.001 \pm 0.000}$ & $\mathbf{0.277 \pm 0.000}$ & $0.007 \pm 0.000$ & $0.769 \pm 0.000$ & $0.327 \pm 0.000$ \\
& {LINEX/real}  & $0.100 \pm 0.002$ & $0.304 \pm 0.001$ & $0.002 \pm 0.000$ & $\mathbf{0.780 \pm 0.000}$ & $\mathbf{0.649 \pm 0.001}$ \\
& {NB/real}  & {$0.017 \pm 0.000$} & {$0.446 \pm 0.000$} & {$0.007 \pm 0.000$} & {$0.769 \pm 0.000$} & {$0.327 \pm 0.000$} \\

\cline{2-7}
& {SHAP} & $0.000$ & $1.962$ & $0.589$ & $0.551$ & $0.038$ \\

\hline
\hline

\multirow{5}{*}{\textit{CIFAR10}} 
& {LIME}  &  $0.191 \pm 0.005$ & \multirow{4}{*}{NA} & \multirow{4}{*}{NA} &	\multirow{4}{*}{NA} & \multirow{4}{*}{NA} \\
& {S-LIME}  & $0.185 \pm 0.002$ &  &  &	&  \\
& {LINEX/rand} & $0.186 \pm 0.002$ &  &	 &	& \\
& {NB/rand}  & {$0.208 \pm 0.001$} &   &   &	 &  \\                              
\cline{2-7}
& {MeLIME}  & $0.100 \pm 0.003$ & $0.412 \pm 0.007$ & $0.014 \pm 0.000$ & $0.546 \pm 0.003$ & \multirow{3}{*}{NA} \\
& {LINEX/real}  & $0.090 \pm 0.005$ & $\mathbf{0.279 \pm 0.001}$ & $\mathbf{0.006 \pm 0.000}$ & $\mathbf{0.679 \pm 0.004}$ &  \\
& {NB/real} &  {$0.103 \pm 0.002$} & {$0.398 \pm 0.004$} & {$0.014 \pm 0.000$} & {$0.546 \pm 0.003$} &\\
\cline{2-7}
& {SHAP} & $0.003$ & $1.376$ & $0.398$ & $0.512$ & NA \\
\hline
\hline

\multirow{4}{4em}{\textit{Rotten Tomatoes}}
& {LIME}  & $0.079 \pm 0.036$ & \multirow{4}{*}{NA} & \multirow{4}{*}{NA} &	\multirow{4}{*}{NA} & \multirow{4}{*}{NA} \\
& {S-LIME}  & $0.075 \pm 0.035$ &  &  &	&  \\
& {LINEX/rand} & $0.069 \pm 0.032$ &  &	 &	& \\
& {{NB/rand}} & {$0.241 \pm 0.007$} &  &	 &	& \\

\cline{2-7}
& {MeLIME}  & $\mathbf{0.029 \pm 0.001}$ & $0.391 \pm 0.000$ & $0.000 \pm 0.000$ & $0.999 \pm 0.000$ & $0.909 \pm 0.000$\\
& {LINEX/real}  & $0.053 \pm 0.000$ & $\mathbf{0.361 \pm 0.000}$ & $0.000 \pm 0.000$ & $1.000 \pm 0.000$ & $\mathbf{0.953 \pm 0.001}$ \\
& {NB/real}  & {$0.035 \pm 0.000$} & {$0.535 \pm 0.000$} & $0.000 \pm 0.000$ & $0.999 \pm 0.000$ & $0.909 \pm 0.000$\\

\cline{2-7}
& {SHAP} & $0.000$ & $0.384$ & $0.008$ & $0.999$ & $0.015$   \\
\hline

\end{tabular}
\label{tab:results2}
\vspace{-0.5cm}
\end{table}

\section{Error Analysis of LINEX}
\label{app:err_analysis}

We perform error analysis for LINEX to gain better understanding about the method. We choose FMNIST dataset for doing this since, LINEX/real under performs MeLIME in terms of the INFD measure here (see Table \ref{tab:results}) more heavily compared to other datasets and so we wanted to investigate the reasons for this. This also happens to be one of the higher dimensional datasets that is intuitive to visualize and understand.

We start by observing that even though LINEX/real underperforms in the INFD metric, the gap is not so great in the GI metric, which suggests that MeLIME may be overfitting explanations here. We also note that in terms of CI, $\Upsilon$, and CAC metrics, LINEX/real clearly outperforms MeLIME.

We now choose a sample of images from the dataset where LINEX/real has highest instance-level infidelity numbers and display them in Figure \ref{fig:error_analysis_linex}. Just looking at the explanations and the corresponding original images visually, it is evident that LINEX/real highlights the prominent features like sleeves and collar in a shirt, handles of the bags, outlines of the boots/shoes, even though the infidelity values are high. However, MeLIME misses out on some of these prominent features and focuses only on optimizing the local fit. The fact that LINEX zeroes in on important features also provides additional evidence for the closeness of GI metrics between the two methods, and the better performance of LINEX/real with CI, $\Upsilon$, and CAC metrics.

This conclusion is also verified when we look at the performance of LINEX at a class level. In Figure \ref{fig:error_analysis_agg}, we see two classes one where the infidelity of LINEX is low (i.e. Trousers class) and the other where its infidelity is high (i.e Shirt class). As can be seen since the Trousers class has examples with less superfluous features (viz. varied designs) focusing on which might reduce infidelity but are not critical for determination of the class, LINEX does better in terms of infidelity on the prior. However, although infidelity is higher for the latter Shirt class it does much better on other metrics such as GI, CAC, CI and $\Upsilon$ indicating that LINEX truly focuses on robust features.

\begin{figure}[t]
\centering
\includegraphics[width=0.65\textwidth]{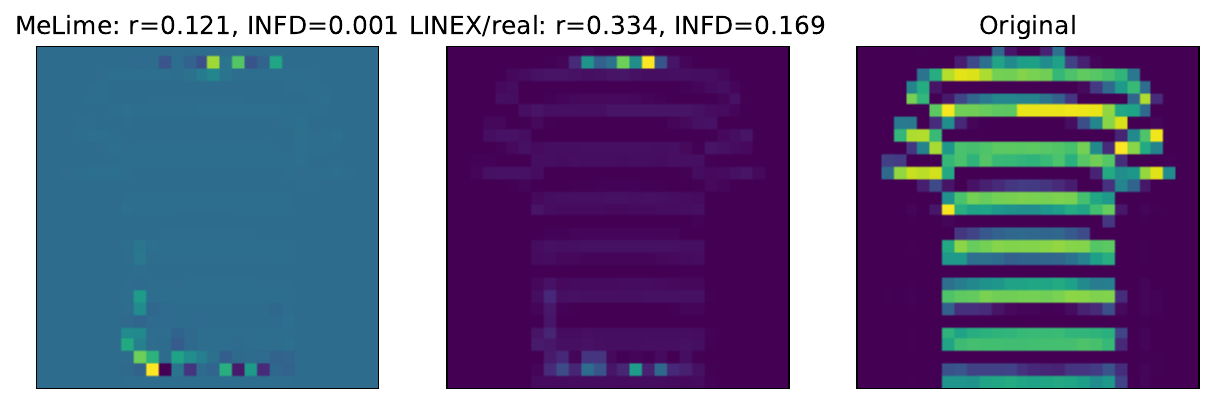}
\includegraphics[width=0.65\textwidth]{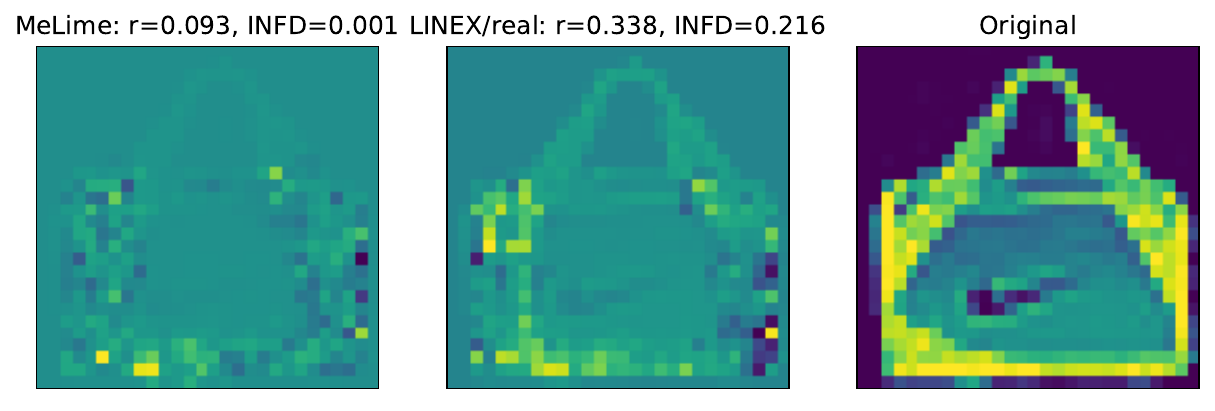}
\includegraphics[width=0.65\textwidth]{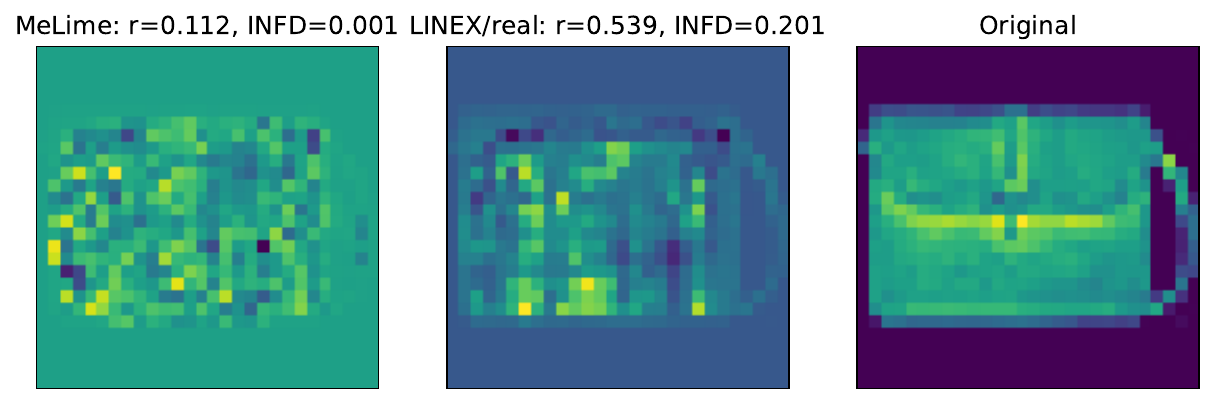}
\includegraphics[width=0.65\textwidth]{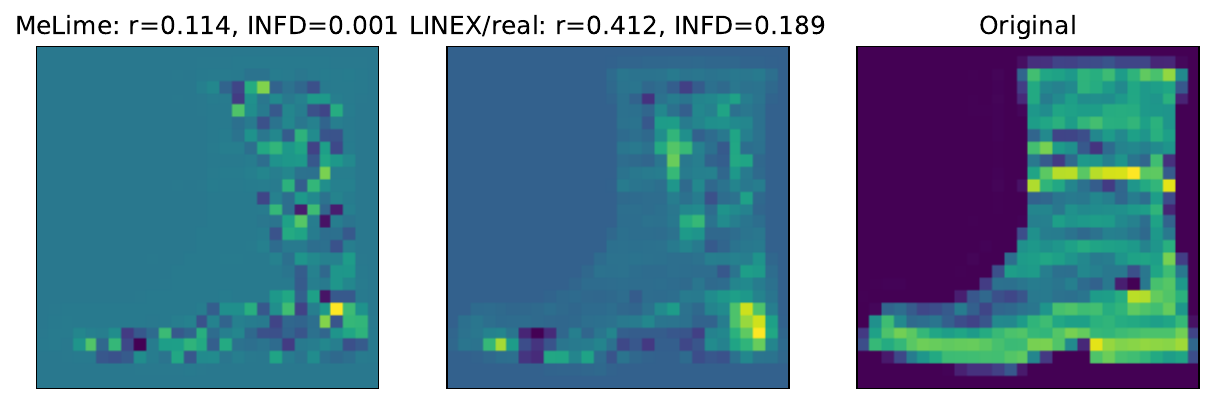}
\includegraphics[width=0.65\textwidth]{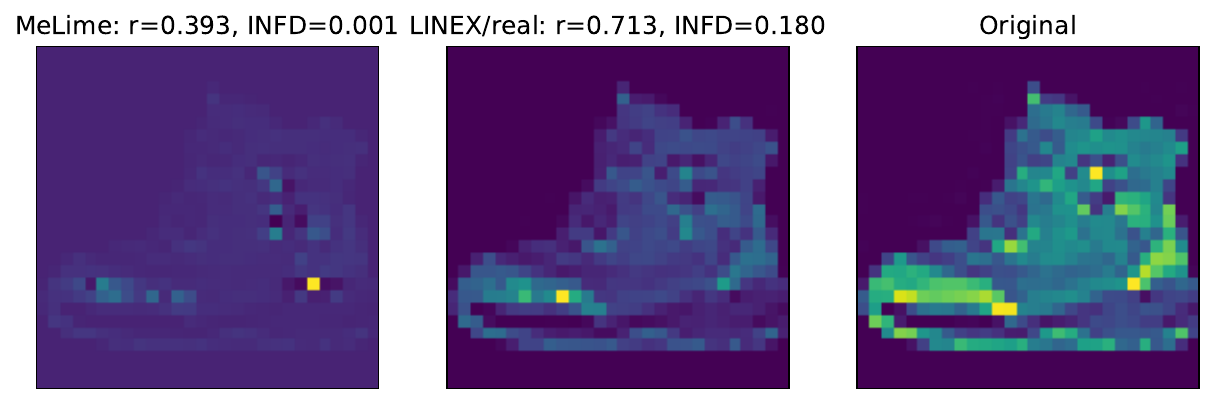}
\includegraphics[width=0.65\textwidth]{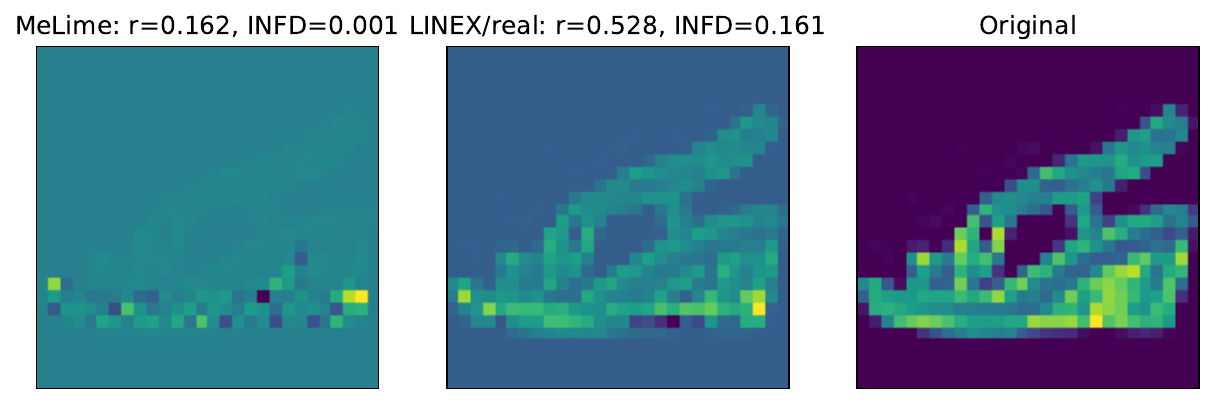}
\caption{Error analysis for a chosen set of examples in FMNIST using MeLIME and LINEX/real methods. The three columns are the MeLIME feature attributions, LINEX/real feature attributions, and the original images. The rows correspond to different examples. We show the Pearson's correlation coefficient between feature attributions and mean of the original images from the respective classes ($r$) and instance-level infidelity (INFD) measures. LINEX seems to highlight important features like stripes in the t-shirt, handles of the bags, outlines of the boots/shoes more prominently, while MeLIME seems to overfit to the data while missing out on highlighting some key features prominently.}
\label{fig:error_analysis_linex}
\end{figure}
\begin{figure}[t]
\centering
\includegraphics[width=\textwidth]{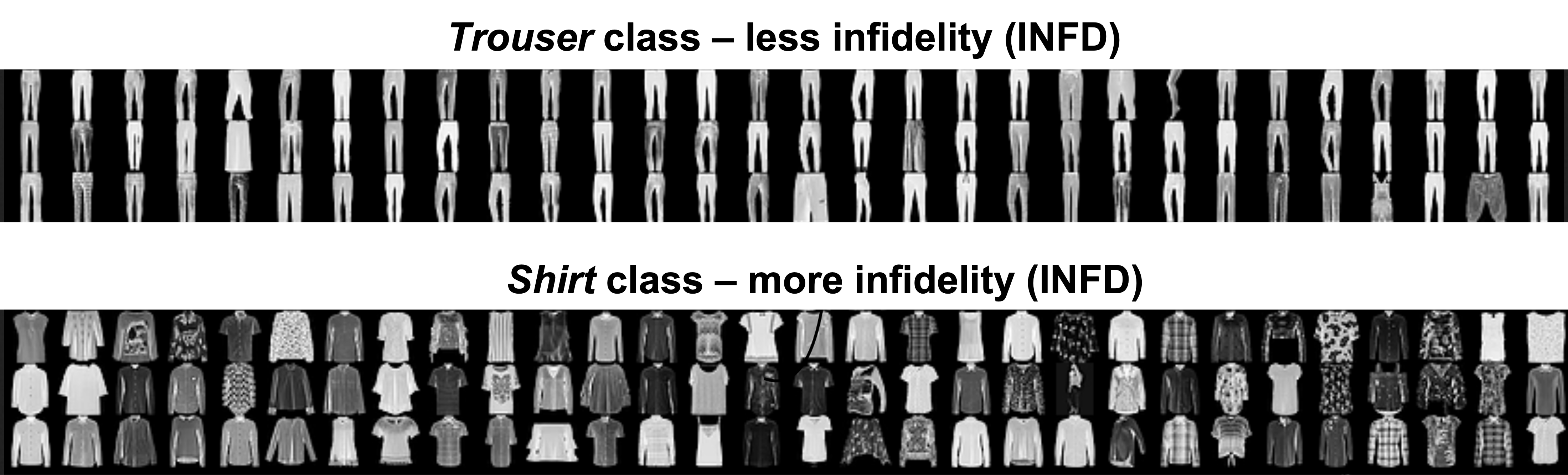}
\caption{We see above that infidelity is lower for Trousers class for LINEX as compared with the Shirts class. A reason for this is that the trousers are more plain with less superfluous features such as the different designs in shirts. Since LINEX focuses on robust features  focusing excessively on the designs is not critical for it to determine a shirt, albeit focusing on these designs might reduce infidelity. Advantage of it relying on robust features is however apparent when we look at other metrics such GI, CAC, CI and $\Upsilon$ as seen in Table \ref{tab:results} where it is much closer to or superior to MeLIME. }
\label{fig:error_analysis_agg}
\end{figure}

% Results using realistic perturbations for FMNIST dataset with mean feature importances for all classes:$1$-$10$ (\emph{T-shirt/top, Trouser, Pullover, Dress, Coat, 
% Sandal, Shirt, Sneaker, Bag} and \emph{Ankle boot}). (a) Mean feature attributions of all images in the class using MeLIME. (b) Mean feature attributions of all images in the class using LINEX.  (c) Mean of all images in the class. The $r$ values show Pearson's correlation between average feature attributions and mean of the original images from the respective classes. We observe that LINEX explanations/attributions exhibit significantly higher correlation with the original images belonging to a particular class (i.e. high CAC).

\section{Ablation Analysis of Important Features for Various Explanation Methods}
\label{app:ablation_analysis_summary}
We wanted to analyze the most challenging case for us in the reported experiments which is on the FMNIST dataset where we are more worse than MeLIME in terms of INFD than any of the other setups. We thus assess if the features deemed important - those with the largest coefficients - by the explanation methods are indeed important for the black box model to make their predictions. To assess this, we set the we set a fraction of features (pixel values) corresponding to the top coefficients of MeLIME and LINEX/realistic to a baseline value and run the modified images again through the black box model - this is what we mean by ablation here. The baseline value here was chosen to be -1 since that is the  value of the background pixels. We then used two measures to assess the quality of explanations - higher values being better for both. The first measure is mean absolute error between the predicted scores before and after ablation, corresponding to the original predicted class. The second measure is the fraction of images that changed their predicted class after ablation. We see from Figure \ref{fig:ablation_analysis_fmnist} that LINEX/realistic substantially outperforms MeLIME in both these measures, clearly demonstrating the relevance of features chosen by our method to the black box.

\begin{figure}[t]
\centering
\includegraphics[width=0.49\textwidth]{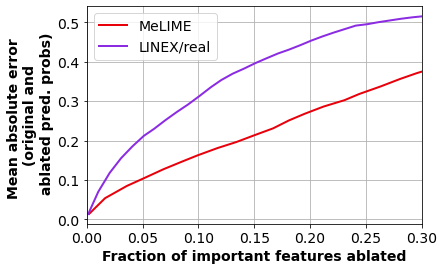}
\includegraphics[width=0.49\textwidth]{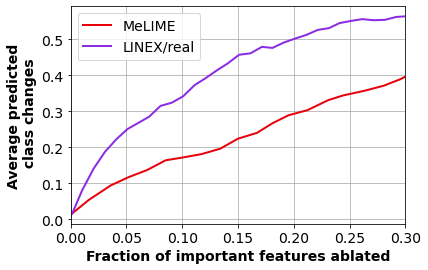}
\caption{Ablation analysis to determine if the features deemed important by the explanation methods are actually considered important for prediction by the black box model. We see that features chosen by LINEX impact the prediction of the black box model much more than those chosen by MeLIME. This is true with respect to both MAE measure (left) between the predicted probabilities before and after ablation for winning (or argmax) class, and the change in predicted classes (right) before and after ablation. Higher values here mean that the features chosen by the explanations are more relevant for the black box to make its predictions. The maximum value of both measures is 1.0.}
\label{fig:ablation_analysis_fmnist}
\end{figure}

\section{Error Analysis of LINEX based on Ablation}
\label{app:err_analysis_ablation}

Highlighting stable features for examples near non-linearities is a key strength of LINEX. However, in some cases for examples near class boundaries it may ignore sensitive features as we show in this demonstration.

In Figure \ref{fig:error_analysis_linex_ablation}, we show 6 examples that are appear to be close to class boundaries. We ablate pixels corresponding to top $15\%$ of important features chosen by MeLIME and LINEX/realistic using the approach discussed in Section \ref{app:ablation_analysis_summary}. Ablation based on MeLIME importances meaningfully changes classes, whereas ablation by LINEX importances does not. The changes in prediction for MeLIME ablation for the six images are respectively from \textit{Dress} to \textit{Trouser}, \textit{Sneaker} to \textit{Sandal}, \textit{Pullover} to \textit{Dress}, \textit{Sneaker} to \textit{Sandal}, \textit{Bag} to \textit{Pullover}, and \textit{Sneaker} to \textit{Sandal}. The new class assignment looks reasonable looking at the ablated images. We also see that the changes in class probabilities for the original class ($p$) are much higher after MeLIME ablation compared to LINEX/realistic ablation.

MeLIME ablated images for the first example has structures that look like trouser legs,  for the second, fourth and sixth examples the area around the heel is more open making the original sneaker look like a sandal, for the third example, there is a hole in the hooded part of the pullover making it resemble a dress. The fifth example is classified as a pullover possibly because of the elongated structures on the sides that look like hands. 

Note that such cases of LINEX under performing are rare though as is confirmed by its superior performance in Figure \ref{fig:ablation_analysis_fmnist}.

\begin{figure}[t]
\centering
\includegraphics[width=\textwidth]{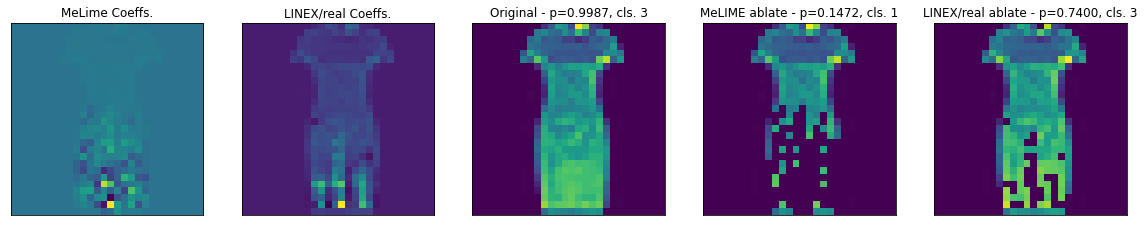}
\includegraphics[width=\textwidth]{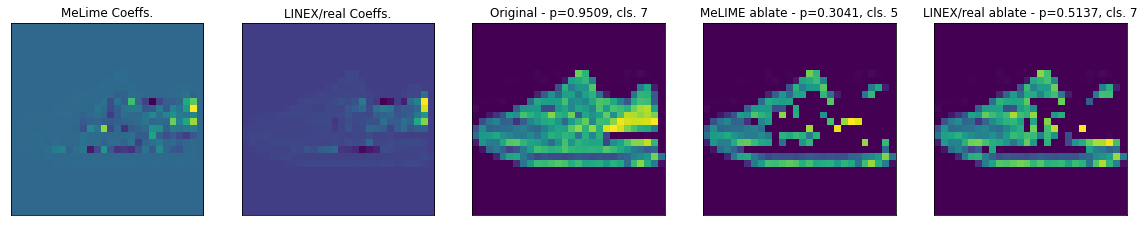}
\includegraphics[width=\textwidth]{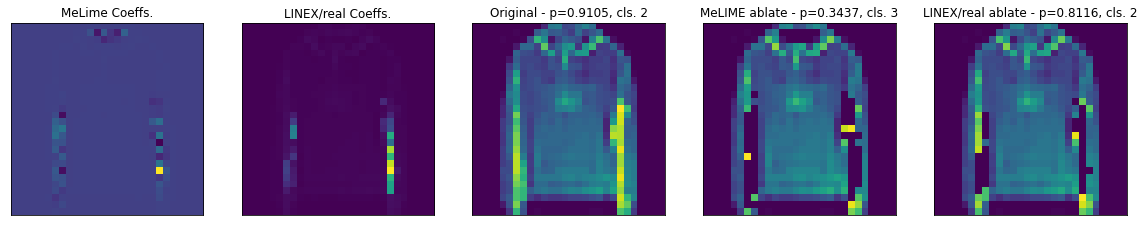}
\includegraphics[width=\textwidth]{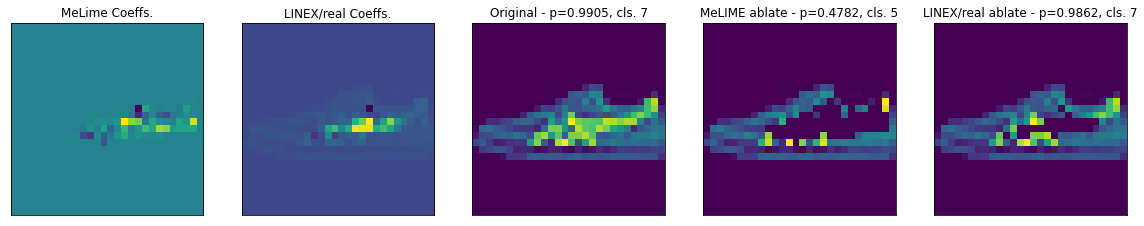}
\includegraphics[width=\textwidth]{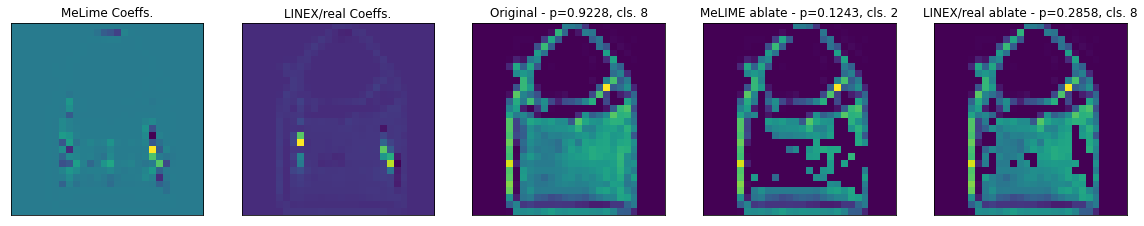}
\includegraphics[width=\textwidth]{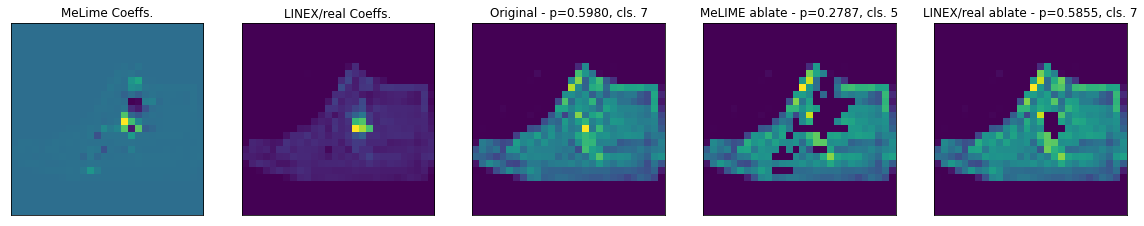}
\caption{
Error analysis for a chosen set of examples in FMNIST using MeLIME and LINEX/realistic methods, using ablation of important features. Each row shows results for a particular image. The columns show the: (a) MeLIME coefficients, (b) LINEX/realistic coefficients, (c) the original image along with its predicted class (cls.) and predicted probability for that class ($p$), (d) the image after MeLIME ablation along with the predicted probability for the original class ($p$) and the new class prediction (cls.), and (e) the image after LINEX/realistic ablation along with the predicted probability for the original class ($p$) and the new class prediction (cls.). The changes in prediction for MeLIME ablation for the six images are respectively from \textit{Dress} to \textit{Trouser}, \textit{Sneaker} to \textit{Sandal}, \textit{Pullover} to \textit{Dress}, \textit{Sneaker} to \textit{Sandal}, \textit{Bag} to \textit{Pullover}, and \textit{Sneaker} to \textit{Sandal}. No changes in classes are seen for LINEX ablation.
}
\label{fig:error_analysis_linex_ablation}
\end{figure}

\section{Understanding Behavior of LIME and LINEX with Synthetic Data}
\label{app:synthetic_data_expt}

We consider explaining the behavior of a function of two variables $x$ and $y$ with Class 1 sandwiched between Class 0 (see Figure \ref{fig:synthetic_data_expt}). The third (or vertical) axis denotes the probability of being in Class 1. Clearly, $x$ is the only important feature here that determines the class label. 

From Figure \ref{fig:synthetic_data_expt} (left), we see that the LIME (here MeLIME would be the same as LIME since the space is flat and all points are realistic) feature attributions at points $a$, $b$, and $c$ will provide importance to $x$ feature for small as well as large kernel width (1 and 2 respectively) neighborhoods. For point $c$, in the interior of the Class 0, the attributions are stable across kernel widths. However for points $a$ and $b$ close to the boundary of classes, the attributions for small kernel width and large kernel width neighborhoods differ significantly along the $x$ direction. This shows the instability of LIME explanations near boundaries of classes for different kernel widths.

In contrast in Figure \ref{fig:synthetic_data_expt} (right), we see that the LINEX explanation constructed for the two kernel widths provides stable feature attributions for all points $a$, $b$, $c$. For $a$ and $b$, LINEX will conservatively pick a smaller feature attribution along the $x$ direction since the function changes rapidly in its neighborhood. As such though LINEX will still pick the feature in the $x$ direction in this scenario.

\begin{figure}[t]
\centering
\includegraphics[width=0.49\textwidth]{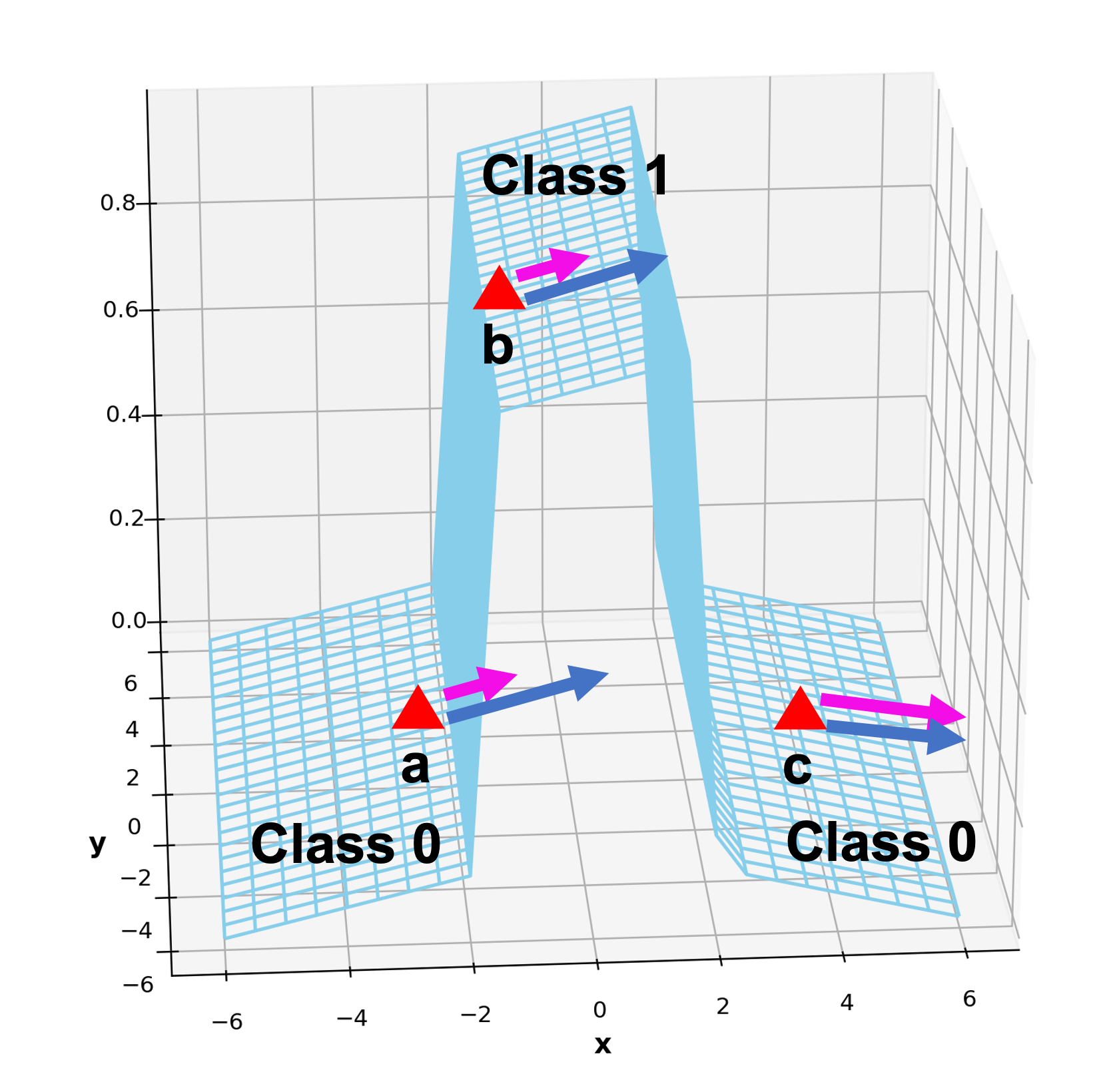}
\includegraphics[width=0.49\textwidth]{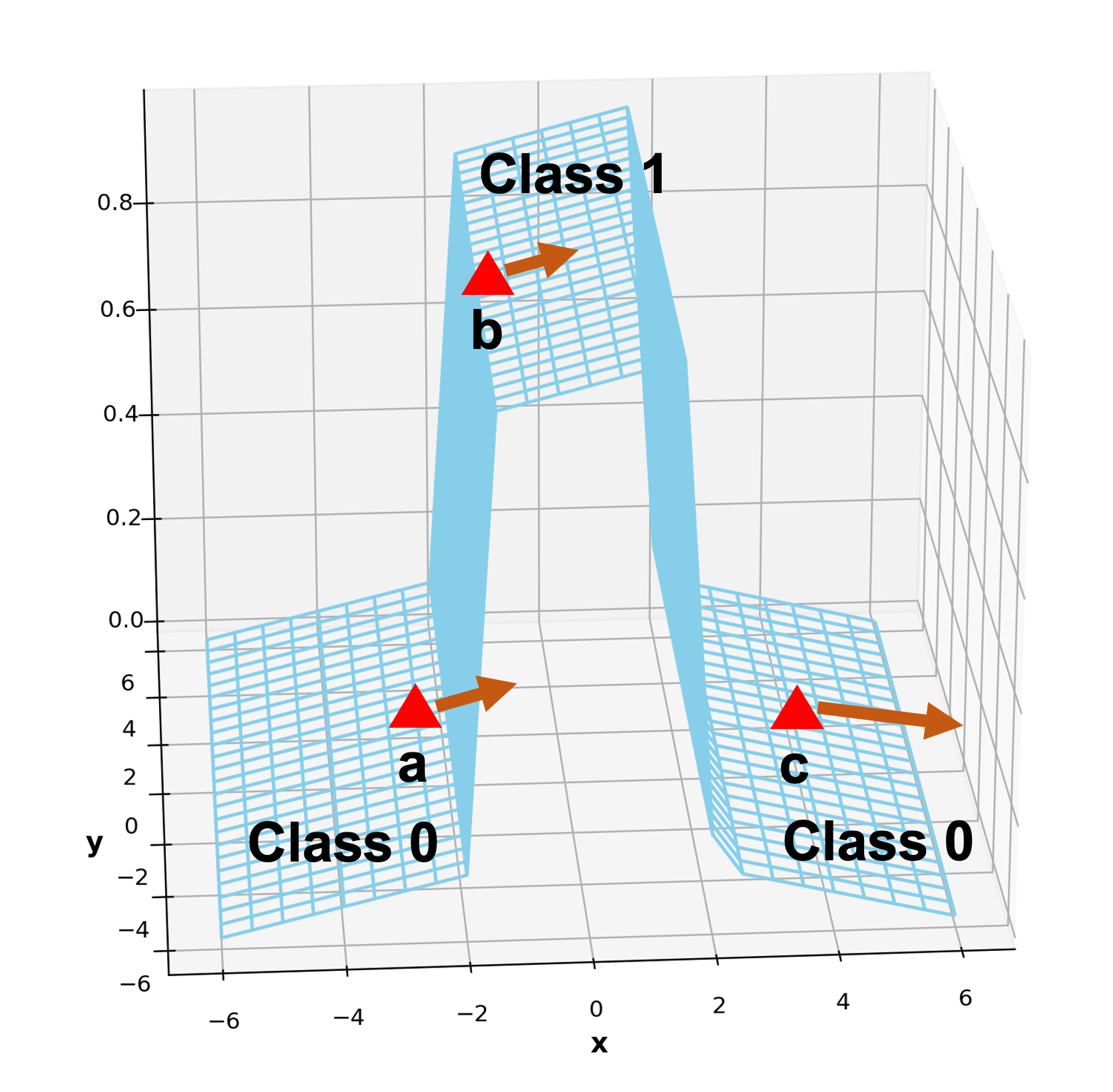}
\caption{
LIME (left) and LINEX (right) feature attributions for three points ($a$, $b$, $c$) for a synthetic data where we have Class 1 sandwiched between Class 0. For LIME, the different colors pink and blue correspond to feature attributions obtained with the small and large kernel width neighborhoods. Note how explanations for LIME change significantly (in magnitude) by kernel widths near the class boundaries, whereas the LINEX explanation remains stable, where it still picks up the important feature.
}
\label{fig:synthetic_data_expt}
\end{figure}

{\section{Variation of feature attributions with $\gamma$}}
\label{sec:feat_variation_gamma}

{Based on the proof of Theorem \ref{thm1}, if for a feature the optimal attributions have opposite sign for each of the two environments, then $\gamma$ can be made arbitrarily small (except 0) or large and the output of Algorithm \ref{algo:LINEX} should still be the same which is $0$ as the Nash Equilibrium is $\pm \gamma$. If the optimal attributions are the same sign then we should still get the same output from Algorithm \ref{algo:LINEX} as long as $\gamma \geq \min(|w_{1i}|, |w_{2i}|)$ since the attribution from our algorithm is the minimum of those values. When $\gamma < \min(|w_{1i} |, |w_{2i} |)$ then the feature attributions will smoothly reduce as $\gamma$ reduces.}

{We demonstrate this behavior in Figure \ref{fig:feat_variation_gamma} using an example from the IRIS dataset with random perturbations using the same setting as in Section \ref{sec:exp}. In the experiments in Section \ref{sec:exp}, we set $\gamma = 0.329$ which is the maximum absolute value based on a linear fit to each environment. As $\gamma$ increases beyond $0.329$, the attributions are unchanged demonstrating robustness. Same holds true while reducing $\gamma$ up to $0.165$ beyond which we see smooth reduction in the attribution values. Qualitatively, similar behavior is seen for other examples too. Because we set $\gamma$ pessimistically (ignoring constraints) to a high value, we can expect our reported performances in the paper to be robust across many values of $\gamma$.}

\begin{figure}[t]
\centering
\includegraphics[width=0.75\textwidth]{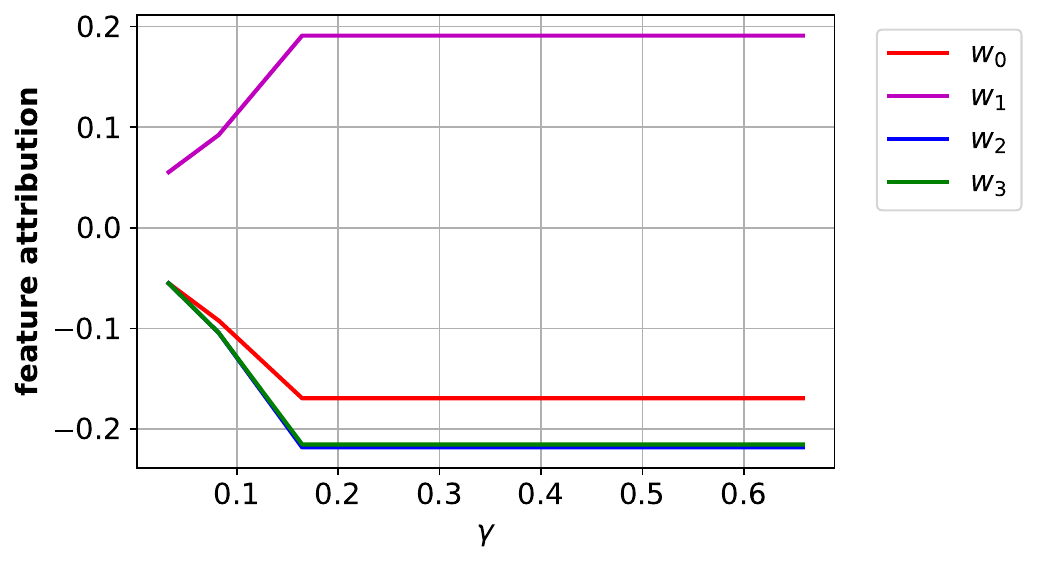}
\caption{{Feature attributions for the four features for an example in the IRIS dataset are shown above when varying $\gamma$. We used the same setting as in Section \ref{sec:exp} for this experiment. The attributions increase smoothly as $\gamma$ increases and stay constant after $\gamma \geq \min(|w_{1i}|, |w_{2i}|) \forall i$.}}
\label{fig:feat_variation_gamma}
\end{figure}

\section{{Convergence of LINEX procedure and comparisons}}
\label{sec:linex_conv}

{We demonstrate based on a synthetic example how Algorithm $\ref{algo:LINEX}$ and provides a  unidirectional explanation. We generate synthetic data using a function in $\mathbb{R}^2$ (Figure \ref{fig:linex_conv}(left)). The function gently rises with increasing $y$ values, and along $x$ it is flat first, then rises abruptly and then falls gradually. We want to obtain robust attributions of this function at the point $x=1.0, y=0.0$, which is close to the end of the rising edge along $x$ direction.}

{As we can imagine, since the slope changes abruptly along $x$ direction near the point, it should be ideally excluded from an explanation intended towards recourse based on a linear proxy. Otherwise, the explanation will not generalize in the neighborhood of this point. On the other hand, the $y$ direction should be included since the function changes smoothly along $y$ throughout.}

% {In order to achieve this, we first create two environments centered at the point with variances $1.0$ and $2.0$. Now using two different models for the environnments, the feature attributions are $\{0.020, 0.101\}$ and $\{0.0, 0.103\}$. Appending the two environments the attributions are $\{-0.026, 0.106\}$, whereas with LINEX, the attributions would be $\{0.0, 0.102\}$.}

{To generate explanations We first create two environments centered at the example to explain with variances $0.5$ and $2.0$. Now independently fitting to these environments leads to feature attributions that are $\{-0.033, 0.098\}$ and $\{0.084, 0.102\}$. Appending the two environments the attributions are $\{0.029, 0.095\}$, whereas with LINEX, the attributions would be $\{0.0, 0.093\}$. Thus, LINEX effectively eliminates the feature with high variability or abrupt changes. The behavior of the coefficients for each environment as LINEX converges is shown in Figure \ref{fig:linex_conv}(right). As such, one can also see the convergence is fast.}

% In order to achieve this, we first create two environments centered at the point with variances $0.5$ and $2.0$ respectively centered at this point. Now using two different models for the environnments, the feature attributions are $\{-0.033, 0.098\}$ and $\{0.084, 0.102\}$. Appending the two environments, and creating a linear model's attributions are $\{0.029, 0.095\}$, whereas with our proposed procedure, the robust attributions would be $\{0.0, 0.093\}$.

\begin{figure}[t]
\centering
\includegraphics[width=0.33\textwidth]{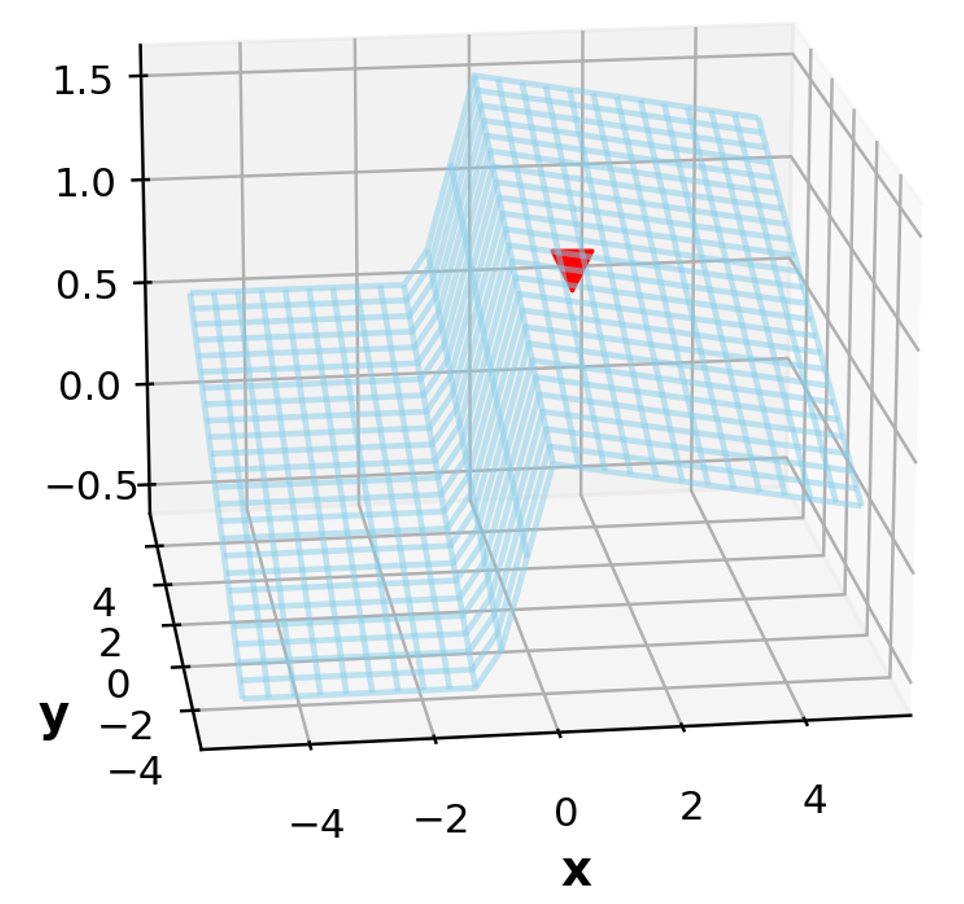}
\includegraphics[width=0.63\textwidth]{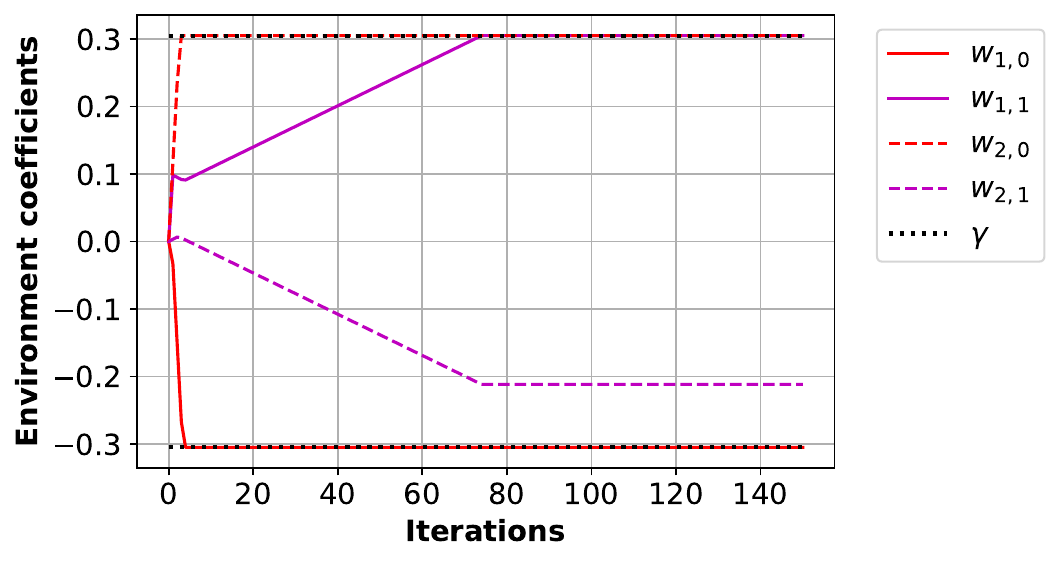}
\caption{
{Left side: Explaining a scalar function in $\mathbb{R}^2$ at the point indicated by the triangle. The point is centered at $x=1.0, y=0.0$. The two environments are created by sampling multivariate normals with variances $0.5$ and $2.0$ respectively (samples not shown) centered at this point. Right side: Convergence of individual environment attributions. The attributions for first feature ($x$), $w_{1,0}$ and $w_{2,0}$, converge to $\gamma$ and $-\gamma$ leading to the optimal attribution of $0$. For the second feature ($y$) the optimal attribution ($w_{1,1} + w_{2,1}$) converges to a positive value.}
}
\label{fig:linex_conv}
\end{figure}

\section{Limitations}

Like any other posthoc explainable AI method there is no way to surely say that LINEX exactly reflects the true reasoning behind a black box classifier in arbitrary applications. It also is somewhat slower than LIME as shown in section A given the game theoretic nature of the algorithm, where its stability and unidirectionality hopefully offsets the additional time required. On the flip side, given its favorable properties in terms of recovering explanations it could be used to violate privacy which may be concerning from a social standpoint.

\newpage

\begin{figure}[!htb]
\centering
    \begin{subfigure}[b]{0.32\textwidth}
        \vspace{0pt}
        \centering
        \captionsetup{justification=centering}
        \includegraphics[width=\textwidth]{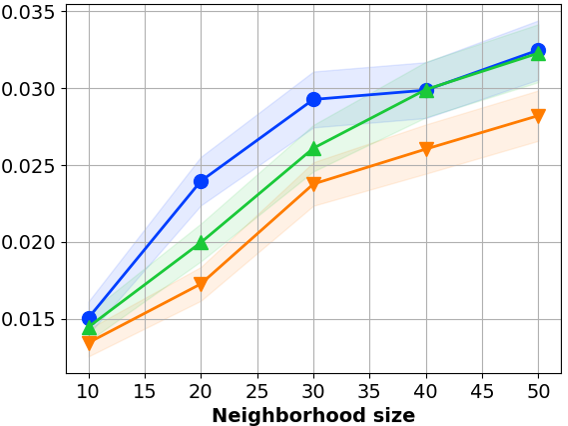}
        \caption{INFD - S-LIME - mean smoothing}
        \label{fig:IRIS_INFD_cnt_lime_smooth_mean}
    \end{subfigure}
    \begin{subfigure}[b]{0.32\textwidth}
        \vspace{0pt}
        \centering
        \captionsetup{justification=centering}
        \includegraphics[width=\textwidth]{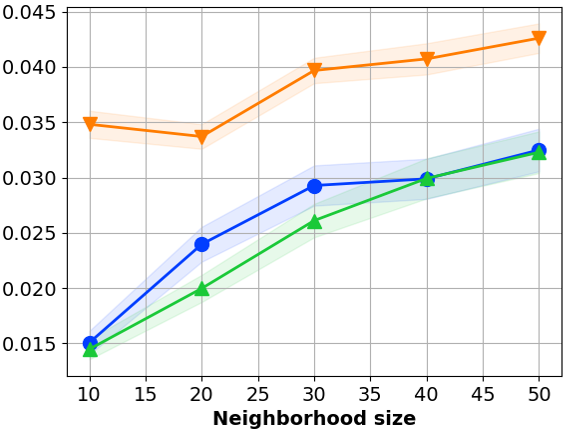}
        \caption{INFD - S-LIME - median smoothing}
        \label{fig:IRIS_INFD_cnt_lime_smooth_median}
    \end{subfigure}
    \begin{subfigure}[b]{0.32\textwidth}
        \vspace{0pt}
        \centering
        \captionsetup{justification=centering}
        \includegraphics[width=\textwidth]{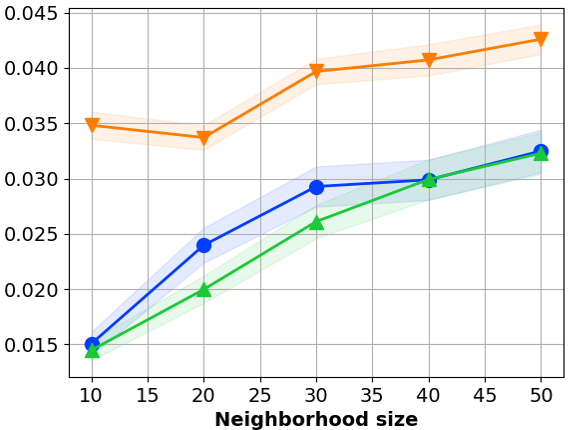}
        \caption{INFD - S-LIME - MoM smoothing}
        \label{fig:IRIS_INFD_cnt_lime_smooth_mom}
    \end{subfigure}

    \begin{subfigure}[b]{0.32\textwidth}
        \vspace{0pt}
        \centering
        \captionsetup{justification=centering}
        \includegraphics[width=\textwidth]{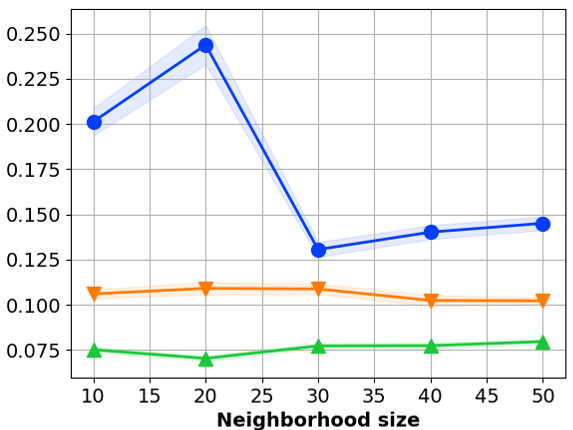}
        \caption{GI - S-LIME - mean smoothing}
        \label{fig:IRIS_GI_cnt_lime_smooth_mean}
    \end{subfigure}
    \begin{subfigure}[b]{0.32\textwidth}
        \vspace{0pt}
        \centering
        \captionsetup{justification=centering}
        \includegraphics[width=\textwidth]{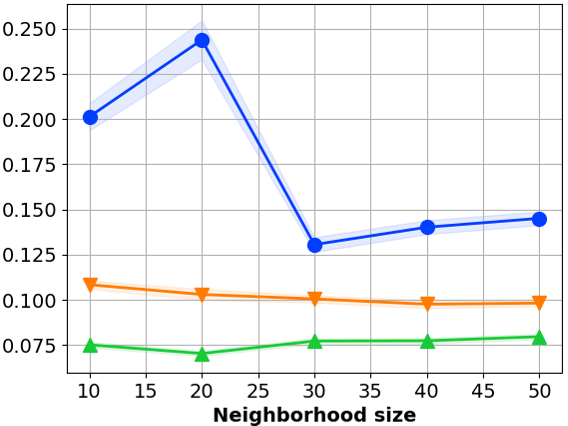}
        \caption{GI - S-LIME - median smoothing}
        \label{fig:IRIS_GI_cnt_lime_smooth_median}
    \end{subfigure}
    \begin{subfigure}[b]{0.32\textwidth}
        \vspace{0pt}
        \centering
        \captionsetup{justification=centering}
        \includegraphics[width=\textwidth]{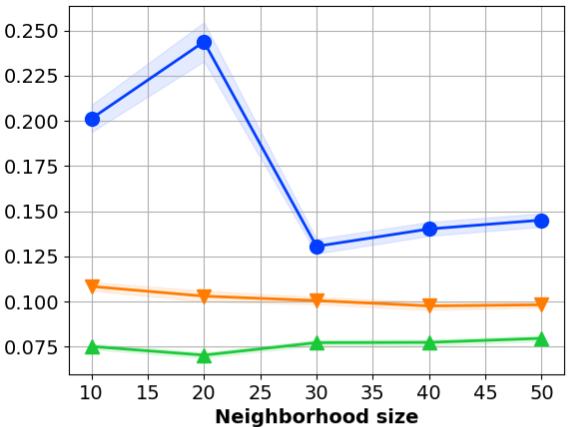}
        \caption{GI - S-LIME - MoM smoothing}
        \label{fig:IRIS_GI_cnt_lime_smooth_mom}
    \end{subfigure}

    \begin{subfigure}[b]{0.32\textwidth}
        \vspace{0pt}
        \centering
        \captionsetup{justification=centering}
        \includegraphics[width=\textwidth]{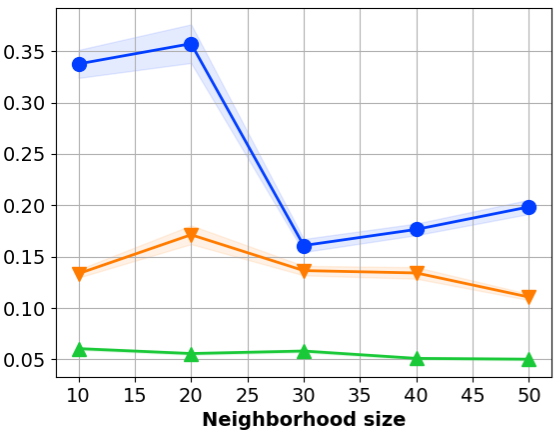}
        \caption{CI - S-LIME - mean smoothing}
        \label{fig:IRIS_CI_cnt_lime_smooth_mean}
    \end{subfigure}
    \begin{subfigure}[b]{0.32\textwidth}
        \vspace{0pt}
        \centering
        \captionsetup{justification=centering}
        \includegraphics[width=\textwidth]{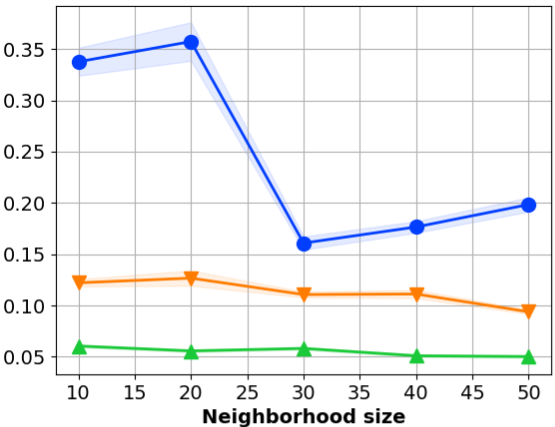}
        \caption{CI - S-LIME - median smoothing}
        \label{fig:IRIS_CI_cnt_lime_smooth_median}
    \end{subfigure}
    \begin{subfigure}[b]{0.32\textwidth}
        \vspace{0pt}
        \centering
        \captionsetup{justification=centering}
        \includegraphics[width=\textwidth]{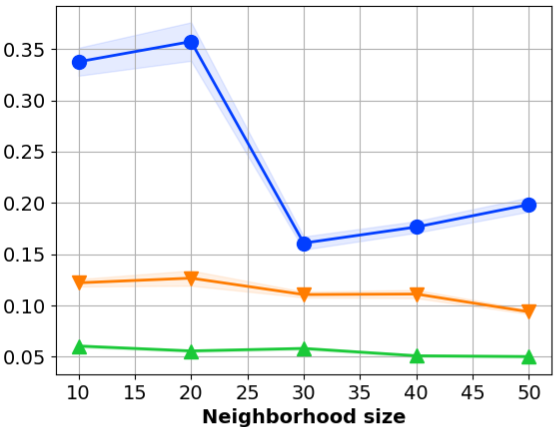}
        \caption{CI - S-LIME - MoM smoothing}
        \label{fig:IRIS_CI_cnt_lime_smooth_mom}
    \end{subfigure}

    \begin{subfigure}[b]{0.32\textwidth}
        \vspace{0pt}
        \centering
        \captionsetup{justification=centering}
        \includegraphics[width=\textwidth]{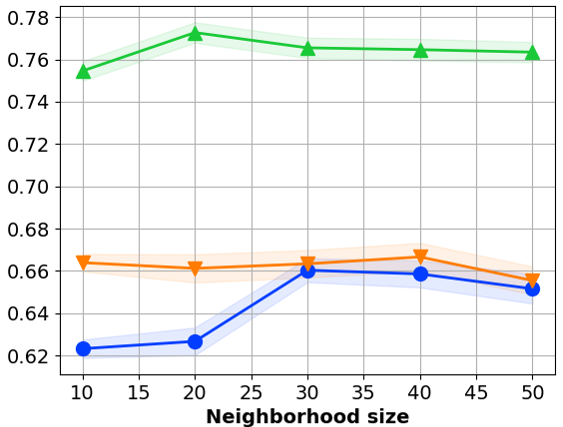}
        \caption{$\Upsilon$ - S-LIME - mean smoothing}
        \label{fig:IRIS_Upsilon_cnt_lime_smooth_mean}
    \end{subfigure}
    \begin{subfigure}[b]{0.32\textwidth}
        \vspace{0pt}
        \centering
        \captionsetup{justification=centering}
        \includegraphics[width=\textwidth]{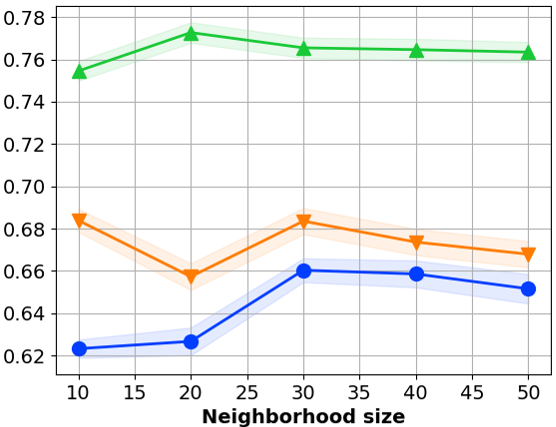}
        \caption{$\Upsilon$ - S-LIME - median smoothing}
        \label{fig:IRIS_Upsilon_cnt_lime_smooth_median}
    \end{subfigure}
    \begin{subfigure}[b]{0.32\textwidth}
        \vspace{0pt}
        \centering
        \captionsetup{justification=centering}
        \includegraphics[width=\textwidth]{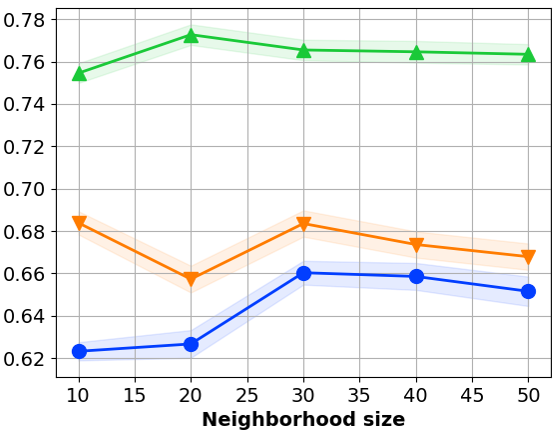}
        \caption{$\Upsilon$ - S-LIME - MoM smoothing}
        \label{fig:IRIS_Upsilon_cnt_lime_smooth_mom}
    \end{subfigure}

    \begin{subfigure}[b]{0.32\textwidth}
        \vspace{0pt}
        \centering
        \captionsetup{justification=centering}
        \includegraphics[width=\textwidth]{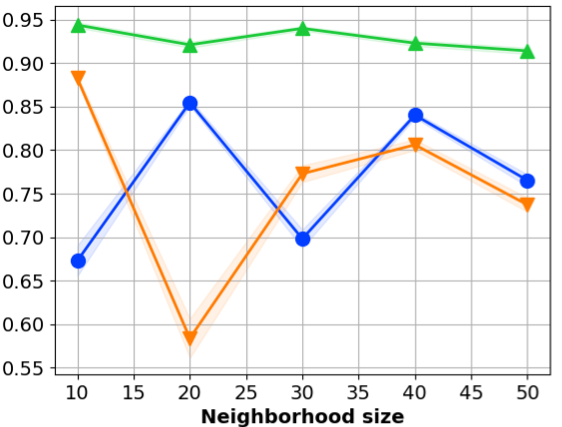}
        \caption{CAC - S-LIME - mean smoothing}
        \label{fig:IRIS_CAC_cnt_lime_smooth_mean}
    \end{subfigure}
    \begin{subfigure}[b]{0.32\textwidth}
        \vspace{0pt}
        \centering
        \captionsetup{justification=centering}
        \includegraphics[width=\textwidth]{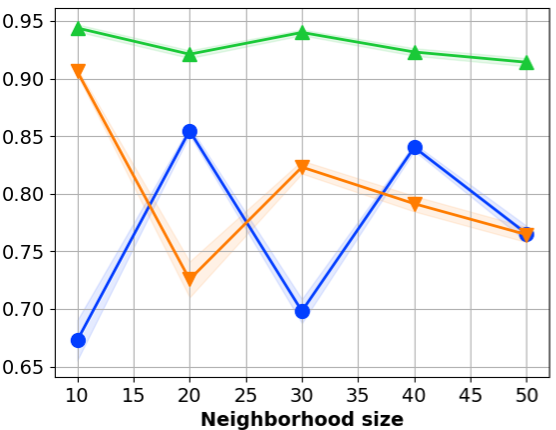}
        \caption{CAC - S-LIME - median smoothing}
        \label{fig:IRIS_CAC_cnt_lime_smooth_median}
    \end{subfigure}
    \begin{subfigure}[b]{0.32\textwidth}
        \vspace{0pt}
        \centering
        \captionsetup{justification=centering}
        \includegraphics[width=\textwidth]{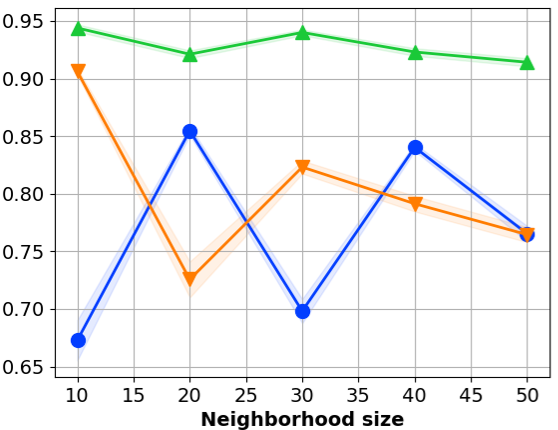}
        \caption{CAC - S-LIME - MoM smoothing}
        \label{fig:IRIS_CAC_cnt_lime_smooth_mom}
    \end{subfigure}

\caption{Effect of various smoothing schemes on S-LIME's performance based on 5 environments (since median-of-means (MoM) is just median for 2) with the same setup described in Suppl. D. for the IRIS dataset. As can be seen median and MoM perform worse than the mean on INFD and similar to it on other metrics. Thus, this does not change the takeaways from the main paper. See Figure 4 in the supplement for legend.}
\vspace{-0.5cm} 
\label{fig:IRIS_mean_median_mom_smoothing}
\end{figure}

\end{document}